\theoremstyle{plain}
\newtheorem{theorem}{Theorem}
\newtheorem{lemma}[theorem]{Lemma}
\theoremstyle{definition}
\newtheorem{definition}{Definition}
\theoremstyle{remark}
\title{On the Reuse Bias in Off-Policy Reinforcement Learning}
\author{
   Chengyang Ying$^1$
   \and
   Zhongkai Hao$^{1}$ 
   \and
   Xinning Zhou$^1$
   \and
   Hang Su$^{1,2}$
   \and
   Dong Yan$^1$
   \and
   Jun Zhu\thanks{Corresponding author.}$^{1,2}$
\affiliations
   $^{1}$Department of Computer Science \& Technology, Institute for AI, BNRist Center,\\
   Tsinghua-Bosch Joint ML Center, THBI Lab, Tsinghua University\\
  % $^{2}$Qiyuan Laboratory; 
   $^{2}$Pazhou Laboratory (Huangpu), Guangzhou, China\\
\emails
ycy21@mails.tsinghua.edu.cn,~~ dcszj@tsinghua.edu.cn
}
\begin{document}

\maketitle

\begin{abstract}
Importance sampling (IS) is a popular technique in off-policy evaluation, which re-weights the return of trajectories in the replay buffer to boost sample efficiency.
%\hangx{1. the current limitation of IS 2. previous works focus on variance to address the issue while we are the first to found it also related to bias 3. we investigate it and found it is because of the re-used bias which could not be eliminated by increasing the data 4. we propose a novel method to control the bias }
%Off-policy reinforcement learning (RL) methods reuse the trajectories in the replay buffer for improving sample efficiency. For evaluating the expected return of the current policy, off-policy evaluation always uses re-weights return of trajectories in the replay buffer by importance sampling technique. %\zhongkai{I think this sentence should be reorganize to make it easier to understand. split it into two.}
However, training with IS can be unstable and previous attempts to address this issue mainly focus on analyzing the variance of IS. In this paper, we reveal that the instability is also related to a new notion of Reuse Bias of IS --- the bias in off-policy evaluation caused by the reuse of the replay buffer for evaluation and optimization. %\junz{define Reuse Bias here} %is because off-policy algorithms optimize and evaluate the policy with the same data in the replay buffer.
%Though importance sampling is unbiased when the estimated distribution is independent of the samples, it may cause a bias when using importance sampling in practical off-policy methods because it optimizes and evaluates the policy with the same data in the replay buffer. %\dong{not clear}.
%To analyze it, we first propose the concept of the Reuse Bias to evaluate this bias in off-policy evaluation.
%for policies trained by some off-policy algorithm.
We theoretically show that the off-policy evaluation and optimization of the current policy with the data from the replay buffer result in an overestimation of the objective, which may cause an erroneous gradient update and degenerate the performance. %\hangx{what are the consequence due to the overestimation}
We further provide a high-probability upper bound of the Reuse Bias and show that controlling one term of the upper bound can control the Reuse Bias by introducing the concept of stability for off-policy algorithms. %\zxn{Maybe give it a name with a proper abbreviation?}
Based on these analyses, we present a novel yet simple Bias-Regularized Importance Sampling (BIRIS) framework along with practical algorithms, which can alleviate the negative impact of the Reuse Bias, and show that our BIRIS can significantly reduce the Reuse Bias empirically. 
Moreover, extensive experimental results show that our BIRIS-based methods can significantly improve the sample efficiency on a series of continuous control tasks in MuJoCo.
\end{abstract}

\section{Introduction}
%Deep reinforcement learning (DRL) algorithms have made great progress in complicate sequential decision problems like Atari games~\cite{atari_nips,atari_nature,atari_icml}, board games~\cite{go_nature,silver2018general} as well as robot manipulation~\cite{kendall2019learning}. 
% Off-policy reinforcement learning algorithms have gained great success in Atari games~\cite{atari_nips,atari_nature,atari_icml} as well as robot manipulation~\cite{ddpg,sac,td3} %\hangx{citation} 
% which can improve the sampling efficiency for deep reinforcement learning (DRL) by reusing the  historical trajectories in the replay buffer. 
Off-policy reinforcement learning algorithms~\cite{atari_nature,ddpg,sac,td3} typically have high sample efficiency and are suitable for many real-world applications.
%are known for improving the sample efficiency of deep reinforcement learning (DRL), which makes it more possible to be deployed in real-world applications.
%an important factor in restricting the application of DRL to larger scenes.
The key idea of off-policy algorithms is to reuse historical trajectories in the replay buffer. Though promising, it also causes an issue that these trajectories are not sampled from the current policy. When estimating the expected cumulative return of the current policy with trajectories generated by historical policies, off-policy evaluation, as the core of off-policy algorithms, yields the~\emph{distribution shift} in consequence~\cite{hanna2019importance}.  
To alleviate this issue, the standard method in off-policy evaluation is importance sampling~\cite{kahn1953methods,precup2000eligibility}, which is
a widely used Monte Carlo technique to evaluate the expected return of the target policy when the training data is generated by a different behavior policy. In practice, it usually requires computing the degree of deviation between the target policy and the behavior policy. 

\begin{figure*}[htbp]
\subfigure[Reuse Bias due to resample $\tau_3$ and our BIRIS]{
\begin{minipage}[t]{0.44\linewidth}
\centering
\includegraphics[height=4.2cm,width=6.4cm]{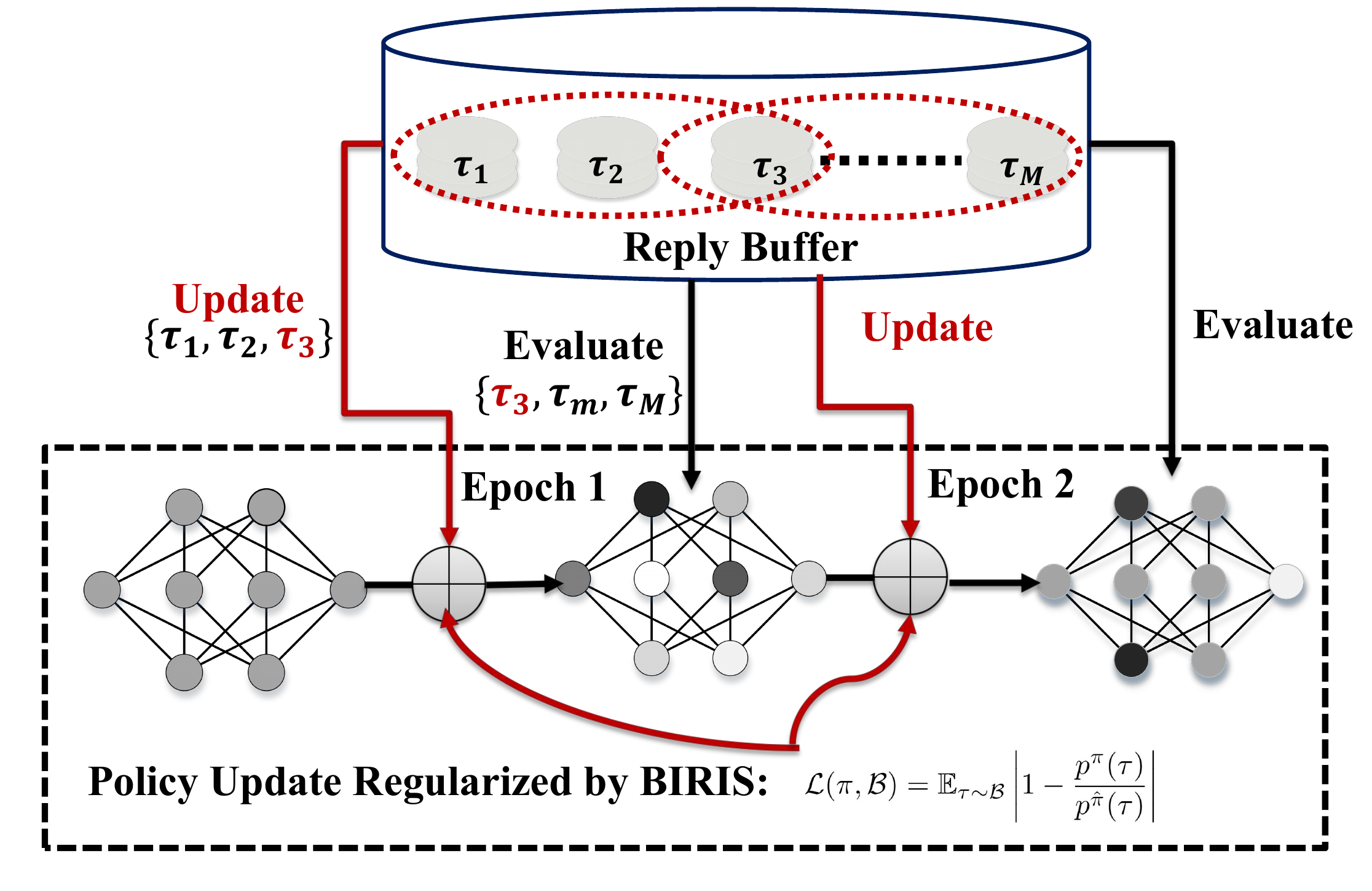}
\label{highlight_fig}
\end{minipage}}
\subfigure[Reuse Bias for PG and PG+BIRIS in MiniGrid]{
\begin{minipage}[t]{0.55\linewidth}
\centering
\includegraphics[height=4.2cm,width=8.05cm]{./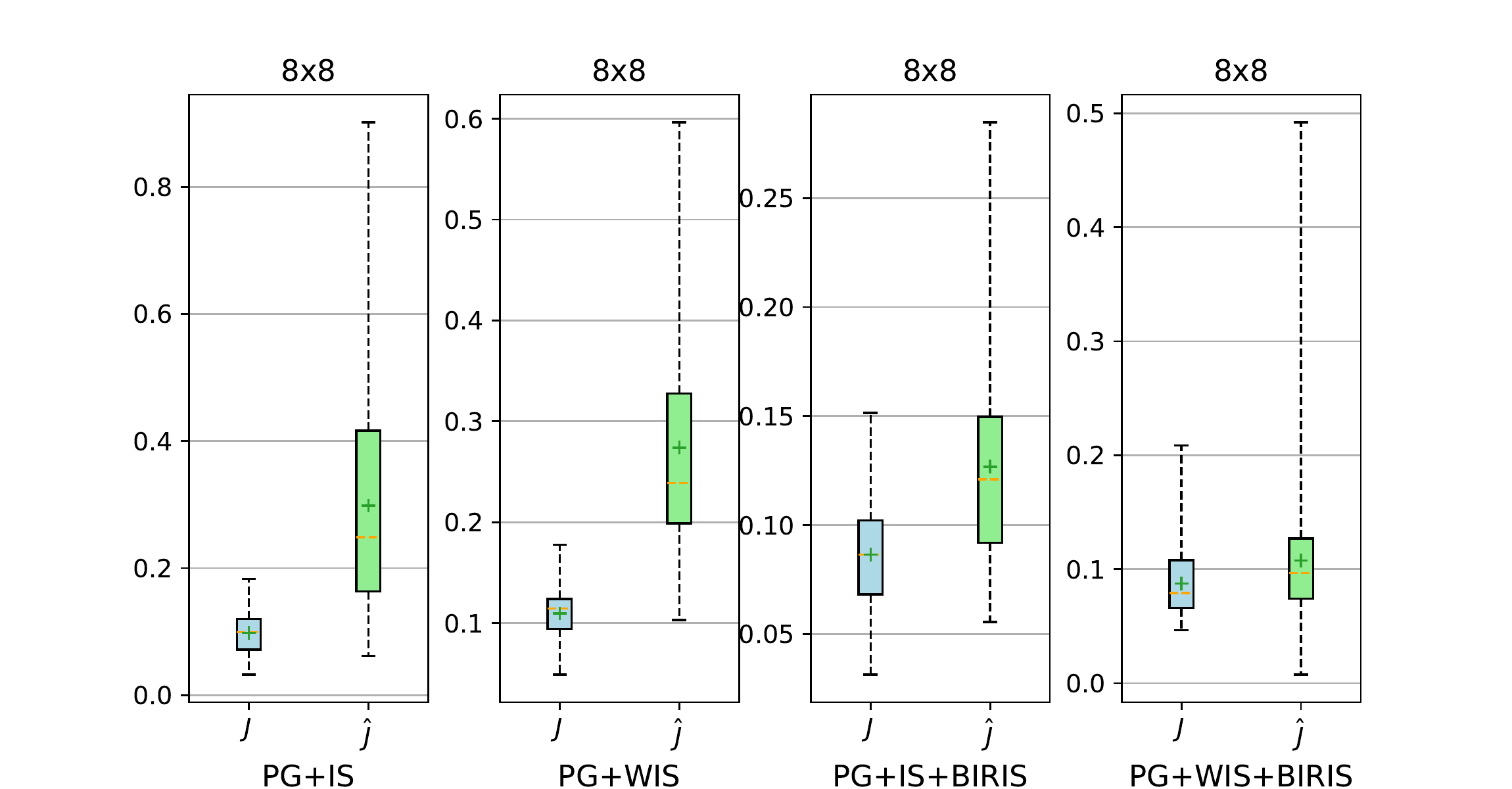}
\label{1b_fig}
\end{minipage}}
% \vspace{-1.2em}
\caption{(a) A high-level illustration of the Reuse Bias and our BIRIS. The Reuse Bias is caused by the fact that off-policy methods optimize and evaluate the policy with the same data in the replay buffer. 
(b) Experimental results of PG+IS, PG+WIS, PG+BIRIS+IS, and PG+BIRIS+WIS in MiniGrid 8$\times$8 with replay buffer size 30. In each subfigure, $J$ and $\hat{J}$ represent the expected return and the estimated return via the replay buffer of the target policy respectively. We repeat the experiment 50 times and plot the box diagram, where the orange dashed line represents the mean. This figure shows that Reuse Bias is severe to cause an erroneous policy evaluation and our BIRIS can significantly reduce the Reuse Bias. (More details are in Sec.~\ref{expe_reuse} and Appendix~\ref{appendix-minigrid})
% Appendix B.1
% (b) Part of experimental results of PG and PG+BIRIS in MiniGrid with size 8$\times$8 and 16$\times$16. For each environment and each algorithm, we initialize 50 different policies and use the policy to sample 40 trajectories as our replay buffer. Then we use the replay buffer to optimize our policy and get $\pi_{\mathcal{B}}$. $J$ and $\hat{J}$ represent the expected return and the estimated return via the replay buffer of the policy $\pi_{\mathcal{B}}$ respectively (More details are in the Experiments part).
}
\label{minigrid_fig}
% \vspace{-1.5em}
\end{figure*}
Previous off-policy methods, which are usually combined with importance sampling, have suffered from the problem of unstable training~\cite{li2015toward}. %\junz{the subjective of first half is "importance sampling", here switch to "off-policy methods"?}. 
To address this problem, previous work mainly focuses on the high variance of importance sampling~\cite{li2015toward,liu2018breaking,xie2019towards,fujimoto2021deep}. However, we find that reusing trajectories in the replay buffer to optimize and evaluate may introduce a bias to off-policy evaluation (See Fig.~\ref{highlight_fig}), which  may also lead to unstable training and can not be ignored. 

In this work, we present a new analysis beyond variance by introducing the concept of \emph{Reuse Bias}, which formally characterizes the bias in off-policy evaluation caused by the reuse of the replay buffer for evaluation and optimization (See Definition~\ref{df_reuse_bias}). 
%Moreover, we assume that the current policy $\pi$ is the optimal policy of the hypothesis set with \textit{the highest estimated return} w.r.t. the replay buffer $\mathcal{B}$, which is next evaluated by $\mathcal{B}$ via importance sampling to calculate the off-policy evaluation of $\pi$.\junz{We may not need to put the details of this assumption in Intro. Instead, we can say "Under mild assumptions (See Sec. x), we show that xxx".} %\zhongkai{describe why this assumption make sense, because in real app we optimize on the replay buffer}
%This assumption is reasonable because, in practice, we always maximize the return of our policy w.r.t. the replay buffer.
Under some mild assumptions (See Theorem~\ref{reused_theorem}, \ref{thm_one_pg}), we theoretically show that off-policy evaluation is an overestimation of the true expected cumulative return. 
To explain how severe the Reuse Bias is, we first prove that the expectation of off-policy evaluation for the optimal policy over the replay buffer may be arbitrarily large when its true return is arbitrarily small (See Theorem~\ref{optimal_theorem}).
Moreover, we experiment on MiniGrid to show that Reuse Bias will cause an erroneous policy evaluation and even gets more significant as the environments become more complex (See Fig.~\ref{1b_fig}, and more details are in Sec.~\ref{expe_reuse}). 
%In other words, the reuse of the replay buffer for optimizing during the evaluation of the current policy will introduce a bias.
% Furthermore, we prove that the expectation of off-policy evaluation for the optimal policy over the replay buffer may be arbitrarily large when its true return is arbitrarily small (See Theorem~\ref{optimal_theorem}).
Consequently, the Reuse Bias can be significantly severe and the inaccurate off-policy evaluation may mislead the direction of the gradient. This becomes particularly problematic for further policy improvement because sub-optimal actions might be reinforced in the next policy update, and further induces unstable training. 
%In this case, the potential risk of the Reuse Bias cannot be ignored. %\hangx{more explanation on why misleading?}

To reveal how to optimize the policy for controlling the Reuse Bias, we present a high-probability upper bound of Reuse Bias for any possible policies by extending PAC Bayes techniques~\cite{McAllester98,McAllester03} (See Theorem~\ref{main_theorem}). %\hangx{is there any theoretical analysis previous? what are the challenges for the analysis?}
Compared with classical bounds in statistical learning theory, our result holds for any policy hypothesis and is related to the optimized policy for guiding further optimization to control the Reuse Bias.
% This upper bound indicates that the Reuse Bias, unlike variance, cannot be eliminated even with sufficient data, which is further explained with a concrete example. 
Moreover, we present that controlling one term, which only depends on the replay buffer and the policy, of our upper bound can effectively control the Reuse Bias by introducing the concept of stability for off-policy algorithms (See Definition~\ref{df_sta}, Theorem~\ref{thm-stability} and~\ref{thm_calculate_beta}).
Based on these analyses, we propose a novel Bias-Regularized Importance Sampling (BIRIS) framework which can improve the sample efficiency of off-policy algorithms by controlling Reuse Bias. We then experiment on MiniGrid to show that our BIRIS can conspicuously reduce the Reuse Bias (See Fig.~\ref{1b_fig}, and more details are in Sec.~\ref{expe_reuse}). Since our method is orthogonal to existing off-policy algorithms and variance-reduction methods, like weighted importance sampling (WIS)~\cite{mahmood2014weighted}, BIRIS can be easily combined with them. Extensive experimental results show that our BIRIS can significantly reduce the Reuse Bias in MiniGrid compared with PG+IS and PG+WIS. Since practical off-policy algorithms for handling complicated continuous control tasks always consider using actor-critic methods rather than directly using trajectories, we also extend our Reuse Bias to AC-based methods and show that these methods also suffer from the negative impact of the Reuse Bias. Moreover, we extend our BIRIS to this more practical setting and provide two practical algorithms (SAC+BIRIS and TD3+BIRIS). Extensive experiments show that our BIRIS-based methods significantly improve the performance and sample efficiency compared with baselines on a series of continuous control tasks in MuJoCo. In summary, our contributions are:
\begin{itemize}
    \item To the best of our knowledge, this is the first attempt to discuss the bias of off-policy evaluation due to reusing the replay buffer. We show that the off-policy evaluation via importance sampling is an overestimation when optimized by the same replay buffer, which is recognized as the Reuse Bias in this paper.
    %Bias and experimentally show that it is more obvious in larger and more complicated environments.
    \item Moreover, we derive a high-probability bound of the Reuse Bias, which indicates that it cannot be eliminated only by increasing the number of trajectory samples. Then we introduce the concept of stability and provide an upper bound for the Reuse Bias via stability, which indicates that controlling one term of the high-probability bound can effectively control the Reuse Bias. 
    % \hangx{what are the key insights?}
    \item Based on these analyses, we propose BIRIS and show that our BIRIS can conspicuously reduce the Reuse Bias through experiments in MiniGrid. Moreover, we experimentally show that our method can significantly improve the performance and sample efficiency for different off-policy methods in different MuJoCo tasks. %\junz{no need to repeat "theoretically" when obvious.}%\hangx{any impressive results?}
\end{itemize}

\section{Related Work}

\paragraph{Off-policy Evaluation.} Designing safe, stable, and effective methods are significant for reinforcement learning (RL)~\cite{garcia2015comprehensive,ying2022towards}, especially for off-policy methods, which hopes to reuse historical trajectories for improving the sample efficiency~\cite{precup2000eligibility,precup2001off}.
It is well known that, when the current policy is independent of the samples, importance sampling is unbiased but has a high variance that is exponentially related to trajectory length~\cite{li2015toward}, which may cause the result unstable. Consequently, a lot of research focuses on reducing the variance in off-policy evaluation, like weighted importance sampling~\cite{mahmood2014weighted}, doubly robust estimator methods~\cite{jiang2016doubly,thomas2016data,su2020doubly}, marginalized importance sampling methods~\cite{liu2018breaking,xie2019towards,fujimoto2021deep}, and so on~\cite{metelli2018policy,hanna2019importance,kumar2020discor,liu2021regret,zhang2021convergence,metelli2021subgaussian}. Some researchers also pay attention to providing a high confidence estimation for off-policy evaluation.~\cite{thomas2015high} first applied concentration inequality to provide a lower bound of off-policy evaluation. Follow-up work considered how to provide confidence intervals~\cite{hanna2017bootstrapping,shi2021deeply} as well as a high-probability lower bound for self-normalized importance weighting~\cite{kuzborskij2021confident}. Some recent work also provided a high-confidence estimation of the variance~\cite{chandak2021high} and other parameters of the return distribution~\cite{metelli2018policy,metelli2021subgaussian,chandak2021universal}. However, little work has been done on systematically examining the bias of off-policy evaluation.

\paragraph{Bias in Off-policy and Offline RL.}
Since bias is significant to analyze and develop algorithms in RL, there are some widely discussed biases in off-policy and offline settings like Q-function overestimation in DQN~\cite{van2016deep}, extrapolation error in offline RL~\cite{fujimoto2019off}, and sampling error in TD learning~\cite{pavse2020reducing}. These bias is mainly caused by out-of-distribution data~\cite{fujimoto2019off} and about Q function estimation in actor-critic methods~\cite{van2016deep,fujimoto2019off}. Besides them, in this work, we first discuss the bias resulting from reusing the trajectories in the replay buffer, which can regard as the bias in the policy optimization in actor-critic methods (more details are in Sec.~\ref{sec_ac}). 

% there are many other reasons, like reusing the replay buffer, that may cause the bias, which are rarely studied but are also significant. In this work, we take a first step to discuss the bias resulting from reusing the trajectories in the replay buffer. Also, previous works~\cite{van2016deep,fujimoto2019off} consider the bias in Q function approximation and we hope to consider the setting in that we directly estimate the policy by trajectories without an estimated Q function. The unavoidable estimation bias may mislead the evaluation, which can be further exaggerated in further policy improvement.

\section{Preliminary}
\label{sec_reuse_bias}
%In this section, we briefly illustrate off-policy evaluation via importance sampling.
We consider the Markov Decision Process (MDP) of $\mathcal{M} = (\mathcal{S}, \mathcal{A}, \mathcal{P}, \mathcal{R}, \gamma)$ in which $\mathcal{S}$ and $\mathcal{A}$ represent the state and action spaces, respectively. For any state-action pair $(s,a)\in\mathcal{S}\times \mathcal{A}$, $\mathcal{P}(\cdot|s,a)$ is a distribution over $\mathcal{S}$, representing its transition dynamic. Moreover, $\mathcal{R}(s, a)$ is the reward function and  $\gamma\in(0,1)$ is the discount factor. The policy of any agent is modeled as a mapping $\pi:\mathcal{S}\rightarrow \mathcal{D}(\mathcal{A})$, i.e., $\pi(\cdot|s)$ is a distribution over $\mathcal{A}$ given state $s\in\mathcal{S}$. 

In RL, the agent interacts with the environment by making its decision at every timestep. At the beginning, we assume that the agent is in the state $s_0\sim\mu(\cdot)$, where $\mu(\cdot)$ is the initial state distribution. At timestep $t$, the agent chooses its action $a_t\sim\pi(\cdot|s_t)$, arrives at the next state $s_{t+1}\sim\mathcal{P}(\cdot|s_t, a_t)$, and receive a corresponding reward $r_t = \mathcal{R}(s_t,a_t)$. The performance of policy $\pi$ is defined as the expectation of the discounted return of the trajectory:
\begin{equation}
\begin{split}
    &J(\pi) = \mathbb{E}_{\tau\sim\pi}\left[R(\tau)\triangleq\sum_{t=0}^{\infty}\gamma^t r_t\right],
\end{split}
\end{equation}
where $R(\tau)$ represents the discounted return of the trajectory $\tau\triangleq (s_0, a_0, r_0, s_1, a_1, r_1, ...)\in\Gamma$.
% It is well known that we can calculate the performance of policy $\pi$ by its discount future state distribution $d^{\pi}$ as below
% \begin{equation}
%     J(\pi) = \frac{1}{1-\gamma} \mathbb{E}_{s\sim d^{\pi}} \mathbb{E}_{a\sim \pi(\cdot|s)} [\mathcal{R}(s, a)].
% \end{equation}

% Reinforcement learning algorithms aim to find the optimal policy $\pi^*$ to maximize $J$, i.e. $\pi^* = \arg\max_{\pi}J(\pi)$. However, in practice, it is intractable to directly calculate $J(\pi)$ since the distribution $\tau\sim\pi$ is always unknown. Previous reinforcement learning algorithms always choose to estimate $J(\pi)$ by Monte Carlo sampling, including on-policy methods as well as off-policy methods. On-policy algorithms, like Policy Gradient (PG)~\cite{PG}, Trust Region Policy Optimization  (TRPO)~\cite{trpo} and Proximal Policy Optimization (PPO)~\cite{ppo}, sample trajectories with policy $\pi$ to estimate $J(\pi)$. However, these on-policy methods utilize sampled trajectories once and are always sample inefficient. Therefore, off-policy methods, like Deep Q Network (DQN)~\cite{atari_nature}, Deep Deterministic Policy Gradient (DDPG)~\cite{ddpg}, Twin Delayed DDPG (TD3)~\cite{td3} and Soft Actor Critic (SAC)~\cite{sac}, consider to store historical trajectories in the replay buffer and reuse these data for improving the sample efficiency. Since historical trajectories are sampled from different policies, importance sampling is the standard technique for off-policy evaluation.

As the trajectory distribution $\tau\sim\pi$ is unknown, in practice, we estimate $J(\pi)$ by Monte Carlo sampling. In order to improve sample efficiency, off-policy algorithms utilize trajectories sampled from other policies via importance sampling. %To simplify\junz{simplify what? notations? tasks?}, 
We denote that there are $m$ trajectories in the replay buffer $\mathcal{B}$ and the trajectory $\tau_i$ is sampled from policy $\hat{\pi}_i$, i.e. $\mathcal{B} = \{\tau_i\}_{i=1}^m\in\Gamma^m, \tau_i\sim\hat{\pi}_i$.

For notational clarity, we denote $\hat{\Pi} = (\hat{\pi}_1, \hat{\pi}_2, ..., \hat{\pi}_m)$ and $\mathcal{B}\sim\hat{\Pi} = \tau_1\sim\hat{\pi}_1, \tau_2\sim\hat{\pi}_2,...,\tau_m\sim\hat{\pi}_m$. In off-policy evaluation, to estimate $J(\pi)$ of a different policy $\pi$, we need to modify the weight of the return for each trajectory in $\mathcal{B}$ via importance sampling~\cite{kahn1953methods,precup2000eligibility}. The resulting importance sampling estimate of the return for $\pi$ is:
\begin{equation}\label{eq:IS-est}
\begin{split}
    &\hat{J}_{\hat{\Pi}, \mathcal{B}}(\pi) = \frac{1}{m}\sum_{i=1}^m \frac{p^{\pi}(\tau_i)}{p^{\hat{\pi}_i}(\tau_i)}  R(\tau_i),
\end{split}
\end{equation}
where $p^{\pi}(\tau)$ represents the probability of generating the trajectory $\tau$ with policy $\pi$, i.e.,
\begin{equation}
    p^{\pi}(\tau) = \mu(s_0)\prod_{i=0}^{\infty}\left[\pi(a_i|s_i)\mathcal{P}(s_{i+1}|s_i,a_i)\right].
\end{equation}

\section{Reuse Bias}
In this section, we first introduce a novel concept of {\it Reuse Bias} to measure the bias in current off-policy evaluation methods. Then, we show that the reuse of the replay buffer causes overestimation and provide a high probability upper bound of the Reuse Bias with some insights into this result. % and then give the proof sketch of this theorem.

\subsection{Reuse Bias}
%In this part, we first introduce the concept of the Reuse Bias to measure the bias in current off-policy evaluation methods.\junz{duplicate with the sentences at the beginning of sec.3. can be removed.} 
Assuming that the hypothesis set of policies is $\mathcal{H}$, for any off-policy algorithm, we formalize it as a mapping $\mathcal{O}:\mathcal{H}\times\Gamma^m\rightarrow \mathcal{H}$ that takes an initialized policy $\pi_0\in\mathcal{H}$ and a replay buffer $\mathcal{B}\in\Gamma^m$ as inputs, and outputs a policy $\mathcal{O}(\pi_0, \mathcal{B})\in\mathcal{H}$.
In off-policy evaluation, we use the importance sampling estimate $\hat{J}_{\hat{\Pi}, \mathcal{B}}(\mathcal{O}(\pi_0, \mathcal{B}))$ defined in Eq.~(\ref{eq:IS-est}) as an approximation of the expected return $J(\mathcal{O}(\pi_0, \mathcal{B}))$. 
%hope to evaluate the performance of $\mathcal{O}(\pi_0, \mathcal{B})$ via importance sampling on the replay buffer. In other words, we . 
We define their difference as the Reuse Error of an off-policy algorithm $\mathcal{O}$ on $\pi_0$ as well as $\mathcal{B}$, and the expectation of the Reuse Error as the Reuse Bias:
%In the following part, we formally name the gap between the importance sampling estimation $\hat{J}_{\hat{\Pi}, \mathcal{B}}(\mathcal{O}(\pi_0, \mathcal{B}))$ and $J(\mathcal{O}(\pi_0, \mathcal{B}))$ as the Reuse Bias of an off-policy algorithm $\mathcal{O}$ on the initialized policy $\pi_0$ the replay buffer $\mathcal{B}$. \junz{duplicate with def. 3.1}
\begin{definition}[\textbf{Reuse Bias}]
\label{df_reuse_bias}
For any off-policy algorithm $\mathcal{O}$, initialized policy $\pi_0$ and replay buffer $\mathcal{B}\sim\hat{\Pi}$, we define the Reuse Error of $\mathcal{O}$ on $\pi_0$ and $\mathcal{B}$ as
\begin{equation}
    \epsilon_{\mathrm{RE}}(\mathcal{O}, \pi_0, \mathcal{B}) \triangleq \hat{J}_{\hat{\Pi}, \mathcal{B}}(\mathcal{O}(\pi_0, \mathcal{B})) - J(\mathcal{O}(\pi_0, \mathcal{B})).
\end{equation}
Moreover, we define is expectation as the Reuse Bias:
\begin{equation}
    \epsilon_{\mathrm{RB}}(\mathcal{O}, \pi_0) \triangleq \mathbb{E}_{\mathcal{B}}[\epsilon_{\mathrm{RE}}(\mathcal{O}, \pi_0, \mathcal{B})].
\end{equation}
\end{definition}
It is well-known that importance sampling is an unbiased estimation when the estimated distribution is independent of the samples. In other words, the off-policy evaluation is unbiased, i.e., $\epsilon_{\mathrm{RB}}(\mathcal{O}, \pi_0) = \mathbb{E}_{\mathcal{B}} \left[\epsilon_{\mathrm{RE}}(\mathcal{O}, \pi_0, \mathcal{B})\right] = 0$, when $\mathcal{O}(\pi_0, \mathcal{B})$ is independent of $\mathcal{B}$. However, off-policy algorithms~\cite{sac,td3} utilize trajectories in the replay buffer $\mathcal{B}$ to optimize the policy. In this case, the policy $\mathcal{O}(\pi_0, \mathcal{B})$ does depend on $\mathcal{B}$ which makes the off-policy evaluation no longer unbiased. And we will provide a first systematic investigation of the Reuse Bias, as detailed below. 
%\junz{this is a guess, so what's the conclusion? put the unbiasedness clearly here.}

% It is well-known that $\hat{J}_{\hat{\Pi},\mathcal{B}}(\pi)$ is an unbiased estimation of $J(\pi)$ when $\pi$ is independent of $\mathcal{B}$ by the property of importance sampling. However, for improving sample efficiency, practical off-policy algorithms always reuse from trajectories replay buffer $\mathcal{B}$ to optimize and evaluate $\pi$, which causes our policy $\pi$ is dependent on the dataset $\mathcal{B}$ and our estimation may be no longer unbiased. 

%\junz{how does this sentence connect to the above paragraph? need rewrite.} 
We first assume that the off-policy algorithm $\mathcal{O}^*$ satisfies the condition that $\mathcal{O}^*(\pi_0, \mathcal{B})$ has the highest estimated performance with $\mathcal{B}$ and consider the Reuse Bias under this assumption. This assumption is mild and practical since off-policy algorithms always hope to maximize the expected performance of the policy w.r.t. the replay buffer in practice.
%, here we %\junz{don't understand this "thus"... why this?} 
%consider the Reuse Bias in this case. Assuming that the off-policy algorithm $\mathcal{O}^*$ satisfies the condition that $\mathcal{O}^*(\pi_0, \mathcal{B})$ has the highest estimated performance with $\mathcal{B}$, 
Under this assumption, the estimation $\hat{J}_{\hat{\Pi},\mathcal{B} }(\mathcal{O}^*(\pi_0, \mathcal{B}))$ is an overestimation of $J(\mathcal{O}^*(\pi_0, \mathcal{B}))$ as summarized below:
\begin{theorem}[Overestimation for Off-Policy Evaluation, Proof in Appendix~\ref{reused_proof}]
% \begin{theorem}[Overestimation for Off-Policy Evaluation, Proof in Appendix A.1]
\label{reused_theorem}
Assume that $\mathcal{O}^*(\pi_0, \mathcal{B})$ is the optimal policy of $\mathcal{H}$ over the replay buffer $\mathcal{B}$, i.e.,
\begin{equation}
\begin{split}
    \mathcal{O}^*(\pi_0, \mathcal{B}) &= \mathop{\arg\max}_{\pi\in\mathcal{H}} \hat{J}_{\hat{\Pi}, \mathcal{B}}(\pi)\\
    &= \mathop{\arg\max}_{\pi\in\mathcal{H}} \frac{1}{m} \sum_{i=1}^m\left[\frac{p^{\pi}(\tau_i)}{p^{\hat{\pi}_i}(\tau_i)}  R(\tau_i)\right].
\end{split}
\end{equation}
We can show that $\hat{J}_{\hat{\Pi}, \mathcal{B}}(\mathcal{O}^*(\pi_0, \mathcal{B}))$ is an overestimation of $J(\mathcal{O}^*(\pi_0, \mathcal{B}))$, i.e., $\epsilon_{\mathrm{RB}}(\mathcal{O}^*, \pi_0) = \mathbb{E}_{\mathcal{B}\sim\hat{\Pi}}\left[\epsilon_{\mathrm{RE}}(\mathcal{O}^*, \pi_0, \mathcal{B})\right]$ $\ge 0$. 
If the equality holds, then for any $\mathcal{B}, \mathcal{B}'\sim\hat{\Pi}$, we have
\begin{equation}
\begin{split}
    \mathcal{O}^*(\pi_0, \mathcal{B}) &= \mathop{\arg\max}_{\pi\in\mathcal{H}} \hat{J}_{\hat{\Pi}, \mathcal{B}'}(\pi).
\end{split}
\end{equation}
\end{theorem}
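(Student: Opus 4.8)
The plan is to run a ``ghost sample'' (symmetrization) argument, which is the classical way to expose the \emph{optimizer's curse}. Throughout, write $\pi_{\mathcal{B}} \triangleq \mathcal{O}^*(\pi_0,\mathcal{B})$ for the optimized policy. The only ingredient imported from outside is the elementary fact recalled just before the statement: for any policy $\pi$ that does \emph{not} depend on the buffer, importance sampling is unbiased, $\mathbb{E}_{\mathcal{B}\sim\hat{\Pi}}[\hat{J}_{\hat{\Pi},\mathcal{B}}(\pi)] = J(\pi)$, since a change of measure shows each term satisfies $\mathbb{E}_{\tau_i\sim\hat{\pi}_i}\!\left[\frac{p^{\pi}(\tau_i)}{p^{\hat{\pi}_i}(\tau_i)}R(\tau_i)\right] = J(\pi)$.

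First I would introduce an independent copy $\mathcal{B}'\sim\hat{\Pi}$ of the replay buffer (same behavior policies $\hat{\Pi}$, freshly drawn trajectories). Because $\pi_{\mathcal{B}}$ is a function of $\mathcal{B}$ alone, conditioning on $\mathcal{B}$ turns $\pi_{\mathcal{B}}$ into a \emph{fixed} policy with respect to the randomness in $\mathcal{B}'$, so the unbiasedness above yields $\mathbb{E}_{\mathcal{B}'}[\hat{J}_{\hat{\Pi},\mathcal{B}'}(\pi_{\mathcal{B}})\mid \mathcal{B}] = J(\pi_{\mathcal{B}})$, and hence $\mathbb{E}_{\mathcal{B}}[J(\pi_{\mathcal{B}})] = \mathbb{E}_{\mathcal{B},\mathcal{B}'}[\hat{J}_{\hat{\Pi},\mathcal{B}'}(\pi_{\mathcal{B}})]$.

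The heart of the argument is then the chain
\begin{align*}
\mathbb{E}_{\mathcal{B}}[\hat{J}_{\hat{\Pi},\mathcal{B}}(\pi_{\mathcal{B}})] &= \mathbb{E}_{\mathcal{B},\mathcal{B}'}[\hat{J}_{\hat{\Pi},\mathcal{B}}(\pi_{\mathcal{B}})] \ge \mathbb{E}_{\mathcal{B},\mathcal{B}'}[\hat{J}_{\hat{\Pi},\mathcal{B}}(\pi_{\mathcal{B}'})] \\
&= \mathbb{E}_{\mathcal{B},\mathcal{B}'}[\hat{J}_{\hat{\Pi},\mathcal{B}'}(\pi_{\mathcal{B}})] = \mathbb{E}_{\mathcal{B}}[J(\pi_{\mathcal{B}})],
\end{align*}
where the inequality is exactly the defining optimality of $\pi_{\mathcal{B}} = \arg\max_{\pi\in\mathcal{H}}\hat{J}_{\hat{\Pi},\mathcal{B}}(\pi)$ (so in particular it beats the competitor $\pi_{\mathcal{B}'}$), and the penultimate equality is just relabeling the i.i.d.\ variables $\mathcal{B}\leftrightarrow\mathcal{B}'$. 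Subtracting $\mathbb{E}_{\mathcal{B}}[J(\pi_{\mathcal{B}})]$ from both ends gives $\epsilon_{\mathrm{RB}}(\mathcal{O}^*,\pi_0) = \mathbb{E}_{\mathcal{B}}[\hat{J}_{\hat{\Pi},\mathcal{B}}(\pi_{\mathcal{B}}) - J(\pi_{\mathcal{B}})] \ge 0$.

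For the equality case, the one inequality above was obtained by integrating the pointwise nonnegative quantity $\hat{J}_{\hat{\Pi},\mathcal{B}}(\pi_{\mathcal{B}}) - \hat{J}_{\hat{\Pi},\mathcal{B}}(\pi_{\mathcal{B}'}) \ge 0$, so $\epsilon_{\mathrm{RB}}(\mathcal{O}^*,\pi_0)=0$ forces this difference to vanish for (almost) every $\mathcal{B},\mathcal{B}'$, i.e.\ $\pi_{\mathcal{B}'}$ also attains $\max_{\pi\in\mathcal{H}}\hat{J}_{\hat{\Pi},\mathcal{B}}(\pi)$; swapping the roles of $\mathcal{B}$ and $\mathcal{B}'$ then gives $\mathcal{O}^*(\pi_0,\mathcal{B}) = \arg\max_{\pi\in\mathcal{H}}\hat{J}_{\hat{\Pi},\mathcal{B}'}(\pi)$, the claimed identity. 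The main obstacle is not conceptual but bookkeeping: one must ensure the double expectation $\mathbb{E}_{\mathcal{B},\mathcal{B}'}$ is legitimate (assume the returns, hence the importance-sampling estimators, are integrable so that Fubini applies and all expectations are finite) and must handle ties in the $\arg\max$ — the equality-case conclusion should be read up to the usual almost-sure qualification and up to the choice of maximizer when $\arg\max_{\pi\in\mathcal{H}}\hat{J}_{\hat{\Pi},\mathcal{B}}$ is not a singleton. Modulo these technicalities, the proof is just the symmetrization identity combined with optimality of $\pi_{\mathcal{B}}$.
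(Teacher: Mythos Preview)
Your proposal is correct and follows essentially the same ghost-sample/symmetrization argument as the paper: introduce an independent copy $\mathcal{B}'\sim\hat{\Pi}$, use unbiasedness of importance sampling for a policy that is fixed with respect to the fresh buffer, and exploit the defining optimality of $\pi_{\mathcal{B}}$ to obtain the inequality. The only cosmetic difference is the direction in which the optimality is invoked---the paper bounds $\hat{J}_{\hat{\Pi},\mathcal{B}'}(\pi_{\mathcal{B}})\le \hat{J}_{\hat{\Pi},\mathcal{B}'}(\pi_{\mathcal{B}'})$ and then drops the now-superfluous outer $\mathbb{E}_{\mathcal{B}}$, whereas you bound $\hat{J}_{\hat{\Pi},\mathcal{B}}(\pi_{\mathcal{B}})\ge \hat{J}_{\hat{\Pi},\mathcal{B}}(\pi_{\mathcal{B}'})$ and then relabel $\mathcal{B}\leftrightarrow\mathcal{B}'$---but these are the same step up to symmetry, and your treatment of the equality case matches the paper's.
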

However, in practice, we always optimize our policy through several steps of gradient descent rather than directly getting the optimal policy $\mathcal{O}^*(\pi_0, \mathcal{B})$ over the replay buffer $\mathcal{B}$. To analyze the Reuse Bias in more practical cases, we consider the one-step policy gradient optimization and prove that the estimation $\hat{J}_{\hat{\Pi},\mathcal{B} }$ is strictly an overestimation of $J$ in this case only under some mild assumptions as:
\begin{theorem}[Overestimation for One-Step PG, Proof in Appendix~\ref{proof_one_pg}]
% \begin{theorem}[Overestimation for One-Step PG, Proof in Appendix A.2]
\label{thm_one_pg}
Given a parameterized policy $\pi_{\theta}$ which is independent with the replay buffer $\mathcal{B}$ and is differentiable to the parameter $\theta$, we consider the one-step policy gradient
\begin{equation}
    \theta' = \theta + \alpha \nabla_{\theta} \hat{J}_{\hat{\Pi}, \mathcal{B}}(\pi_{\theta}),
\end{equation}
where $\alpha$ is the learning rate. If $\nabla_{\theta} \hat{J}_{\hat{\Pi}, \mathcal{B}}(\pi_{\theta})$, as the function of $\mathcal{B}$, is \textbf{not} constant, and $\alpha > 0$ is sufficiently small, then the Reuse Bias is strictly larger than 0, i.e.,
\begin{equation}
    \mathbb{E}_{\mathcal{B}\sim\hat{\Pi}}\hat{J}_{\hat{\Pi}, \mathcal{B}}(\pi_{\theta'}) > \mathbb{E}_{\mathcal{B}\sim\hat{\Pi}}J(\pi_{\theta'}).
\end{equation}
\end{theorem}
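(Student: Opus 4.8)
The plan is to reduce the claim to a first-order expansion in the learning rate $\alpha$ and to recognize the leading coefficient as a variance, which the non-constancy hypothesis forces to be strictly positive. Write $g(\mathcal{B})\triangleq\nabla_{\theta}\hat{J}_{\hat{\Pi},\mathcal{B}}(\pi_{\theta})$ for the (random) policy-gradient estimate at the fixed parameter $\theta$, so that $\theta'=\theta+\alpha\,g(\mathcal{B})$, and set
\[
\Delta(\alpha)\triangleq\mathbb{E}_{\mathcal{B}\sim\hat{\Pi}}\bigl[\hat{J}_{\hat{\Pi},\mathcal{B}}(\pi_{\theta+\alpha g(\mathcal{B})})\bigr]-\mathbb{E}_{\mathcal{B}\sim\hat{\Pi}}\bigl[J(\pi_{\theta+\alpha g(\mathcal{B})})\bigr].
\]
It suffices to show that $\Delta(0)=0$, that $\Delta$ is continuously differentiable on a neighbourhood of $0$, and that $\Delta'(0)>0$; the desired conclusion $\Delta(\alpha)>0$ for all sufficiently small $\alpha>0$ is then immediate from $\Delta(0)=0$ and sign preservation of the derivative.

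For $\Delta(0)=0$, note that $\pi_{\theta}$ is independent of $\mathcal{B}$, so importance sampling is unbiased and $\mathbb{E}_{\mathcal{B}}[\hat{J}_{\hat{\Pi},\mathcal{B}}(\pi_{\theta})]=J(\pi_{\theta})$; both terms at $\alpha=0$ equal $J(\pi_{\theta})$. For $\Delta'(0)$, I would differentiate under the expectation sign (justified by the differentiability and integrability hypotheses via dominated convergence). The derivative of the first term at $\alpha=0$ is $\mathbb{E}_{\mathcal{B}}\bigl[\nabla_{\theta}\hat{J}_{\hat{\Pi},\mathcal{B}}(\pi_{\theta})^{\top}g(\mathcal{B})\bigr]=\mathbb{E}_{\mathcal{B}}\bigl[\lVert g(\mathcal{B})\rVert^{2}\bigr]$, directly from the definition of $g$. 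For the second term, the same interchange gives $\nabla_{\theta}J(\pi_{\theta})^{\top}\mathbb{E}_{\mathcal{B}}[g(\mathcal{B})]$. The key point is that for every fixed $\vartheta$ in a neighbourhood of $\theta$ the policy $\pi_{\vartheta}$ is independent of $\mathcal{B}$, so $\mathbb{E}_{\mathcal{B}}[\hat{J}_{\hat{\Pi},\mathcal{B}}(\pi_{\vartheta})]=J(\pi_{\vartheta})$ holds identically; differentiating this identity in $\vartheta$ at $\theta$ yields $\mathbb{E}_{\mathcal{B}}[g(\mathcal{B})]=\nabla_{\theta}J(\pi_{\theta})$. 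Hence the derivative of the second term is $\lVert\mathbb{E}_{\mathcal{B}}[g(\mathcal{B})]\rVert^{2}$, and therefore
\[
\Delta'(0)=\mathbb{E}_{\mathcal{B}}\bigl[\lVert g(\mathcal{B})\rVert^{2}\bigr]-\bigl\lVert\mathbb{E}_{\mathcal{B}}[g(\mathcal{B})]\bigr\rVert^{2}=\mathrm{tr}\bigl(\mathrm{Cov}_{\mathcal{B}}(g(\mathcal{B}))\bigr)\ge 0,
\]
with equality if and only if $g(\mathcal{B})$ is $\hat{\Pi}$-almost surely constant. Since the hypothesis is exactly that $\nabla_{\theta}\hat{J}_{\hat{\Pi},\mathcal{B}}(\pi_{\theta})$ is not constant as a function of $\mathcal{B}$, we get $\Delta'(0)>0$, which combined with $\Delta(0)=0$ and continuity of $\Delta'$ gives $\Delta(\alpha)>0$ on some interval $(0,\alpha_{0})$.

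I expect the main obstacle to be the regularity bookkeeping rather than the algebra: one must (i) legitimately differentiate under the expectation for both $\hat{J}_{\hat{\Pi},\mathcal{B}}$ and $J$, and (ii) ensure $\Delta\in C^{1}$ near $0$ so that the strict sign of $\Delta'(0)$ transfers to a whole interval. Both require uniform-in-$\mathcal{B}$ control of second-order $\vartheta$-derivatives of $\hat{J}_{\hat{\Pi},\mathcal{B}}(\pi_{\vartheta})$ together with square-integrability of $g(\mathcal{B})$, which is precisely where the ``mild assumptions'' (bounded returns, smoothness of $\theta\mapsto\pi_{\theta}$, integrability of the importance weights and their derivatives) are used. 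A secondary subtlety is that $\alpha\,g(\mathcal{B})$ is not small uniformly over $\mathcal{B}$, so the Taylor remainders must be bounded in expectation rather than pointwise; packaging the argument through $\Delta(\alpha)$ and its derivative (as above) rather than through a pointwise second-order expansion is what makes this manageable.
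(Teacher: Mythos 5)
Your proposal is correct and follows essentially the same route as the paper's proof: a first-order expansion in $\alpha$, cancellation of the zeroth-order terms via the unbiasedness of importance sampling at the fixed $\theta$ (and of its gradient, giving $\mathbb{E}_{\mathcal{B}}[g(\mathcal{B})]=\nabla_{\theta}J(\pi_{\theta})$), and identification of the leading coefficient as $\mathbb{E}_{\mathcal{B}}\left[\lVert g(\mathcal{B})\rVert^{2}\right]-\lVert\mathbb{E}_{\mathcal{B}}[g(\mathcal{B})]\rVert^{2}$, which is strictly positive exactly because $g(\mathcal{B})$ is non-constant. Your packaging through $\Delta(\alpha)$ and its derivative, with the remarks on dominated convergence and controlling the remainder in expectation, is if anything slightly more careful about the regularity issues the paper leaves implicit in its $O(\alpha^{2})$ term.
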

Furthermore, to explain how severe the Reuse Bias will be, we provide Theorem~\ref{optimal_theorem} to show that the estimated return of $\mathcal{O}^*(\pi_0, \mathcal{B})$ might be arbitrarily large when its true return is arbitrarily small, which indicates that the Reuse Bias will result in the degeneration of the policy optimization and the final performance.
\begin{theorem}[Proof in Appendix~\ref{proof_optimal_theorem}]
% \begin{theorem}[Proof in Appendix A.3]
\label{optimal_theorem}
For any fixed replay buffer size $n$, assume that $\Pi$ is the set of random policies, and the reward of any state-action pair $(s,a)$ is not negative, i.e., $\mathcal{R}(s,a) \ge 0$, then there exists an environment satisfying that $ \mathbb{E}_{\mathcal{B}\sim\hat{\Pi}}\left[J(\mathcal{O}^*(\pi_0, \mathcal{B}))\right]$ is arbitrarily small when $\mathbb{E}_{\mathcal{B}\sim\hat{\Pi}}\left[\hat{J}_{\hat{\Pi}, \mathcal{B}}(\mathcal{O}^*(\pi_0, \mathcal{B}))\right]$ is arbitrarily large. In other words, for $\forall n,M\ge 1, \epsilon > 0, \exists$ an environment satisfying that
\begin{equation}
\begin{split}
    &J(\pi^*)=1, |\mathcal{B}| = n, \mathbb{E}_{\mathcal{B}\sim\hat{\Pi}}\left[J(\mathcal{O}^*(\pi_0, \mathcal{B}))\right] \leq \epsilon,\\
    &\mathbb{E}_{\mathcal{B}\sim\hat{\Pi}}\left[\hat{J}_{\hat{\Pi}, \mathcal{B}}(\mathcal{O}^*(\pi_0, \mathcal{B}))\right] \ge M.
\end{split}
\end{equation}
here $\pi^*$ is the optimal policy of this environment.
\end{theorem}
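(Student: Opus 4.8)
The plan is to exhibit an explicit ``needle in a haystack'' MDP in which the arg-max operator $\mathcal{O}^*$ is forced to put all of its probability mass on a single low-reward action, so that the importance weight of the evaluated policy inflates to the number of actions $K$ while its true return stays of order $\delta\ll 1$; afterwards we tune $K$ and $\delta$ against $n$, $M$, $\epsilon$. Concretely, take $\mathcal{S}=\{s_0,s_\perp\}$ with $\mu$ a point mass at $s_0$, let $s_\perp$ be absorbing with a single available action and reward $0$, and let $s_0$ admit $K$ actions $a_1,\dots,a_K$, each leading deterministically to $s_\perp$, with $\mathcal{R}(s_0,a_1)=1$ and $\mathcal{R}(s_0,a_j)=\delta$ for $j\ge 2$, where $0<\delta<1$ is fixed later. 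Let every behavior policy be uniform at $s_0$, i.e.\ $\hat{\pi}_i(a_j\mid s_0)=1/K$, so $\mathcal{B}\sim\hat{\Pi}$ just means $\tau_1,\dots,\tau_n$ are i.i.d.\ with first action $a^{(i)}_0$ uniform on $\{a_1,\dots,a_K\}$; $\pi_0$ is arbitrary since it does not enter the definition of $\mathcal{O}^*$. Each trajectory reduces to a single action choice, so $R(\tau)=\mathcal{R}(s_0,a_0)$ and, writing $v=\pi(a_1\mid s_0)$, $J(\pi)=v+\delta(1-v)\le 1$ with equality iff $v=1$; hence $\pi^*=\pi_{a_1}$ (the deterministic policy playing $a_1$ at $s_0$), $J(\pi^*)=1$, $|\mathcal{B}|=n$, and all rewards are nonnegative, exactly as the statement asks.

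Next I would record three facts. (i) All transition factors of this deterministic MDP equal $1$ and the policy factors at $s_\perp$ are forced, so for the deterministic policy $\pi_{a_j}$ the importance ratio on $\tau_i$ is $p^{\pi_{a_j}}(\tau_i)/p^{\hat{\pi}_i}(\tau_i)=K\,\mathbf{1}[a^{(i)}_0=a_j]$, whence $\hat{J}_{\hat{\Pi},\mathcal{B}}(\pi_{a_j})=\tfrac{K}{n}\,c_j\,\mathcal{R}(s_0,a_j)$ with $c_j=\#\{i:a^{(i)}_0=a_j\}$. (ii) Both $\hat{J}_{\hat{\Pi},\mathcal{B}}(\pi)$ and $J(\pi)$ depend on $\pi$ only through $(\pi(a_1\mid s_0),\dots,\pi(a_K\mid s_0))$, and $\hat{J}_{\hat{\Pi},\mathcal{B}}$ is \emph{affine} in that vector; consequently $\max_{\pi\in\mathcal{H}}\hat{J}_{\hat{\Pi},\mathcal{B}}(\pi)=\max_j \hat{J}_{\hat{\Pi},\mathcal{B}}(\pi_{a_j})$ and every maximizer is supported on $\arg\max_j \hat{J}_{\hat{\Pi},\mathcal{B}}(\pi_{a_j})$ (a generic perturbation of the $\delta$'s would make this a single vertex, but the bounds below do not need that). (iii) As $c_j\ge 1$ for at least one $j$ and $\mathcal{R}(s_0,a_j)\ge\delta$, we get $\hat{J}_{\hat{\Pi},\mathcal{B}}(\mathcal{O}^*(\pi_0,\mathcal{B}))\ge K\delta/n$ \emph{deterministically}, so $\mathbb{E}_{\mathcal{B}\sim\hat{\Pi}}[\hat{J}_{\hat{\Pi},\mathcal{B}}(\mathcal{O}^*(\pi_0,\mathcal{B}))]\ge K\delta/n$; and on the event $\{a^{(i)}_0\ne a_1\ \forall i\}$ we have $\hat{J}_{\hat{\Pi},\mathcal{B}}(\pi_{a_1})=0<K\delta/n$, so by (ii) every maximizer puts zero mass on $a_1$ and $J(\mathcal{O}^*(\pi_0,\mathcal{B}))=\delta$ there, while always $J(\mathcal{O}^*(\pi_0,\mathcal{B}))\le 1$; since this event fails with probability $1-(1-1/K)^n\le n/K$, we obtain $\mathbb{E}_{\mathcal{B}\sim\hat{\Pi}}[J(\mathcal{O}^*(\pi_0,\mathcal{B}))]\le n/K+\delta$.

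Finally, pick $K=\lceil 2Mn/\epsilon\rceil$ and $\delta=Mn/K$; assuming w.l.o.g.\ $\epsilon\le 1$ we have $\delta\le\epsilon/2<1$ and $K\ge 2$, so the construction is legitimate. Then $K\delta/n=M$ gives $\mathbb{E}_{\mathcal{B}\sim\hat{\Pi}}[\hat{J}_{\hat{\Pi},\mathcal{B}}(\mathcal{O}^*(\pi_0,\mathcal{B}))]\ge M$, while $n/K\le\epsilon/(2M)\le\epsilon/2$ and $\delta\le\epsilon/2$ give $\mathbb{E}_{\mathcal{B}\sim\hat{\Pi}}[J(\mathcal{O}^*(\pi_0,\mathcal{B}))]\le\epsilon$, alongside $J(\pi^*)=1$ and $|\mathcal{B}|=n$, which is the claim. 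The step that I expect to need the most care is treating $\mathcal{O}^*$ as an arg-max \emph{set} rather than a single policy: one must check that (a) whatever maximizer the algorithm returns, its estimated value is the full maximum $\ge K\delta/n$, and (b) whenever the rare ``good'' action $a_1$ is missing from $\mathcal{B}$, no maximizer can place mass on it. Both hinge on the affinity of $\hat{J}_{\hat{\Pi},\mathcal{B}}$ in the per-state action distribution --- so that the maximum over all stochastic policies does not exceed the best deterministic one, and the maximizing face avoids every action with strictly sub-maximal estimated value --- and once this is pinned down the rest is routine arithmetic in the choice of constants.
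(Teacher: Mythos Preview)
Your proof is correct and uses the same one-step bandit template as the paper: a single nontrivial state with many actions and a uniform behavior policy, so that the importance weight on a vertex policy equals the number of arms. The paper's version differs in its reward layout: it takes $M_1$ actions of reward $1$ and $M_2$ actions of reward $0$ with $M_2\gg M_1$, and its key event is ``all $n$ samples land among the zero-reward actions'' (probability $p^n$ with $p=M_2/(M_1+M_2)$ close to $1$); on that event $\hat J\equiv 0$ and they tie-break to $\mathcal{O}^*=\pi_0$, whose true return is $1-p$, while on the complement $\hat J=(M_1+M_2)/n$. Your single-needle variant with strictly positive reward $\delta$ on the bad arms buys two small simplifications: the maximizer set never degenerates to the whole simplex, so no tie-breaking convention involving $\pi_0$ is needed, and you obtain the lower bound $\hat J_{\hat\Pi,\mathcal{B}}(\mathcal{O}^*(\pi_0,\mathcal{B}))\ge K\delta/n$ deterministically rather than only in expectation. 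After that, both proofs finish by the same routine calibration of constants.
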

In Theorem~\ref{optimal_theorem}, we point out that the Reuse Bias may cause an erroneous off-policy policy evaluation, which might be sufficiently severe, and affect further policy improvement, which is also shown in experiments of MiniGrid (See Sec.~\ref{expe_reuse}).

% In Theorem~\ref{optimal_theorem}, the equal condition will no longer hold in any environment that exists a state $s_0$ and two different actions $a_1\neq a_2$ satisfying $\mathcal{R}(s_0,a_1), \mathcal{R}(s_0,a_2) > 0$.
% Thus in these non-trivial environments, the Reuse Bias can cause even bad policies to be over-estimated as having higher value than the optimal policy. This results in degeneration of the policy optimization and the final performance.

%\section{Theoretical Analysis}
% \label{sec-theo}\junz{the above section is also theoretical analysis, so the title is misleading. We may put sec.3.1 as sec.3 of Preliminary, while merge the Sec.3.2 and Sec.4 as a single section about Reuse Bias and its analysis.}
% In this section, we first provide a high probability upper bound of the Reuse Bias with some insights of this results, and then give the proof sketch of this theorem.

\subsection{High Probability Bound for Reuse Error}
Since the Reuse Bias may mislead the optimization direction and further affect the performance, which cannot be ignored, it is important to analyze the connection of the replay buffer $\mathcal{B}$ and the Reuse Error to better control it. Without losing generality, we mainly consider the case that all trajectories are sampled from the same original policy $\hat{\pi}$. 

When the hypothesis set has bounded statistical complexity, for example, $\mathcal{H}$ is finite or its VC-dimension is finite, we can naturally provide a high-probability upper bound of the Reuse Error via using some classical results of statistical learning theory~\cite{unserstandingml} (We provide the result when $\mathcal{H}$ is finite in Appendix~\ref{appendix_bounded_hypo} as an example). 
% (We provide the result when $\mathcal{H}$ is finite in Appendix A.4 as an example).
However, these results are based on the statistic complexity of the hypothesis set and $\max_{\pi, \tau}\left[\frac{p^{\pi}(\tau)}{p^{\hat{\pi}}(\tau)}\right]$. In practice, the statistic complexity of the hypothesis set is extremely huge while the sample number $m$ is relatively small, and $\max_{\pi, \tau}\left[\frac{p^{\pi}(\tau)}{p^{\hat{\pi}}(\tau)}\right]$ is the term related on the environment as well as the hypothesis set, which can not be optimized in the training stage.

Here, we provide another high-probability upper bound for the Reuse Error as below. Compared with prior results, our bound holds for all $\mathcal{H}$ even with unbounded statistical complexity, and is related with the optimized policy $\mathcal{O}(\pi_0, \mathcal{B})$, which guides us to optimize $\mathcal{O}(\pi_0, \mathcal{B})$ by controlling Reuse Error.
\begin{theorem}[High-Probability Bound for Reuse Error, Proof in Appendix~\ref{proof_main_theorem}]
% \begin{theorem}[High-Probability Bound for Reuse Error, Proof in Appendix A.5]
\label{main_theorem}
Assume that, for any trajectory $\tau$, we can bound its return as $0\leq R(\tau)\leq 1$. Then, for \textbf{any} off-policy algorithm $\mathcal{O}$ and initialized policy $\pi_0\in\mathcal{H}$, with a probability of at least $1-\delta$ over the choice of an i.i.d. training set $\mathcal{B} = \{\tau_i\}_{i=1}^m$ sampled by the same original policy $\hat{\pi}$, the following inequality holds:
\begin{equation}
    |\epsilon_{\mathrm{RE}}(\mathcal{O}, \pi_0, \mathcal{B})| \leq \sqrt{ \frac{m \epsilon_1 +\log\left(\frac{m^2}{\delta}\right)}{m-1}} + \epsilon_2,
\end{equation}
where $\epsilon_1$ and $\epsilon_2$ are defined as:
\begin{equation}
\begin{split}
    \epsilon_1 &= \mathrm{KL}[p^{\mathcal{O}(\pi_0, \mathcal{B})}(\cdot)||p^{\hat \pi}(\cdot)],\\
    \epsilon_2 &= \frac{1}{m}\sum_{i=1}^m \left|1 - \frac{p^{\mathcal{O}(\pi_0, \mathcal{B})}(\tau_i)}{p^{\hat\pi}(\tau_i)} \right|
    = \mathbb{E}_{\tau\sim\mathcal{B}} \left|1 - \frac{p^{\mathcal{O}(\pi_0, \mathcal{B})}(\tau)}{p^{\hat\pi}(\tau)} \right|.
\end{split}
\end{equation}
\end{theorem}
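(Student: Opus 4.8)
The plan is to bound the Reuse Error by treating the importance-sampling estimate as a sample average of the random variable $W(\tau) := \frac{p^{\mathcal{O}(\pi_0,\mathcal{B})}(\tau)}{p^{\hat\pi}(\tau)}R(\tau)$, and comparing it to its mean $J(\mathcal{O}(\pi_0,\mathcal{B}))$ via a PAC-Bayes change-of-measure argument. The subtlety — and the reason a naive concentration inequality fails — is that the weight $W$ itself depends on $\mathcal{B}$ through the learned policy $\mathcal{O}(\pi_0,\mathcal{B})$, so we cannot appeal to a standard Hoeffding/Bernstein bound for a fixed function. First I would set up a PAC-Bayes-type inequality: fix the prior to be the behavior policy $\hat\pi$ (equivalently the trajectory distribution $p^{\hat\pi}$), and let the posterior be the data-dependent $p^{\mathcal{O}(\pi_0,\mathcal{B})}$. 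The classical McAllester bound controls $\mathbb{E}_{\pi\sim Q}[\text{true risk}] - \mathbb{E}_{\pi\sim Q}[\text{empirical risk}]$ by $\sqrt{(\mathrm{KL}(Q\|P) + \log(\cdot/\delta))/(m-1)}$; here I want an analogue where the "risk" is the return and the change of measure from $P=p^{\hat\pi}$ to $Q=p^{\mathcal{O}(\pi_0,\mathcal{B})}$ is performed at the level of trajectories rather than hypotheses.

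The key steps, in order: (1) Introduce the exponential moment / Donsker–Varadhan variational formula: for any fixed measurable $\phi$, $\mathbb{E}_{\tau\sim Q}[\phi(\tau)] \le \mathrm{KL}(Q\|P) + \log \mathbb{E}_{\tau\sim P}[e^{\phi(\tau)}]$. (2) Choose $\phi(\tau) = \lambda\big(J(\cdot) - \frac{1}{m}\sum_i (\cdot)\big)$-type deviation statistic — more precisely, apply the inequality so that the left side becomes (a multiple of) $J(\mathcal{O}(\pi_0,\mathcal{B})) - \hat J_{\hat\Pi,\mathcal{B}}(\mathcal{O}(\pi_0,\mathcal{B}))$ after recognizing that under $Q = p^{\mathcal{O}(\pi_0,\mathcal{B})}$ the importance weight $p^{\mathcal{O}}/p^{\hat\pi}$ collapses appropriately. (3) Control the moment-generating function $\mathbb{E}_{\mathcal{B}\sim\hat\pi^{\otimes m}}\big[e^{\phi}\big]$ under the prior: because the weights are now evaluated against the $\mathcal{B}$-independent measure $p^{\hat\pi}$, this reduces to bounding the MGF of a sum of i.i.d. bounded-in-$[0,1]$ terms, yielding the $\log(m^2/\delta)$ and the $1/(m-1)$ scaling via a union over a grid of $\lambda$ (or via the standard trick that gives the $m^2$ inside the log). (4) Pass from the one-sided PAC-Bayes bound to the two-sided $|\epsilon_{\mathrm{RE}}|$ by noting that the opposite direction $\hat J - J$ requires handling that the empirical average uses the actual weights $p^{\mathcal{O}}/p^{\hat\pi}$ which are not identically equal to $1$; this is exactly where the correction term $\epsilon_2 = \mathbb{E}_{\tau\sim\mathcal{B}}|1 - p^{\mathcal{O}(\pi_0,\mathcal{B})}(\tau)/p^{\hat\pi}(\tau)|$ enters — it measures the gap between the "idealized" estimator (where we pretend we sampled from $p^{\mathcal{O}}$) and the real self-normalizing-free IS estimator, and since $0 \le R(\tau)\le 1$ this gap is bounded termwise by $|1 - p^{\mathcal{O}}/p^{\hat\pi}|$.

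The main obstacle I anticipate is step (2)–(3): making the change of measure land precisely on the quantity $\hat J_{\hat\Pi,\mathcal{B}}(\mathcal{O}(\pi_0,\mathcal{B})) - J(\mathcal{O}(\pi_0,\mathcal{B}))$ rather than on some surrogate, while keeping the MGF computation under the prior genuinely $\mathcal{B}$-independent. Concretely, one has to be careful that $\phi$ is a fixed function of $\tau$ (not of $\mathcal{B}$) when the Donsker–Varadhan step is invoked — the trick is to define the deviation functional in terms of the generic trajectory-return map and only afterwards substitute $Q = p^{\mathcal{O}(\pi_0,\mathcal{B})}$, so that the data-dependence is quarantined entirely inside the $\mathrm{KL}$ term (which becomes $\epsilon_1$) and the residual $\epsilon_2$. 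The factor $m^2$ in the log and the $m-1$ in the denominator are then routine consequences of the standard PAC-Bayes bookkeeping (optimizing over $\lambda$, using that for $X\in[0,1]$ one has $\mathbb{E}[e^{\lambda(X-\mathbb{E}X)}] \le e^{\lambda^2/(8m)}$ per sample, and a union bound over $O(m)$ values of $\lambda$), so I would relegate those to a short appendix computation rather than belabor them in the main argument.
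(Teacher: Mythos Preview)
Your high-level instincts are right --- PAC-Bayes/Donsker--Varadhan with prior $p^{\hat\pi}$ and posterior $p^{\mathcal{O}(\pi_0,\mathcal{B})}$, and the identification of $\epsilon_2$ as the empirical importance-weight residual --- but step~(2) as you describe it does not go through, and the gap is exactly the one you flag yourself without resolving. A single-trajectory function $\phi(\tau)$ cannot produce the deviation $\hat J_{\hat\Pi,\mathcal{B}}(\pi)-J(\pi)$ on the left of Donsker--Varadhan, because that deviation depends on the whole sample $\mathcal{B}$; and if you let $\phi$ depend on $\mathcal{B}$, the MGF you compute under the prior is no longer $\mathcal{B}$-independent, so the Markov step fails. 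Your suggestion to ``define the deviation functional in terms of the generic trajectory-return map and only afterwards substitute $Q$'' does not actually quarantine the data-dependence: substituting $Q=p^{\mathcal{O}(\pi_0,\mathcal{B})}$ into a single-$\tau$ Donsker--Varadhan inequality gives you control of $J(\pi)-\mathbb{E}_{\hat\pi}[R]$, not of $J(\pi)-\frac{1}{m}\sum_i R(\tau_i)$.

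The paper fixes this with two moves you are missing. First, it decomposes $|\epsilon_{\mathrm{RE}}|\le A+B$ \emph{before} any change of measure, where $B=\bigl|\tfrac1m\sum_i(1-\tfrac{p^\pi(\tau_i)}{p^{\hat\pi}(\tau_i)})R(\tau_i)\bigr|\le\epsilon_2$ directly (since $0\le R\le 1$), and $A=\bigl|J(\pi)-\tfrac1m\sum_i R(\tau_i)\bigr|$ contains \emph{no importance weights at all}. Second --- and this is the key idea --- for $A$ it introduces a ghost sample $\hat{\mathcal{B}}=\{\hat\tau_i\}\sim\pi^{\otimes m}$, so that $A=\bigl|\mathbb{E}_{\hat{\mathcal{B}}\sim\pi}[\tfrac1m\sum_i(R(\hat\tau_i)-R(\tau_i))]\bigr|$, and applies the change of measure on the \emph{ghost-sample product space}: with $\phi(\hat{\mathcal{B}})=(m-1)\bigl(\tfrac1m\sum_i(R(\hat\tau_i)-R(\tau_i))\bigr)^2$, Jensen plus the density ratio $p^{\hat\pi}(\hat{\mathcal{B}})/p^{\pi}(\hat{\mathcal{B}})$ converts the $\pi$-dependence into $m\cdot\mathrm{KL}(p^\pi\|p^{\hat\pi})=m\epsilon_1$ (the factor $m$ you do not account for) and leaves an MGF under $\hat\pi^{\otimes m}$ that is independent of $\pi$. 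Taking the supremum over $\pi$ then makes the bound uniform, hence valid for the data-dependent $\mathcal{O}(\pi_0,\mathcal{B})$. Finally, the $m^2$ inside the log does not come from a union over $\lambda$: the exponent is a \emph{square}, so the paper splits $(\bar R(\hat{\mathcal{B}})-\bar R(\mathcal{B}))^2\le 2\Delta(\mathcal{B})^2+2\Delta(\hat{\mathcal{B}})^2$ and uses the McAllester moment lemma $\mathbb{E}[e^{2(m-1)\Delta^2}]\le m$ (derived from the Hoeffding tail $e^{-2m\epsilon^2}$) on each factor, yielding $m\cdot m=m^2$.
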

%\hangx{physical meaning of $\epsilon_1$ and $\epsilon_2$ ? }
In Theorem~\ref{main_theorem}, $\epsilon_1$, as the KL divergence of the trajectory distribution under $\mathcal{O}(\pi_0, \mathcal{B})$ and $\hat{\pi}$, indicates their similarity over the trajectory space $\Gamma$. Moreover, $\epsilon_2$ only focuses on the similarity in the replay buffer $\mathcal{B}$ of these two policies. Then we will introduce some insights into this theorem.
% Before providing the proof of Theorem~\ref{main_theorem}, we will introduce some insights of this theorem.

First, when the replay buffer size $m$ tends to infinity, our high-probability bound will converge to $\sqrt{\epsilon_1} + \epsilon_2$ rather than 0. This is reasonable because the policy $\mathcal{O}(\pi_0, \mathcal{B})$ depends on $\mathcal{B}$ and the hypothesis space in our results may be arbitrary large since the statistical complexity of practical policies might be very huge or even difficult to analyze.
Also, for better explaining that the Reuse Bias may still exist even when the sample number $m$ tends to infinity, we conduct a concrete example satisfying that $|\epsilon_{\mathrm{RE}}(\mathcal{O},\hat{\pi},\mathcal{B})| = 1$ holds for any finite $m$ and any replay buffer $\mathcal{B}$ in Appendix~\ref{appendix_example}. 

Moreover, the high-probability upper bound depends on the KL divergence of these two policies as well as on their probability over $\mathcal{B}$. Consequently, controlling their divergence, especially over $\mathcal{B}$, guides us on how to optimize the policy for controlling the Reuse Bias.

\section{Methodology}
%In this section, we will first analyze the high probability upper bound provided in Sec.~\ref{sec-theo}\junz{this analysis should be in Sec.4, while this section focuses on the new algorithm.} 
In this section, introducing the concept of stability for off-policy algorithms, we further analyze the connection between controlling $\epsilon_2$ in Theorem~\ref{main_theorem} and controlling the Reuse Bias. Moreover, we propose a general framework for controlling Reuse Bias with practical algorithms by extending actor-critic methods for complicated continuous control tasks. 

\subsection{Theoretical Analysis on Controlling Reuse Bias}
%\junz{replace by a more informative title?}
% \subsection{Bias-Regularized Importance Sampling Framework}
%In Sec.~\ref{sec-theo}, we have pointed out that the reuse of the replay buffer to optimize as well as evaluate may introduce bias to existing off-policy evaluation methods based on importance sampling. Related to the off-policy algorithm $\mathcal{O}$ and the replay buffer $\mathcal{B}$, the Reuse Bias may not be eliminated even with sufficient data.
Inaccurate off-policy evaluation caused by Reuse Bias may lead the policy optimization to the wrong direction and further affect both the performance and sample utilization. Therefore, it is necessary to develop methods to control the Reuse Bias. 

Based on Theorem~\ref{main_theorem}, we can control $\epsilon_1$ and $\epsilon_2$ to control the Reuse Bias. However, directly using the trajectories in the replay buffer to estimate $\epsilon_1$, which is the KL divergence of $p^{\mathcal{O}(\pi_0, \mathcal{B})}$ and $p^{\hat{\pi}}$, may also be biased because $\mathcal{O}(\pi_0, \mathcal{B})$ is dependent on the replay buffer. 
Thus we only consider constraining $\epsilon_2$ and use $\mathcal{L}(\pi, \mathcal{B})$ to denote it, i.e.,
\begin{equation}
\begin{split}
    \mathcal{L}(\pi, \mathcal{B}) &\triangleq \mathbb{E}_{\tau\sim\mathcal{B}} \left|1 - \frac{p^{\pi}(\tau)}{p^{\hat\pi}(\tau)} \right|.
\end{split}
\end{equation}
Moreover, we hope to analyze the Reuse Bias of policies with bounded $\mathcal{L}(\pi, \mathcal{B})$. Borrowing ideas of stability~\cite{hardt2016train}, we similarly define the concept of stability for off-policy algorithms and further provide an upper bound of stable off-policy algorithms as shown below
\begin{definition}[Stability for Off-Policy Algorithm]
\label{df_sta}
A randomized off-policy algorithm $\mathcal{O}$ is $\beta$-uniformly stable if for all Replay Buffer $\mathcal{B}, \mathcal{B}'$, such that $\mathcal{B}, \mathcal{B}'$ differ in at most one trajectory, we have
\begin{equation}
    \forall \tau, \pi_0,\quad \mathbb{E}_{\mathcal{O}}\left[p^{\mathcal{O}(\pi_0, \mathcal{B})}(\tau) - p^{\mathcal{O}(\pi_0, \mathcal{B}')}(\tau)\right]\leq\beta.
\end{equation}
\end{definition}
\begin{theorem}[Bound for the Reuse Error of Stable Algorithm, Proof in Appendix~\ref{proof-stability}]
% \begin{theorem}[Bound for the Reuse Error of Stable Algorithm, Proof in Appendix A.6]
\label{thm-stability}
Suppose a randomized off-policy algorithm $\mathcal{O}$ is $\beta$-uniformly stable, then we can prove that
\begin{equation}
    \forall \pi_0,\quad \left|\mathbb{E}_{\mathcal{B}\sim\hat{\pi}}\mathbb{E}_{\mathcal{O}}\left[\epsilon_{\mathrm{RE}}(\mathcal{O}, \pi_0, \mathcal{B})\right]\right| \leq \beta.
\end{equation}
\end{theorem}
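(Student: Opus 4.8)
The plan is to recast $\mathbb{E}_{\mathcal{B}\sim\hat\pi}\mathbb{E}_{\mathcal{O}}[\epsilon_{\mathrm{RE}}(\mathcal{O},\pi_0,\mathcal{B})]$ as the expected generalization gap of the randomized learner $\mathcal{O}$ and then adapt the uniform-stability argument of~\cite{hardt2016train}. Fix $\pi_0$ throughout, and (all behavior policies being $\hat\pi$) write $g(\pi,\tau)=\frac{p^{\pi}(\tau)}{p^{\hat\pi}(\tau)}R(\tau)$ for the per-trajectory importance-sampling contribution, so that $\hat{J}_{\hat\pi,\mathcal{B}}(\pi)=\frac{1}{m}\sum_{i=1}^{m}g(\pi,\tau_i)$ and, since importance sampling is unbiased for a \emph{fixed} policy, $J(\pi)=\mathbb{E}_{\tau\sim\hat\pi}[g(\pi,\tau)]$. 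Hence $\epsilon_{\mathrm{RE}}(\mathcal{O},\pi_0,\mathcal{B})=\frac{1}{m}\sum_{i=1}^{m}g(\mathcal{O}(\pi_0,\mathcal{B}),\tau_i)-\mathbb{E}_{\tau\sim\hat\pi}[g(\mathcal{O}(\pi_0,\mathcal{B}),\tau)]$, i.e.\ exactly ``training estimate minus true value'' for the data-dependent hypothesis $\mathcal{O}(\pi_0,\mathcal{B})$.

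First I would introduce a ghost replay buffer $\mathcal{B}'=\{\tau_i'\}_{i=1}^{m}$ drawn i.i.d.\ from $\hat\pi$ and independent of $\mathcal{B}$ and of $\mathcal{O}$'s internal randomness. Independence gives $\mathbb{E}_{\mathcal{B}}\mathbb{E}_{\mathcal{O}}[J(\mathcal{O}(\pi_0,\mathcal{B}))]=\mathbb{E}_{\mathcal{B},\mathcal{B}'}\mathbb{E}_{\mathcal{O}}[\frac{1}{m}\sum_{i=1}^{m}g(\mathcal{O}(\pi_0,\mathcal{B}),\tau_i')]$. For each index $i$ let $\mathcal{B}^{(i)}$ be $\mathcal{B}$ with its $i$-th trajectory replaced by $\tau_i'$. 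Since $\tau_i$ and $\tau_i'$ are i.i.d., relabeling them leaves the joint distribution unchanged, so $\mathbb{E}_{\mathcal{B},\mathcal{B}'}\mathbb{E}_{\mathcal{O}}[g(\mathcal{O}(\pi_0,\mathcal{B}),\tau_i')]=\mathbb{E}_{\mathcal{B},\mathcal{B}'}\mathbb{E}_{\mathcal{O}}[g(\mathcal{O}(\pi_0,\mathcal{B}^{(i)}),\tau_i)]$. Subtracting, $\mathbb{E}_{\mathcal{B}}\mathbb{E}_{\mathcal{O}}[\epsilon_{\mathrm{RE}}]=\frac{1}{m}\sum_{i=1}^{m}\mathbb{E}_{\mathcal{B},\mathcal{B}'}\mathbb{E}_{\mathcal{O}}[g(\mathcal{O}(\pi_0,\mathcal{B}),\tau_i)-g(\mathcal{O}(\pi_0,\mathcal{B}^{(i)}),\tau_i)]$, and for every $i$ the buffers $\mathcal{B}$ and $\mathcal{B}^{(i)}$ differ in at most one trajectory, which is precisely the regime of Definition~\ref{df_sta}.

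The core step is to bound each summand by $\beta$ using $\beta$-uniform stability. Expanding, $g(\mathcal{O}(\pi_0,\mathcal{B}),\tau_i)-g(\mathcal{O}(\pi_0,\mathcal{B}^{(i)}),\tau_i)=\frac{R(\tau_i)}{p^{\hat\pi}(\tau_i)}(p^{\mathcal{O}(\pi_0,\mathcal{B})}(\tau_i)-p^{\mathcal{O}(\pi_0,\mathcal{B}^{(i)})}(\tau_i))$. I would condition on all trajectories except $\tau_i$, so that $\mathcal{B}^{(i)}$ is frozen while $\mathcal{B}$ still carries $\tau_i$ in slot $i$, and then take the expectation over $\tau_i\sim\hat\pi$: sampling from $\hat\pi$ cancels the $p^{\hat\pi}(\tau_i)$ in the denominator, leaving a sum over trajectories of $R(\cdot)$ weighted by the stability difference $p^{\mathcal{O}(\pi_0,\mathcal{B})}(\cdot)-p^{\mathcal{O}(\pi_0,\mathcal{B}^{(i)})}(\cdot)$. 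Applying Definition~\ref{df_sta} pointwise in the trajectory (in both orderings of $\mathcal{B},\mathcal{B}^{(i)}$, which makes the one-sided guarantee two-sided) together with $0\le R(\tau)\le 1$ controls this quantity in absolute value by $\beta$; averaging over $i$ then yields $|\mathbb{E}_{\mathcal{B}}\mathbb{E}_{\mathcal{O}}[\epsilon_{\mathrm{RE}}]|\le\beta$.

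The main obstacle is exactly this last conversion from a pointwise stability guarantee on trajectory probabilities to a guarantee on the difference of importance-weighted returns: a worst-case bound on the importance weight $R(\tau_i)/p^{\hat\pi}(\tau_i)$ is useless, so the order of operations is essential — one must take $\mathbb{E}_{\tau_i\sim\hat\pi}$ (conditioned on the remaining trajectories) \emph{before} invoking stability, so that the behavior density absorbs the denominator and only then is the bound of Definition~\ref{df_sta} applied, with $R(\tau)\le 1$ keeping the resulting trajectory sum under control. The remaining ingredients — unbiasedness of importance sampling for a fixed policy, introducing the ghost sample, and the i.i.d.\ relabeling — are standard, and I expect them to be routine.
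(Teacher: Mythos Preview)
Your overall plan---recast the reuse bias as an expected generalization gap, introduce a ghost buffer, and reduce via i.i.d.\ relabeling to replace-one differences $\frac{1}{m}\sum_i \mathbb{E}\bigl[g(\mathcal{O}(\pi_0,\mathcal{B}),\tau_i)-g(\mathcal{O}(\pi_0,\mathcal{B}^{(i)}),\tau_i)\bigr]$---is exactly the structure of the paper's proof, and your relabeling step is correct (it is what makes the paper's ``second piece'' vanish).

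The gap is in the step you yourself flag as the main obstacle. Conditioning on all trajectories except $\tau_i$ and integrating over $\tau_i\sim\hat\pi$ gives
\[
\int R(t)\,\mathbb{E}_{\mathcal{O}}\Bigl[p^{\mathcal{O}(\pi_0,\mathcal{B}(t))}(t)-p^{\mathcal{O}(\pi_0,\mathcal{B}^{(i)})}(t)\Bigr]\,dt,
\]
where $\mathcal{B}(t)$ carries $t$ in slot $i$. Definition~\ref{df_sta} bounds the bracket by $\beta$ in absolute value \emph{for each fixed $t$}, so integrating gives only $\beta\int R(t)\,dt$, not $\beta$; the assumption $0\le R\le 1$ does not control $\int R(t)\,dt$, which is against the base measure, not a probability. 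The usual rescue---both densities integrate to one, so their signed difference integrates to zero and the $R$-weighted integral is at most $\sup R$---fails here because the first density $p^{\mathcal{O}(\pi_0,\mathcal{B}(t))}$ depends on the integration variable, and $\int p^{\mathcal{O}(\pi_0,\mathcal{B}(t))}(t)\,dt$ need not equal $1$. A one-line sanity check: with $m=1$, $\hat\pi$ uniform on $N$ trajectories, $R\equiv 1$, and $\mathcal{O}(\pi_0,\{\tau\})=\delta_\tau$, Definition~\ref{df_sta} holds with $\beta=1$ while the reuse error is $N-1$. The paper's own proof writes ``$\le\beta$'' at precisely this point without further justification, so you are not deviating from it; but as an argument, the conversion from pointwise density stability to a bound on the importance-weighted return is not sound as written, and either an additional hypothesis (e.g.\ a bound on $R(\tau)/p^{\hat\pi}(\tau)$) or a stability definition phrased directly on $g(\pi,\tau)=\frac{p^\pi(\tau)}{p^{\hat\pi}(\tau)}R(\tau)$ is needed to close it.
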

Theorem~\ref{thm-stability} controls Reuse Bias for stable off-policy algorithms Furthermore, we provide Theorem~\ref{thm_calculate_beta} to point out that, under some mild assumptions, Reuse Bias of off-policy stochastic policy gradient can be controlled just by $\mathcal{L}(\pi, \mathcal{B})$.
\begin{theorem}[Details and Proof are in Appendix~\ref{proof_calculate_beta}]
% \begin{theorem}[Details and Proof are in Appendix A.7]
\label{thm_calculate_beta}
We assume that the policy $\pi_{\theta}$ is parameterized with $\theta$, and $\left|\nabla_{\theta} \log p^{\pi_{\theta}}(\tau)\right|\leq L_1$ holds for any $\theta, \tau$, and $p^{\pi_{\theta}}(\tau)$ is $L_2$-Lipsticz to $\theta$ for any $\tau$. If we constrain the policy by $\mathcal{L}(\pi, \mathcal{B}) \leq M$, then off-policy stochastic policy gradient algorithm (detailed in Appendix ~\ref{proof_calculate_beta})
% A.7
is $\beta$-uniformly stable where $\beta$ is positively correlated with $M$, $L_1$ and $L_2$.
\end{theorem}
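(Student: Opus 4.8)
The plan is to adapt the uniform-stability argument for stochastic gradient methods~\cite{hardt2016train} to the off-policy policy-gradient setting: track how far the parameter iterates produced on two nearby replay buffers can drift, and then convert a bound on that parameter drift into a bound on the trajectory-probability drift using the $L_2$-Lipschitz assumption. Write the algorithm as $T$ updates $\theta_{t+1} = \theta_t + \alpha_t \hat g_t$ with $\hat g_t = \frac{p^{\pi_{\theta_t}}(\tau_{i_t})}{p^{\hat\pi}(\tau_{i_t})} R(\tau_{i_t})\,\nabla_\theta\log p^{\pi_{\theta_t}}(\tau_{i_t})$, where $i_t$ is drawn uniformly from $\{1,\dots,m\}$ and the iterates are kept feasible, i.e.\ $\mathcal{L}(\pi_{\theta_t},\mathcal{B})\le M$. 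First I would establish a uniform gradient bound: since $\bigl|1 - p^{\pi_\theta}(\tau_i)/p^{\hat\pi}(\tau_i)\bigr| \le \sum_{j=1}^m \bigl|1 - p^{\pi_\theta}(\tau_j)/p^{\hat\pi}(\tau_j)\bigr| = m\,\mathcal{L}(\pi_\theta,\mathcal{B}) \le mM$, we get $p^{\pi_\theta}(\tau_i)/p^{\hat\pi}(\tau_i) \le 1 + mM$, and together with bounded rewards ($0\le R(\tau)\le 1$ as in Theorem~\ref{main_theorem}) and $|\nabla_\theta\log p^{\pi_\theta}(\tau)|\le L_1$ this gives $\|\hat g_t\|\le (1+mM)L_1 =: G$. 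This is the step through which $M$ and $L_1$ enter the final $\beta$.

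Next I would set up the standard coupling. Run the algorithm on two buffers $\mathcal{B},\mathcal{B}'$ that differ only in trajectory $\tau_j$, using the same index sequence $(i_t)$ and coupled Monte-Carlo noise, and let $\delta_t = \|\theta_t - \theta_t'\|$ with $\delta_0 = 0$ (both runs start from the same $\pi_0$). Conditioning on $i_t$: if $i_t\ne j$ (probability $1-1/m$) both runs apply the \emph{same} per-sample gradient map, so $\delta_{t+1}\le (1+\alpha_t c)\,\delta_t$, where $c$ is a smoothness constant of $\theta\mapsto \frac{p^{\pi_\theta}(\tau_i)}{p^{\hat\pi}(\tau_i)}R(\tau_i)\nabla_\theta\log p^{\pi_\theta}(\tau_i)$ governed by $L_1$ and $L_2$; if $i_t = j$ (probability $1/m$) the trajectories differ, and I bound $\|\hat g_t\|,\|\hat g_t'\|\le G$ crudely to get $\delta_{t+1}\le (1+\alpha_t c)\delta_t + 2\alpha_t G$. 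Taking expectations over the algorithm's randomness yields $\mathbb{E}\,\delta_{t+1}\le (1+\alpha_t c)\,\mathbb{E}\,\delta_t + \tfrac{2\alpha_t G}{m}$, which unrolls to $\mathbb{E}\,\delta_T \le \tfrac{2G}{m}\bigl(\textstyle\sum_t \alpha_t\bigr)\exp\bigl(c\textstyle\sum_t\alpha_t\bigr) =: D$.

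Finally I would translate parameter closeness into trajectory-probability closeness: the $L_2$-Lipschitz assumption gives, for every $\tau$, $\bigl|p^{\mathcal{O}(\pi_0,\mathcal{B})}(\tau) - p^{\mathcal{O}(\pi_0,\mathcal{B}')}(\tau)\bigr| \le L_2\,\|\theta_T - \theta_T'\|$, hence $\mathbb{E}_{\mathcal{O}}\bigl[p^{\mathcal{O}(\pi_0,\mathcal{B})}(\tau) - p^{\mathcal{O}(\pi_0,\mathcal{B}')}(\tau)\bigr]\le L_2 D =: \beta$ for all $\tau,\pi_0$. Substituting $G = (1+mM)L_1$ shows $\beta = 2L_1L_2\bigl(\tfrac1m + M\bigr)\bigl(\textstyle\sum_t\alpha_t\bigr)\exp\bigl(c\textstyle\sum_t\alpha_t\bigr)$ is increasing in each of $M$, $L_1$, $L_2$ (with $c$ also an increasing function of $L_1,L_2$), which is exactly the claimed positive correlation. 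By Definition~\ref{df_sta} the algorithm is $\beta$-uniformly stable, and combining with Theorem~\ref{thm-stability} bounds the Reuse Bias by $\beta$.

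I expect the main obstacle to be the ``same-trajectory'' expansiveness factor: the policy-gradient objective is non-convex and the importance weight $p^{\pi_\theta}(\tau)/p^{\hat\pi}(\tau)$ is itself $\theta$-dependent, so the per-sample gradient map is not automatically Lipschitz from the two stated assumptions alone — one additionally needs to control a second-order quantity, such as the variation of the score $\nabla_\theta\log p^{\pi_\theta}(\tau)$ (equivalently, a smoothness bound on $p^{\pi_\theta}(\tau)$ beyond mere Lipschitzness), which the appendix presumably supplies among the ``details''. A secondary subtlety is the feasibility step: since $\mathcal{B}$ and $\mathcal{B}'$ induce slightly different feasible sets $\{\theta:\mathcal{L}(\pi_\theta,\mathcal{B})\le M\}$, one must either argue that projections of nearby points onto these nearby sets stay nearby, or sidestep the issue by enforcing the constraint through a penalty $-\lambda\,\mathcal{L}(\pi_\theta,\mathcal{B})$, whose gradient is absorbed into the same Lipschitz bookkeeping.
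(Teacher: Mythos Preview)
Your route follows the Hardt--Recht--Singer coupling template closely, and it would work, but it is \emph{not} what the paper does, and the obstacle you flag at the end (the missing second-order smoothness for the ``same-index'' expansiveness factor $c$) is genuine and is \emph{not} supplied in the appendix. The paper sidesteps that obstacle entirely by a cruder argument that never splits into same-index and different-index cases.

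Concretely, the paper bounds the gradient discrepancy at \emph{every} step by a triangle-inequality decomposition that isolates the importance ratio minus one:
\[
\Bigl\|\tfrac{p^{\pi_{\theta_k}}(\tau)}{p^{\hat\pi}(\tau)}\nabla_\theta\log p^{\pi_{\theta_k}}(\tau)R(\tau)
- \tfrac{p^{\pi_{\theta'_k}}(\tau')}{p^{\hat\pi}(\tau')}\nabla_\theta\log p^{\pi_{\theta'_k}}(\tau')R(\tau')\Bigr\|
\le \Bigl(\bigl|\tfrac{p^{\pi_{\theta_k}}(\tau)}{p^{\hat\pi}(\tau)}-1\bigr|
+ \bigl|\tfrac{p^{\pi_{\theta'_k}}(\tau')}{p^{\hat\pi}(\tau')}-1\bigr| + 2\Bigr)L_1,
\]
and then takes the expectation over the uniform sample from the buffer, so that the ratio terms become exactly $\mathcal{L}(\pi_{\theta_k},\mathcal{B})$ and $\mathcal{L}(\pi_{\theta'_k},\mathcal{B}')$, each bounded by $M$. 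This yields the simple recursion $\mathbb{E}[\delta_{k+1}]\le\mathbb{E}[\delta_k]+(2+2M)\alpha_k L_1$, hence $\mathbb{E}[\delta_T]\le(2+2M)L_1\sum_k\alpha_k$, and after the $L_2$-Lipschitz conversion $\beta=(2+2M)L_1L_2\sum_k\alpha_k$. No expansiveness constant, no exponential in $\sum_k\alpha_k$, and the argument in fact holds for \emph{arbitrary} pairs $\mathcal{B},\mathcal{B}'$, not just neighboring ones.

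What each approach buys: yours would gain a $1/m$ factor and is closer in spirit to the stability literature, but at the price of an extra smoothness hypothesis that the theorem does not grant you; the paper's approach uses only the stated $L_1$ and $L_2$ assumptions and exploits the constraint $\mathcal{L}(\pi,\mathcal{B})\le M$ in expectation (rather than your pointwise $1+mM$ bound), trading the $1/m$ factor for a proof that actually closes under the given hypotheses. Your feasibility/projection worry is likewise moot in the paper's version, since the constraint enters only through the inequality $\mathcal{L}\le M$ applied to the averaged ratio term.
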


\begin{table*}[t]
\centering
\footnotesize
% \scriptsize
\begin{tabular}{cccccccr}
\toprule
Size of Replay Buffer&Method& 5$\times$5 &  5$\times$5-random& 6$\times$6& 6$\times$6-random& 8$\times$8 &16$\times$16\\
% \hline
\hline
% \multirow{4}*{10}&PG+IS
% & 1.33 & 0.78 & 1.55 & 1.15 & 6.61 & 74.76 \\
% &PG+WIS
% & 0.80 & 0.50 & 1.25 & 0.74 & 2.00 & 4.59 \\
% &PG+IS+BIRIS
% & 0.15 & \textbf{0.18} & \textbf{0.15} & \textbf{0.20} & 0.50 & \textbf{0.23} \\
% &PG+WIS+BIRIS
% & \textbf{0.11} & \textbf{0.18} & 0.30 & 0.23 & \textbf{0.24} & 0.31 \\
% \hline
% \multirow{4}*{20}&PG+IS
% & 0.65 & 0.57 & 1.28 & 0.72 & 2.60 & 28.58 \\
% &PG+WIS
% & 0.36 & 0.37 & 0.77 & 0.60 & 1.79 & 6.27 \\
% &PG+IS+BIRIS
% & \textbf{0.15} & \textbf{0.12} & \textbf{0.22} & \textbf{0.15} & 0.36 & 0.62 \\
% &PG+WIS+BIRIS
% & 0.20 & 0.22 & \textbf{0.22} & 0.16 & \textbf{0.22} & \textbf{0.12} \\
% \hline
\multirow{4}*{30}&PG+IS
& 0.57 & 0.26 & 0.86 & 0.55 & 2.04 & 19.04 \\
&PG+WIS
& 0.36 & 0.24 & 0.72 & 0.43 & 1.50 & 5.75 \\
&PG+IS+BIRIS
& 0.19 & \textbf{0.08} & \textbf{0.11}& \textbf{0.25} & 0.47 & 0.43 \\
&PG+WIS+BIRIS
& \textbf{0.16} & 0.12 & 0.17 & 0.28 & \textbf{0.23} & \textbf{0.05} \\
\hline
\multirow{4}*{40}&PG+IS
& 0.38 & 0.24 & 0.67 & 0.29 & 1.99 & 80.62 \\
&PG+WIS
& \textbf{0.20} & 0.21 & 0.49 & 0.32 & 1.20 & 4.75 \\
&PG+IS+BIRIS
& 0.23 & 0.18 & 0.29 & 0.25 & 0.39 & \textbf{0.44} \\
&PG+WIS+BIRIS
& 0.21 & \textbf{0.14} & \textbf{0.25}& \textbf{0.24} & \textbf{0.26} & 0.51 \\
\hline
\multirow{4}*{50}&PG+IS
& 0.44 & 0.21 & 0.60 & 0.42 & 2.01 & 13.40 \\
&PG+WIS
& 0.26 & 0.22 & 0.51 & 0.31 & 1.22 & 4.65 \\
&PG+IS+BIRIS
& 0.23 & 0.20 & \textbf{0.17} & \textbf{0.11} & \textbf{0.23} & 0.26 \\
&PG+WIS+BIRIS
& \textbf{0.14} & \textbf{0.18} & 0.25 & 0.16 & 0.24 & \textbf{0.21} \\
\bottomrule
\end{tabular}
% \vspace{-0.75em}
\caption{Relative Reuse Bias of policies trained by PG+IS, PG+WIS, PG+IS+BIRIS, and PG+WIS+BIRIS in different MiniGrid environments.}
% \vspace{-1.0em}
\label{biris_table}
\end{table*}

\subsection{Bias-Regularized Importance Sampling Framework}
Based on above analyses, for any given off-policy algorithm $\mathcal{O}$, we focus on controlling $\mathcal{L}(\mathcal{O}(\pi_0, \mathcal{B}), \mathcal{B})$. To simplify the notation, we denote $\pi\triangleq\mathcal{O}(\pi_0, \mathcal{B}) $ as the policy trained by the off-policy algorithm. For any trajectory $\tau$, we have
\begin{equation}
\begin{split}
    \frac{p^{\pi}(\tau)}{p^{\hat{\pi}}(\tau)}
    =& \frac{\mu(s_0)}{\mu(s_0)} \prod_{i} \left[ \frac{\pi(a_i|s_i)\mathcal{P}(s_{i+1}|s_i, a_i)}{\hat{\pi}(a_i|s_i)\mathcal{P}(s_{i+1}|s_i, a_i)}\right]
    = \prod_{i} \frac{\pi(a_i|s_i)}{\hat{\pi}(a_i|s_i)}.
\end{split}
\end{equation}
% \begin{equation}
% \begin{split}
%     \frac{p^{\mathcal{O}(\pi_0, \mathcal{B})}(\tau)}{p^{\hat{\pi}}(\tau)}
%     =& \frac{\mu(s_0)}{\mu(s_0)} \prod_{i} \left[ \frac{\mathcal{O}(\pi_0, \mathcal{B})(a_i|s_i)\mathcal{P}(s_{i+1}|s_i, a_i)}{\hat{\pi}(a_i|s_i)\mathcal{P}(s_{i+1}|s_i, a_i)}\right]\\
%     =& \prod_{i} \frac{\mathcal{O}(\pi_0, \mathcal{B})(a_i|s_i)}{\hat{\pi}(a_i|s_i)}.
% \end{split}
% \end{equation}
Thus, we can derive $\mathcal{L}(\pi, \mathcal{B})$ as
\begin{equation}
\begin{split}
    \mathcal{L}(\pi, \mathcal{B}) &= \mathbb{E}_{\tau\sim\mathcal{B}} \left|1 - \frac{p^{\pi}(\tau)}{p^{\hat\pi}(\tau)} \right|
    = \mathbb{E}_{\tau\sim\mathcal{B}} \left|1 - \prod_{i} \frac{\pi(a_i|s_i)}{\hat{\pi}(a_i|s_i)} \right|.
\end{split}
\end{equation}
% \begin{equation}
% \begin{split}
%     \mathcal{L}(\mathcal{O}(\pi_0, \mathcal{B}), \mathcal{B}) &= \mathbb{E}_{\tau\sim\mathcal{B}} \left|1 - \frac{p^{\mathcal{O}(\pi_0, \mathcal{B})}(\tau)}{p^{\hat\pi}(\tau)} \right|\\
%     &= \mathbb{E}_{\tau\sim\mathcal{B}} \left|1 - \prod_{i} \frac{\mathcal{O}(\pi_0, \mathcal{B})(a_i|s_i)}{\hat{\pi}(a_i|s_i)} \right|.
% \end{split}
% \end{equation}
Consequently, we propose a novel off-policy reinforcement learning framework named Bias-Regularized Importance Sampling (BIRIS) for controlling the Reuse Bias as below
\begin{equation}
\begin{split}
    &\mathcal{O}_{\text{BIRIS}}(\pi_0, \mathcal{B}) = \mathop{\arg\min}_{\pi\in\mathcal{H}} \mathcal{L}_{\text{BIRIS}}(\pi,\mathcal{B}),\\
    \text{where } &\mathcal{L}_{\text{BIRIS}}(\pi,\mathcal{B}) = \mathcal{L}_{\text{RL}}(\pi, \mathcal{B}) + \alpha\mathcal{L}(\pi, \mathcal{B}),
\end{split}
\end{equation}
% \begin{equation}
% \begin{split}
%     \mathop{\min}_{\pi\in\mathcal{H}} \mathcal{L}_{\text{RL}}(\pi,\mathcal{B})\quad \mathrm{s.t.}\ \mathcal{L}_{\text{BR}}(\pi, \mathcal{B}) \leq M
% \end{split}
% \end{equation}
here $\mathcal{L}_{\text{RL}}$ denotes the nominal loss function of the basic off-policy algorithm, and $\alpha\ge 0$ is a hyperparameter to control the trade-off between standard RL loss and regularized loss. 

\subsection{Connection with Actor-Critic Methods}
\label{sec_ac}
However, since directly using the return of trajectories to optimize the policy suffers from instability, many off-policy practical methods for complicated environments consider using TD-based actor-critic methods, like Soft Actor Critic (SAC)~\cite{sac} and Twin Delayed Deep Deterministic Policy Gradient (TD3)~\cite{td3}. These methods always alternately optimize the critic and the actor as below
\begin{itemize}
    \item Optimizing the Q function via TD-learning, i.e., minimizing $\sum_{(s,a)\in\mathcal{B}} (Q_{\theta}(s,a) - Q_\text{target}(s,a))^2$
    \item Optimize the policy via the Q function, i.e., maximizing $\sum_{s\in\mathcal{B}} Q_{\theta}(s,\pi_{\phi}(s))$
\end{itemize}
Current work mainly considers the error in Q function optimization caused by out-of-distribution data. For example, Extrapolation Error~\cite{fujimoto2019off} is because erroneously estimating target Q values of unseen state-action pairs.

However, similar to our analyses above, in the policy optimization stage, reusing the state-action pairs in the replay buffer will also cause the Reuse Error, and we can also implement our BIRIS to reduce the negative impact. Moreover, for complicated environments, trajectories are so long that their probabilities are difficult to calculate and numerically unstable. If we assume $\left|\frac{\pi(a|s)}{\hat{\pi}(a|s)}\right|\leq \epsilon$ for all $(s,a)\in\mathcal{B}$ and the length of the trajectory is $T$, we can show that
\begin{equation}
    \left|\prod_{i=1}^T\frac{\pi(a_i|s_i)}{\hat{\pi}(a_i|s_i)} - 1\right|\leq \max(1-(1-\epsilon)^T, (1+\epsilon)^T-1).
\end{equation}
Consequently, for handling complicated environments, we use the state-level ratio $\mathcal{L}_{\text{BR}}$ as a surrogate
\begin{equation}
    \mathcal{L}_{\text{BR}}(\pi, \mathcal{B}) \triangleq \mathbb{E}_{(s,a)\in\mathcal{B}} \left|\frac{\pi(a|s)}{\hat{\pi}(a|s)} - 1\right|.
\end{equation}
Our $\mathcal{L}_{\text{BR}}$ has some similar points with pessimistic offline reinforcement learning methods~\cite{kumar2020conservative,liu2020provably,fujimoto2021minimalist} and is a supplement to the effectiveness of pessimistic methods based on our theoretical analyses on the view of the Reuse Bias. 
Furthermore, for handling complicated continuous control tasks, we concretize our framework to specific algorithms, like SAC and TD3, of which the core is to calculate $\frac{\pi(a|s)}{\hat{\pi}(a|s)}$ for any state-action pair $(s,a)\in\mathcal{B}$ (More details are in Appendix~\ref{appendix_biris}).

\section{Experiments}
We now present empirical results to answer the questions:
\begin{itemize}
    \item How severe is Reuse Bias in the practical experiments and can our BIRIS effectively reduce Reuse Bias?
    \item What is the empirical performance of our BIRIS for actor-critic methods in complicated continuous control tasks?
    % \item How does the choice of clipping or using the hyperparameter $\alpha$ affect the performance of our method?
\end{itemize}

\subsection{Experiment Setup}
We experiment in two different domains to answer the questions above. Before that, we first provide a short description of environments, including algorithms as well as metrics. %Additional experimental details are provided in Appendix B.

\paragraph{Gridworld.} For the first question, we experiment in MiniGrid\footnote{https://github.com/mit-acl/gym-minigrid},
% ~\cite{gym_minigrid}
%, i.e., to test whether importance sampling will overestimate the performance of the policy optimized by the replay buffer. 
% We choose MiniGrid, which includes different shapes of grids with discrete state space and action space, since its environments are simple for optimizing and evaluating our policy.
which includes different shapes of grids with discrete state space and action space, and is simple for optimizing and evaluating our policy.
We will calculate and compare the Reuse Bias of PG+IS, PG+WIS~\cite{mahmood2014weighted}, PG+IS+BIRIS, and PG+WIS+BIRIS.

\paragraph{Simulated Robotics.} To evaluate BIRIS in complicated environments, we choose several continuous control tasks from MuJoCo
% ~\cite{mujoco} 
and use two popular off-policy algorithms: SAC~\cite{sac} and TD3~\cite{td3}, as baselines, with uniform sampling and prioritization experience replay (PER)~\cite{schaul2016prioritized}. Our implementation is based on Tianshou~\cite{weng2021tianshou}.

% \begin{figure*}[t]
% \centering
% \includegraphics[height=4.5cm,width=13.2cm]{}
% \caption{Experimental results of MiniGrid. For each different grid (5$\times$5, 6$\times$6, 8$\times$8, 16$\times$16), we initialize 50 different policies and use the policy to sample 40 trajectories as our replay buffer. Then we use the replay buffer to optimize our policy and get $\pi_{\mathcal{B}}$. $J$ and $\hat{J}$ represent the expected return and the estimated return via the replay buffer of the policy $\pi_{\mathcal{B}}$ respectively.}
% \label{minigrid_fig}
% \end{figure*}

\begin{table*}[t]
\centering
\footnotesize
% \scriptsize
\begin{tabular}{cccccc}
\toprule
% \diagbox{Method}{Env}&Ant-v3&HalfCheetah-v3&Walker2d-v3&Swimmer-v3&Hopper-v3\\ %添加斜线表头
Method&Ant&HalfCheetah&Humanoid&Walker2d&InvertedPendulum\\ %添加斜线表头
% Ant&Walker&Swimmer&HalfCheetah
% \multirow{4}*{Ant}
\midrule
SAC
&5797.9$\pm$492.1
&12096.6$\pm$597.7		
&5145.4$\pm$567.4
&4581.1$\pm$541.4
&\textbf{1000.0$\pm$0.0}\\
SAC+PER
&\textbf{6133.5$\pm$269.0}
&11695.1$\pm$603.2
&4860.8$\pm$1117.1
&4320.5$\pm$392.5
&\textbf{1000.0$\pm$0.0}\\
% \hline
SAC+BIRIS
&5843.8$\pm$159.9
&\textbf{12516.5$\pm$613.3}
&\textbf{5466.1$\pm$493.9}
&\textbf{4836.3$\pm$405.6}
&\textbf{1000.0$\pm$0.0}\\
\hline
TD3
&5215.7$\pm$488.2 
&10147.6$\pm$1291.6
&5012.9$\pm$211.1
&\textbf{4223.0$\pm$350.5}
&\textbf{1000.0$\pm$0.0}\\
TD3+PER
&5351.1$\pm$530.1
&10091.4$\pm$830.3
&4365.5$\pm$608.3
&3879.6$\pm$557.2
&\textbf{1000.0$\pm$0.0}\\
TD3+BIRIS
&\textbf{5675.1$\pm$132.6}
&\textbf{10774.2$\pm$907.0}
&\textbf{5117.9$\pm$181.6}
&4189.4$\pm$485.9 &\textbf{1000.0$\pm$0.0}\\
% \hline
\bottomrule
\end{tabular}
% \vspace{-0.75em}
\caption{Cumulative reward (mean $\pm$ one std) of the best policy trained by SAC, SAC+PER, SAC+BIRIS, TD3, TD3+PER, TD3+BIRIS in different MuJoCo games. In each column, we \textbf{bold} the best performance over all algorithms.}
\label{performance_table}
% \vspace{-1.5em}
\end{table*}

\begin{figure*}[t]
\centering
\includegraphics[height=3.5cm,width=15.75cm]{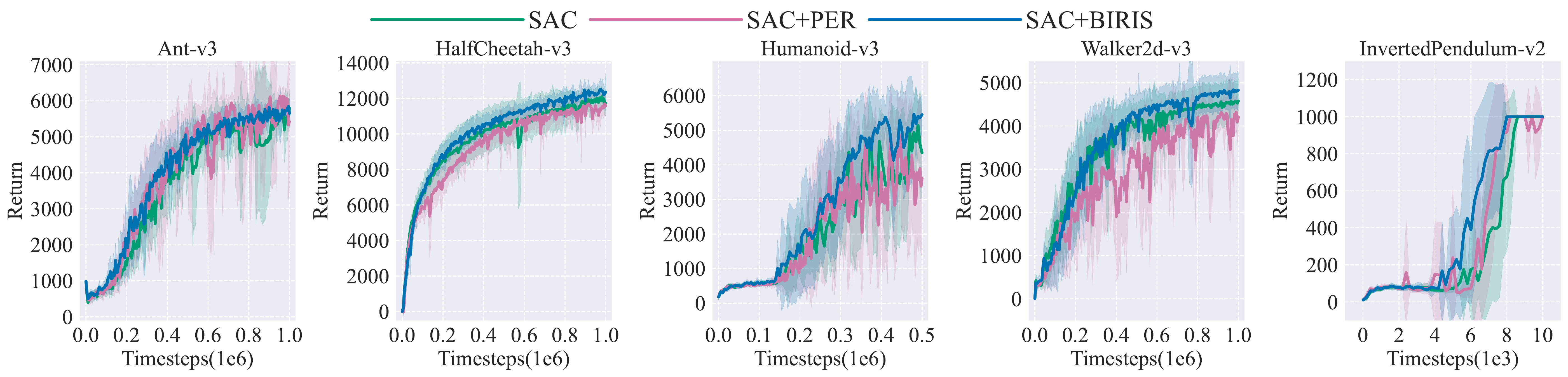}
% \vspace{-0.5em}
\includegraphics[height=3.5cm,width=15.75cm]{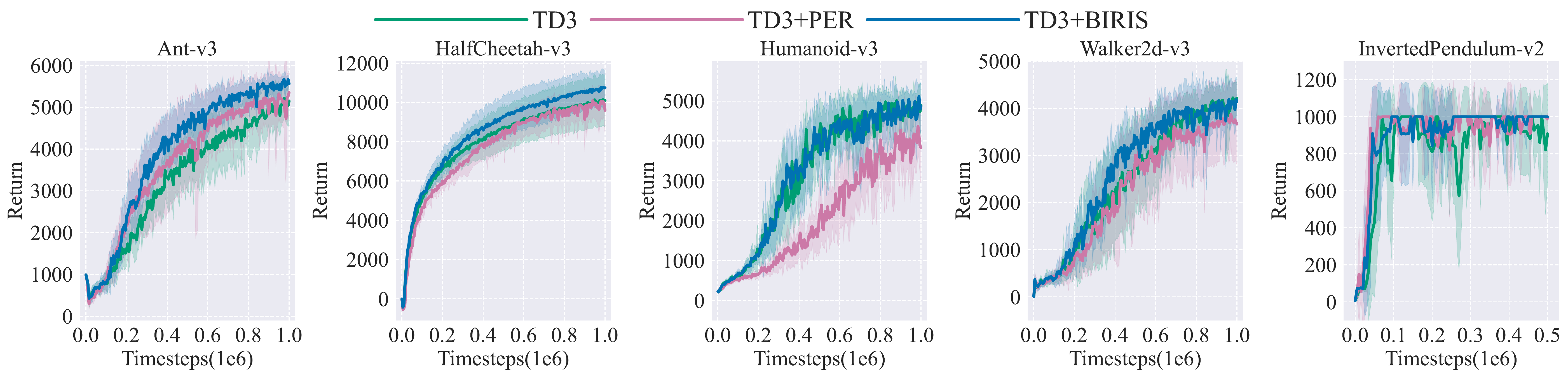}
% \vspace{-0.5em}
\caption{Cumulative reward curves for SAC, SAC+PER, SAC+BIRIS, TD3, TD3+PER, and TD3+BIRIS. The x-axes indicate the number of steps interacting with the environment, and the y-axes indicate the performance of the agent, including average rewards with std.}
% \vspace{-1.5em}
\label{mujoco_fig}
\end{figure*}

% \begin{figure*}[t]
% \centering
% \includegraphics[height=3.5cm,width=15.0cm]{figs/sac.pdf}
% % \vspace{-0.5em}
% \caption{Cumulative reward curves for SAC, SAC+PER, and SAC+BIRIS. The x-axes indicate the number of steps interacting with the environment, and the y-axes indicate the performance of the agent, including average rewards with standard deviations.}
% \label{sac_fig}
% % \vspace{-1.25em}
% \end{figure*}

% \begin{figure*}[t]
% \centering
% \includegraphics[height=3.5cm,width=15.0cm]{figs/td3.pdf}
% % \vspace{-0.5em}
% \caption{Cumulative reward curves for TD3, TD3+PER, and TD3+BIRIS. The x-axes indicate the number of steps interacting with the environment, and the y-axes indicate the performance of the agent, including average rewards with standard deviations.}
% % \vspace{-1.5em}
% \label{td3_fig}
% \end{figure*}

\subsection{Results for Reuse Bias}
\label{expe_reuse}
We first present our experimental results in MiniGrid to answer the first question, i.e., how severe the Reuse Bias is for practical off-policy evaluation and whether our BIRIS can significantly control the Reuse Bias. For different grids ($5\times 5$, $5\times 5$-random, $6\times 6$, $6\times 6$-random, $8\times 8$, $16\times 16$), we initialize our policy parameterized with a simple three-layer convolutional neural network. Moreover, we choose 30, 40, and 50 as the replay buffer size and sample trajectories to fill the replay buffer, to test its impact. Then we use PG~\cite{PG} and PG+BIRIS, respectively, to optimize the policy by maximizing the estimated return via replay buffer by IS and weighted importance sampling (WIS)~\cite{mahmood2014weighted}. In the test stage, we compute the return of our policy estimated by the replay buffer as well as the true expected return of our policy. To mitigate the impact of randomness caused by the replay buffer and sampling, we repeat our experiments for each task and each algorithm with 50 different random seeds.

We report part of our results (8$\times$8 with the size of replay buffer 30) in Fig.~\ref{1b_fig} in the form of box diagrams (All figures are in Appendix~\ref{appendix-minigrid}
% B.1
). In each subgraph, the distribution of the true return
%$J(\pi_{\mathcal{B}})$
$J$ of the optimized policy is shown on the left (filled in with blue) while the distribution of the estimated return
%$\hat{J}_{\hat{\Pi},\mathcal{B}}(\pi_{\mathcal{B}})$
$\hat{J}$ is on the right (filled in with green). In each box diagram, the horizontal lines at the top and bottom indicate the largest and smallest value respectively, while the horizontal lines at the top and bottom of the box indicate the first quartile and third quartile respectively. Moreover, the orange dotted line and the green plus sign represent the median and the expectation separately. 
In Table~\ref{biris_table}, we also report the Relative Reuse Bias, i.e., $\epsilon_{\mathrm{RB}}(\mathcal{O}, \pi_0, \mathcal{B})/ J(\mathcal{O}(\pi_0, \mathcal{B}))$ of all algorithms.
% In Table~\ref{biris_table}, we also report the Estimated Return, True Return of trained policies, and the Relative Reuse Bias, i.e., $\epsilon_{\mathrm{RB}}(\mathcal{O}, \pi_0, \mathcal{B})/ J(\mathcal{O}(\pi_0, \mathcal{B}))$ of PG and PG+BIRIS.

As shown in Fig.~\ref{1b_fig} and Table~\ref{biris_table}, the estimated return is obviously higher than the true return in all environments for PG+IS, which matches our theoretical analyses, i.e., using the same replay buffer to optimize and evaluate the policy will lead to overestimation. Furthermore, as shown in Table~\ref{biris_table}, the Relative Reuse Bias shown in this experiment is greater when the grid is larger
%\hangx{be careful about the claim, the absolute larger or relative larger. It is nontrivial to compare the values in different environments } 
and more complicated, which means that the Reuse Bias will be more serious in more complicated environments. It is probably because the policy is more likely to converge to local optimization in complicated environments. %\hangx{why more obvious for complicated environments?}

As shown in Table~\ref{biris_table}, when the replay buffer size increases, the relative Reuse Bias of PG+IS will reduce to some degree but is still large even if the size is 50. Moreover, the relative Reuse Bias of PG+WIS is lower than it of PG+IS but is still relatively large, which shows that variance-reduced methods like WIS can somehow reduce the impact of the Reuse Bias. Finally, in almost all tasks and all replay buffer sizes, the Relative Reuse Bias of PG+IS+BIRIS and PG+WIS+BIRIS are conspicuously smaller than PG+IS and PG+WIS, which means that our BIRIS can significantly reduce the Reuse Bias.

\subsection{Results for Simulated Robotics}

We now report results to evaluate BIRIS-based methods in more complicated environments against actor-critic methods. We compare the performance of our methods against common off-policy algorithms like SAC and TD3 in MuJoCo environments, including Ant, Halfcheetah, Humanoid, Walker2d, and InvertedPendulum. To mitigate the effects of randomness caused by stochastic environments and policies, we train 10 policies with different random seeds for each algorithm in each task and plot the mean with standard deviations of these 10 policies as a function of timestep in the training stage. Moreover, we use the solid line and the part with a lighter color to represent the average reward and standard deviations of 10 strategies, respectively. We also report the optimal mean and standard deviations of the cumulative return of 10 policies trained by each algorithm in each environment in Table~\ref{performance_table}. As shown in this table, PER does not outperform uniform sampling, which is also reported in~\cite{novati2019remember} for DDPG~\cite{ddpg} in MuJoCo.

In Fig.~\ref{mujoco_fig} and Table~\ref{performance_table}, we compare SAC, SAC+PER, and SAC+BIRIS. As we can see, SAC+BIRIS learns a better policy compared with SAC, i.e., SAC+BIRIS can effectively improve the performance and the sample efficiency compared with SAC. Overall, the performance of SAC+BIRIS increases more smoothly in the training stage, especially in HalfCheetah and Humanoid. This is consistent with our theoretical results, in that BIRIS can better estimate off-policy performance and provide a more precise optimization direction.

The comparison of TD3, TD3+PER, and TD3+BIRIS is reported in Fig.~\ref{mujoco_fig} and Table~\ref{performance_table}. Similarly, for all tasks, TD3+BIRIS performs better than TD3, i.e., TD3+BIRIS can improve the performance as well as the sample efficiency compared with TD3, especially in Ant and HalfCheetah. We also notice that TD3+BIRIS can conspicuously reduce the standard deviations of the policies compared with TD3 in Ant and HalfCheetah. This is also because BIRIS can better reduce the Reuse Bias and provide a more stable optimization direction. Moreover, we also do some ablation studies in Appendix~\ref{appendix_supp_expe}.
%B.2

% about the influence of clipping and the hyperparameter $\alpha$ in Appendix B.2.

\section{Conclusion}
In this paper, we first show that the bias in off-policy evaluation is problematic and introduce the concept of Reuse Bias to describe it. 
We theoretically prove the overestimation bias in off-policy evaluation, which is because of reusing historical trajectories in the same replay buffer.
Moreover, we provide a high-probability upper bound of Reuse Bias as well as an expectation upper bound of Reuse Bias for stable off-policy algorithms. Based on these analyses, we provide a framework of Bias-Regularized Importance Sampling (BIRIS) with practical algorithms for controlling Reuse Bias. 
Experimental results demonstrate that BIRIS can significantly reduce Reuse Bias and mitigating Reuse Bias via BIRIS can significantly improve the sample efficiency for off-policy methods in MuJoCo. As our modifications are independent of existing variance reduction methods, they can be easily integrated. 
%In this work, we first prove that the off-policy evaluation with the replay buffer via importance sampling of the current policy, which is optimized by the same replay buffer, is an overestimation. Moreover, we theoretically provide a high-probability bound of the Reuse Bias. For controlling the Reuse Bias and improve the performance of agents, we propose BIRIS and show that our method can significantly improve the sample efficiency for off-policy methods in MuJoCo simulator.

\newpage
\section*{Ethical Statement}
Off-policy methods are important to improve sample efficiency and apply RL in larger scenarios. This paper studies the Reuse Bias of off-policy evaluation and is beneficial for further studying more efficient and stable off-policy methods. There are no serious ethical issues as this is basic research.

\section*{Acknowledgments}
This work was supported by the National Key Research and Development Program of China (2020AAA0106302, 2020AAA0104304), NSFC Projects (Nos. 62061136001, 62106123, 62076147, U19B2034, U1811461, U19A2081, 61972224), BNRist (BNR2023RC01004), Tsinghua Institute for Guo Qiang, and the High Performance Computing Center, Tsinghua University. J.Z was also supported by the XPlorer Prize. 

% \section*{Acknowledgments}

% The preparation of these instructions and the \LaTeX{} and Bib\TeX{}
% files that implement them was supported by Schlumberger Palo Alto
% Research, AT\&T Bell Laboratories, and Morgan Kaufmann Publishers.
% Preparation of the Microsoft Word file was supported by IJCAI.  An
% early version of this document was created by Shirley Jowell and Peter
% F. Patel-Schneider.  It was subsequently modified by Jennifer
% Ballentine, Thomas Dean, Bernhard Nebel, Daniel Pagenstecher,
% Kurt Steinkraus, Toby Walsh, Carles Sierra, Marc Pujol-Gonzalez,
% Francisco Cruz-Mencia and Edith Elkind.

%% The file named.bst is a bibliography style file for BibTeX 0.99c
% \newpage
\bibliographystyle{named}
\bibliography{ref}

\begin{thebibliography}{}

\bibitem[\protect\citeauthoryear{Chandak \bgroup \em et al.\egroup
  }{2021a}]{chandak2021universal}
Yash Chandak, Scott Niekum, Bruno da~Silva, Erik Learned-Miller, Emma
  Brunskill, and Philip~S Thomas.
\newblock Universal off-policy evaluation.
\newblock {\em Advances in Neural Information Processing Systems},
  34:27475--27490, 2021.

\bibitem[\protect\citeauthoryear{Chandak \bgroup \em et al.\egroup
  }{2021b}]{chandak2021high}
Yash Chandak, Shiv Shankar, and Philip~S Thomas.
\newblock High-confidence off-policy (or counterfactual) variance estimation.
\newblock In {\em Proceedings of the AAAI Conference on Artificial
  Intelligence}, volume~35, pages 6939--6947, 2021.

\bibitem[\protect\citeauthoryear{Fujimoto and
  Gu}{2021}]{fujimoto2021minimalist}
Scott Fujimoto and Shixiang~Shane Gu.
\newblock A minimalist approach to offline reinforcement learning.
\newblock {\em Advances in neural information processing systems},
  34:20132--20145, 2021.

\bibitem[\protect\citeauthoryear{Fujimoto \bgroup \em et al.\egroup
  }{2018}]{td3}
Scott Fujimoto, Herke Hoof, and David Meger.
\newblock Addressing function approximation error in actor-critic methods.
\newblock In {\em International Conference on Machine Learning}, pages
  1587--1596. PMLR, 2018.

\bibitem[\protect\citeauthoryear{Fujimoto \bgroup \em et al.\egroup
  }{2019}]{fujimoto2019off}
Scott Fujimoto, David Meger, and Doina Precup.
\newblock Off-policy deep reinforcement learning without exploration.
\newblock In {\em International Conference on Machine Learning}, pages
  2052--2062. PMLR, 2019.

\bibitem[\protect\citeauthoryear{Fujimoto \bgroup \em et al.\egroup
  }{2021}]{fujimoto2021deep}
Scott Fujimoto, David Meger, and Doina Precup.
\newblock A deep reinforcement learning approach to marginalized importance
  sampling with the successor representation.
\newblock In {\em International Conference on Machine Learning}, pages
  3518--3529. PMLR, 2021.

\bibitem[\protect\citeauthoryear{Garc{\i}a and
  Fern{\'a}ndez}{2015}]{garcia2015comprehensive}
Javier Garc{\i}a and Fernando Fern{\'a}ndez.
\newblock A comprehensive survey on safe reinforcement learning.
\newblock {\em Journal of Machine Learning Research}, 16(1):1437--1480, 2015.

\bibitem[\protect\citeauthoryear{Haarnoja \bgroup \em et al.\egroup
  }{2018}]{sac}
Tuomas Haarnoja, Aurick Zhou, Pieter Abbeel, and Sergey Levine.
\newblock Soft actor-critic: Off-policy maximum entropy deep reinforcement
  learning with a stochastic actor.
\newblock In {\em International conference on machine learning}, pages
  1861--1870. PMLR, 2018.

\bibitem[\protect\citeauthoryear{Hanna \bgroup \em et al.\egroup
  }{2017}]{hanna2017bootstrapping}
Josiah~P Hanna, Peter Stone, and Scott Niekum.
\newblock Bootstrapping with models: Confidence intervals for off-policy
  evaluation.
\newblock In {\em Thirty-First AAAI Conference on Artificial Intelligence},
  2017.

\bibitem[\protect\citeauthoryear{Hanna \bgroup \em et al.\egroup
  }{2019}]{hanna2019importance}
Josiah Hanna, Scott Niekum, and Peter Stone.
\newblock Importance sampling policy evaluation with an estimated behavior
  policy.
\newblock In {\em International Conference on Machine Learning}, pages
  2605--2613. PMLR, 2019.

\bibitem[\protect\citeauthoryear{Hardt \bgroup \em et al.\egroup
  }{2016}]{hardt2016train}
Moritz Hardt, Ben Recht, and Yoram Singer.
\newblock Train faster, generalize better: Stability of stochastic gradient
  descent.
\newblock In {\em International conference on machine learning}, pages
  1225--1234. PMLR, 2016.

\bibitem[\protect\citeauthoryear{Jiang and Li}{2016}]{jiang2016doubly}
Nan Jiang and Lihong Li.
\newblock Doubly robust off-policy value evaluation for reinforcement learning.
\newblock In {\em International Conference on Machine Learning}, pages
  652--661. PMLR, 2016.

\bibitem[\protect\citeauthoryear{Kahn and Marshall}{1953}]{kahn1953methods}
Herman Kahn and Andy~W Marshall.
\newblock Methods of reducing sample size in monte carlo computations.
\newblock {\em Journal of the Operations Research Society of America},
  1(5):263--278, 1953.

\bibitem[\protect\citeauthoryear{Kumar \bgroup \em et al.\egroup
  }{2020a}]{kumar2020discor}
Aviral Kumar, Abhishek Gupta, and Sergey Levine.
\newblock Discor: Corrective feedback in reinforcement learning via
  distribution correction.
\newblock {\em Advances in Neural Information Processing Systems},
  33:18560--18572, 2020.

\bibitem[\protect\citeauthoryear{Kumar \bgroup \em et al.\egroup
  }{2020b}]{kumar2020conservative}
Aviral Kumar, Aurick Zhou, George Tucker, and Sergey Levine.
\newblock Conservative q-learning for offline reinforcement learning.
\newblock {\em Advances in Neural Information Processing Systems},
  33:1179--1191, 2020.

\bibitem[\protect\citeauthoryear{Kuzborskij \bgroup \em et al.\egroup
  }{2021}]{kuzborskij2021confident}
Ilja Kuzborskij, Claire Vernade, Andras Gyorgy, and Csaba Szepesv{\'a}ri.
\newblock Confident off-policy evaluation and selection through self-normalized
  importance weighting.
\newblock In {\em International Conference on Artificial Intelligence and
  Statistics}, pages 640--648. PMLR, 2021.

\bibitem[\protect\citeauthoryear{Li \bgroup \em et al.\egroup
  }{2015}]{li2015toward}
Lihong Li, R{\'e}mi Munos, and Csaba Szepesv{\'a}ri.
\newblock Toward minimax off-policy value estimation.
\newblock In {\em Artificial Intelligence and Statistics}, pages 608--616.
  PMLR, 2015.

\bibitem[\protect\citeauthoryear{Lillicrap \bgroup \em et al.\egroup
  }{2016}]{ddpg}
Timothy~P. Lillicrap, Jonathan~J. Hunt, Alexander Pritzel, Nicolas Heess, Tom
  Erez, Yuval Tassa, David Silver, and Daan Wierstra.
\newblock Continuous control with deep reinforcement learning.
\newblock In {\em International Conference on Learning Representations (ICLR)},
  2016.

\bibitem[\protect\citeauthoryear{Liu \bgroup \em et al.\egroup
  }{2018}]{liu2018breaking}
Qiang Liu, Lihong Li, Ziyang Tang, and Dengyong Zhou.
\newblock Breaking the curse of horizon: infinite-horizon off-policy
  estimation.
\newblock In {\em Proceedings of the 32nd International Conference on Neural
  Information Processing Systems}, pages 5361--5371, 2018.

\bibitem[\protect\citeauthoryear{Liu \bgroup \em et al.\egroup
  }{2020}]{liu2020provably}
Yao Liu, Adith Swaminathan, Alekh Agarwal, and Emma Brunskill.
\newblock Provably good batch off-policy reinforcement learning without great
  exploration.
\newblock {\em Advances in neural information processing systems},
  33:1264--1274, 2020.

\bibitem[\protect\citeauthoryear{Liu \bgroup \em et al.\egroup
  }{2021}]{liu2021regret}
Xu-Hui Liu, Zhenghai Xue, Jingcheng Pang, Shengyi Jiang, Feng Xu, and Yang Yu.
\newblock Regret minimization experience replay in off-policy reinforcement
  learning.
\newblock {\em Advances in Neural Information Processing Systems},
  34:17604--17615, 2021.

\bibitem[\protect\citeauthoryear{Mahmood \bgroup \em et al.\egroup
  }{2014}]{mahmood2014weighted}
A~Rupam Mahmood, Hado~P van Hasselt, and Richard~S Sutton.
\newblock Weighted importance sampling for off-policy learning with linear
  function approximation.
\newblock {\em Advances in Neural Information Processing Systems},
  27:3014--3022, 2014.

\bibitem[\protect\citeauthoryear{McAllester}{1998}]{McAllester98}
David~A. McAllester.
\newblock Some pac-bayesian theorems.
\newblock In Peter~L. Bartlett and Yishay Mansour, editors, {\em Proceedings of
  the Eleventh Annual Conference on Computational Learning Theory, {COLT} 1998,
  Madison, Wisconsin, USA, July 24-26, 1998}, pages 230--234. {ACM}, 1998.

\bibitem[\protect\citeauthoryear{McAllester}{2003}]{McAllester03}
David~A. McAllester.
\newblock Simplified pac-bayesian margin bounds.
\newblock In Bernhard Sch{\"{o}}lkopf and Manfred~K. Warmuth, editors, {\em
  Computational Learning Theory and Kernel Machines, 16th Annual Conference on
  Computational Learning Theory and 7th Kernel Workshop, COLT/Kernel 2003,
  Washington, DC, USA, August 24-27, 2003, Proceedings}, volume 2777 of {\em
  Lecture Notes in Computer Science}, pages 203--215. Springer, 2003.

\bibitem[\protect\citeauthoryear{Metelli \bgroup \em et al.\egroup
  }{2018}]{metelli2018policy}
Alberto~Maria Metelli, Matteo Papini, Francesco Faccio, and Marcello Restelli.
\newblock Policy optimization via importance sampling.
\newblock {\em Advances in Neural Information Processing Systems}, 31, 2018.

\bibitem[\protect\citeauthoryear{Metelli \bgroup \em et al.\egroup
  }{2021}]{metelli2021subgaussian}
Alberto~Maria Metelli, Alessio Russo, and Marcello Restelli.
\newblock Subgaussian and differentiable importance sampling for off-policy
  evaluation and learning.
\newblock {\em Advances in Neural Information Processing Systems},
  34:8119--8132, 2021.

\bibitem[\protect\citeauthoryear{Mnih \bgroup \em et al.\egroup
  }{2015}]{atari_nature}
Volodymyr Mnih, Koray Kavukcuoglu, David Silver, Andrei~A. Rusu, Joel Veness,
  Marc~G. Bellemare, Alex Graves, Martin~A. Riedmiller, Andreas Fidjeland,
  Georg Ostrovski, Stig Petersen, Charles Beattie, Amir Sadik, Ioannis
  Antonoglou, Helen King, Dharshan Kumaran, Daan Wierstra, Shane Legg, and
  Demis Hassabis.
\newblock Human-level control through deep reinforcement learning.
\newblock {\em Nature}, 518(7540):529--533, 2015.

\bibitem[\protect\citeauthoryear{Novati and
  Koumoutsakos}{2019}]{novati2019remember}
Guido Novati and Petros Koumoutsakos.
\newblock Remember and forget for experience replay.
\newblock In {\em International Conference on Machine Learning}, pages
  4851--4860. PMLR, 2019.

\bibitem[\protect\citeauthoryear{Pavse \bgroup \em et al.\egroup
  }{2020}]{pavse2020reducing}
Brahma Pavse, Ishan Durugkar, Josiah Hanna, and Peter Stone.
\newblock Reducing sampling error in batch temporal difference learning.
\newblock In {\em International Conference on Machine Learning}, pages
  7543--7552. PMLR, 2020.

\bibitem[\protect\citeauthoryear{Precup \bgroup \em et al.\egroup
  }{2000}]{precup2000eligibility}
Doina Precup, Richard~S Sutton, and Satinder~P Singh.
\newblock Eligibility traces for off-policy policy evaluation.
\newblock In {\em ICML}, 2000.

\bibitem[\protect\citeauthoryear{Precup \bgroup \em et al.\egroup
  }{2001}]{precup2001off}
Doina Precup, Richard~S Sutton, and Sanjoy Dasgupta.
\newblock Off-policy temporal-difference learning with function approximation.
\newblock In {\em ICML}, pages 417--424, 2001.

\bibitem[\protect\citeauthoryear{Schaul \bgroup \em et al.\egroup
  }{2016}]{schaul2016prioritized}
Tom Schaul, John Quan, Ioannis Antonoglou, and David Silver.
\newblock Prioritized experience replay.
\newblock In {\em ICLR (Poster)}, 2016.

\bibitem[\protect\citeauthoryear{Shalev-Shwartz and
  Ben-David}{2014}]{unserstandingml}
Shai Shalev-Shwartz and Shai Ben-David.
\newblock {\em Understanding Machine Learning: From Theory to Algorithms}.
\newblock Cambridge University Press, USA, 2014.

\bibitem[\protect\citeauthoryear{Shi \bgroup \em et al.\egroup
  }{2021}]{shi2021deeply}
Chengchun Shi, Runzhe Wan, Victor Chernozhukov, and Rui Song.
\newblock Deeply-debiased off-policy interval estimation.
\newblock In {\em International Conference on Machine Learning}, pages
  9580--9591. PMLR, 2021.

\bibitem[\protect\citeauthoryear{Su \bgroup \em et al.\egroup
  }{2020}]{su2020doubly}
Yi~Su, Maria Dimakopoulou, Akshay Krishnamurthy, and Miroslav Dud{\'\i}k.
\newblock Doubly robust off-policy evaluation with shrinkage.
\newblock In {\em International Conference on Machine Learning}, pages
  9167--9176. PMLR, 2020.

\bibitem[\protect\citeauthoryear{Sutton \bgroup \em et al.\egroup }{2000}]{PG}
Richard~S Sutton, David~A McAllester, Satinder~P Singh, and Yishay Mansour.
\newblock Policy gradient methods for reinforcement learning with function
  approximation.
\newblock In {\em Advances in neural information processing systems (NeurIPS)},
  pages 1057--1063, 2000.

\bibitem[\protect\citeauthoryear{Thomas and Brunskill}{2016}]{thomas2016data}
Philip Thomas and Emma Brunskill.
\newblock Data-efficient off-policy policy evaluation for reinforcement
  learning.
\newblock In {\em International Conference on Machine Learning}, pages
  2139--2148. PMLR, 2016.

\bibitem[\protect\citeauthoryear{Thomas \bgroup \em et al.\egroup
  }{2015}]{thomas2015high}
Philip Thomas, Georgios Theocharous, and Mohammad Ghavamzadeh.
\newblock High-confidence off-policy evaluation.
\newblock In {\em Proceedings of the AAAI Conference on Artificial
  Intelligence}, volume~29, 2015.

\bibitem[\protect\citeauthoryear{Van~Hasselt \bgroup \em et al.\egroup
  }{2016}]{van2016deep}
Hado Van~Hasselt, Arthur Guez, and David Silver.
\newblock Deep reinforcement learning with double q-learning.
\newblock In {\em Proceedings of the AAAI conference on artificial
  intelligence}, volume~30, 2016.

\bibitem[\protect\citeauthoryear{Weng \bgroup \em et al.\egroup
  }{2022}]{weng2021tianshou}
Jiayi Weng, Huayu Chen, Dong Yan, Kaichao You, Alexis Duburcq, Minghao Zhang,
  Yi~Su, Hang Su, and Jun Zhu.
\newblock Tianshou: A highly modularized deep reinforcement learning library.
\newblock {\em Journal of Machine Learning Research}, 23(267):1--6, 2022.

\bibitem[\protect\citeauthoryear{Xie \bgroup \em et al.\egroup
  }{2019}]{xie2019towards}
T~Xie, Y~Ma, and YX~Wang.
\newblock Towards optimal off-policy evaluation for reinforcement learning with
  marginalized importance sampling.
\newblock {\em Advances in neural information processing systems}, 2019.

\bibitem[\protect\citeauthoryear{Ying \bgroup \em et al.\egroup
  }{2022}]{ying2022towards}
ChengYang Ying, Xinning Zhou, Hang Su, Dong Yan, Ning Chen, and Jun Zhu.
\newblock Towards safe reinforcement learning via constraining conditional
  value-at-risk.
\newblock In Lud~De Raedt, editor, {\em Proceedings of the Thirty-First
  International Joint Conference on Artificial Intelligence, {IJCAI-22}}, pages
  3673--3680. International Joint Conferences on Artificial Intelligence
  Organization, 7 2022.
\newblock Main Track.

\bibitem[\protect\citeauthoryear{Zhang \bgroup \em et al.\egroup
  }{2021}]{zhang2021convergence}
Junyu Zhang, Chengzhuo Ni, Csaba Szepesvari, Mengdi Wang, et~al.
\newblock On the convergence and sample efficiency of variance-reduced policy
  gradient method.
\newblock {\em Advances in Neural Information Processing Systems},
  34:2228--2240, 2021.

\end{thebibliography}

\newpage
\appendix

\section{Proof of Theorems}
In this section, we will provide detailed proof of theorems in the paper.
\subsection{Proof of Theorem~\ref{reused_theorem}}
% \subsection{Proof of Theorem 1}
\begin{proof} 
\label{reused_proof}
%\junz{if the proof details don't help understand this paper, we can defer to appendix and make the main text more readable.} \ycy{we add the proof here since this theorem explain the core concern of this paper and the proof is relatively short}
To simplify symbols, we set $\pi_{\mathcal{B}} = \mathcal{O}^*(\pi_0, \mathcal{B})$. First, we re-sample a dataset $\mathcal{B}'=\{\tau_i'\}_{i=1}^m\sim\hat{\Pi}$, which is independent of $\pi_{\mathcal{B}}$. Thus we have
\begin{equation}
\begin{split}
    \mathbb{E}_{\mathcal{B}\sim\hat{\Pi}}J(\pi_{\mathcal{B}})
    =& \mathbb{E}_{\mathcal{B}\sim\hat{\Pi}}\left[ \mathbb{E}_{\tau\sim \pi_{\mathcal{B}}} [R(\tau)]\right]\\
    =& \mathbb{E}_{\mathcal{B}\sim\hat{\Pi}}\left[  \frac{1}{m}\sum_{i=1}^m\left[\mathbb{E}_{\tau_i'\sim \hat{\pi}_{i}} \frac{p^{\pi_{\mathcal{B}}}(\tau_i')}{p^{\hat\pi_i}(\tau_i')}  R(\tau_i')\right]\right]\\
    =& \mathbb{E}_{\mathcal{B}\sim\hat{\Pi}}\left[ \mathbb{E}_{\mathcal{B}'\sim \hat{\Pi}} \frac{1}{m}\sum_{i=1}^m\left[\frac{p^{\pi_{\mathcal{B}}}(\tau_i')}{p^{\hat\pi_i}(\tau_i')}  R(\tau_i')\right]\right]. 
\end{split}
\end{equation}
Considering the fact that $\pi_{\mathcal{B}'}$ is the optimal over the dataset $\mathcal{B}'$, we have 
\begin{equation}
\begin{split}
    \mathbb{E}_{\mathcal{B}\sim\hat{\Pi}}J(\pi_{\mathcal{B}})
    \leq& \mathbb{E}_{\mathcal{B}\sim\hat{\Pi}}\left[ \mathbb{E}_{\mathcal{B}'\sim \hat{\Pi}} \frac{1}{m}\sum_{i=1}^m\left[\frac{p^{\pi_{\mathcal{B}'}}(\tau_i')}{p^{\hat\pi_i}(\tau_i')}  R(\tau_i')\right]\right]\\
    \overset{1}=& \mathbb{E}_{\mathcal{B}'\sim \hat{\Pi}} \frac{1}{m}\sum_{i=1}^m\left[\frac{p^{\pi_{\mathcal{B}'}}(\tau_i')}{p^{\hat\pi_i}(\tau_i')}  R(\tau_i')\right]\\
    =& \mathbb{E}_{\mathcal{B}'\sim\hat{\Pi}} \hat{J}_{\hat{\Pi}, \mathcal{B}'}(\pi_{\mathcal{B}'}) 
    = \mathbb{E}_{\mathcal{B}\sim\hat{\Pi}} \hat{J}_{\hat{\Pi}, \mathcal{B}}(\pi_{\mathcal{B}}).
\end{split}
\end{equation}
It is noted that equality 1 holds because the inner part is independent of the dataset $\mathcal{B}$. 

Thus we have proven that $\mathbb{E}_{\mathcal{B}\sim\hat{\Pi}}\hat{J}_{\hat{\Pi}, \mathcal{B}}(\pi_{\mathcal{B}}) \ge \mathbb{E}_{\mathcal{B}\sim\hat{\Pi}}J(\pi_{\mathcal{B}})$ and further $\mathbb{E}_{\mathcal{B}\sim\hat{\Pi}}\left[\epsilon_{\text{RB}}(\mathcal{O}^*, \pi_0, \mathcal{B}\right] \ge 0$. If the equality holds, then for any $\mathcal{B}, \mathcal{B}'\sim\hat{\Pi}$, we have:
\begin{equation}
\begin{split}
    &\hat{J}_{\hat{\Pi}, \mathcal{B}}(\pi_{\mathcal{B}})\\
    = &  \frac{1}{m}\sum_{i=1}^m\left[\frac{p^{\pi_{\mathcal{B}}}(\tau_i')}{p^{\hat\pi_i}(\tau_i')}  R(\tau_i')\right]
    = \frac{1}{m}\sum_{i=1}^m\left[\frac{p^{\pi_{\mathcal{B}'}}(\tau_i')}{p^{\hat\pi_i}(\tau_i')}  R(\tau_i')\right]\\
    = & \hat{J}_{\hat{\Pi}, \mathcal{B}'}(\pi_{\mathcal{B}'})
    = \mathop{\arg\max}_{\pi\in\mathcal{H}} \hat{J}_{\hat{\Pi}, \mathcal{B}'}(\pi).
\end{split}
\end{equation} 
In other words, $\mathcal{O}^*(\pi_0, \mathcal{B}) = \mathop{\arg\max}_{\pi\in\mathcal{H}} \hat{J}_{\hat{\Pi}, \mathcal{B}'}(\pi)$ holds for any $\mathcal{B}, \mathcal{B}'\sim\hat{\Pi}$. %Thus we have proven this theorem.
\end{proof} 

\subsection{Proof of Theorem~\ref{thm_one_pg}}
% \subsection{Proof of Theorem 2}
\begin{proof}
\label{proof_one_pg}
Since $\pi_{\theta}$ is differentiable to the parameter $\theta$, we have
\begin{equation}
\begin{split}
    \hat{J}_{\hat{\Pi}, \mathcal{B}}(\pi_{\theta'}) - \hat{J}_{\hat{\Pi}, \mathcal{B}}(\pi_{\theta}) =& \left(\theta' - \theta\right)^\top \nabla_{\theta} \hat{J}_{\hat{\Pi}, \mathcal{B}}(\pi_{\theta})\\
    +& O(\|\theta' - \theta\|^2)\\
    J(\pi_{\theta'}) - J(\pi_{\theta}) =& \left(\theta' - \theta\right)^\top \nabla_{\theta} J(\pi_{\theta})\\
    +& O(\|\theta' - \theta\|^2).
\end{split}
\end{equation}
Since $\theta' - \theta = \alpha \nabla_{\theta} \hat{J}_{\hat{\Pi}, \mathcal{B}}(\pi_{\theta})$ and $\pi_{\theta}$ is independent of $\mathcal{B}$, we can derive that
\begin{equation}
\begin{split}
    &\mathbb{E}_{\mathcal{B}\sim\hat{\Pi}} \left[ \hat{J}_{\hat{\Pi}, \mathcal{B}}(\pi_{\theta'}) - J(\pi_{\theta'}) \right]\\
    =&\mathbb{E}_{\mathcal{B}\sim\hat{\Pi}} \left[ \left(\hat{J}_{\hat{\Pi}, \mathcal{B}}(\pi_{\theta'}) - \hat{J}_{\hat{\Pi}, \mathcal{B}}(\pi_{\theta})\right) - \left(J(\pi_{\theta'}) - J(\pi_{\theta})\right) \right]\\
    =& \alpha \mathbb{E}_{\mathcal{B}\sim\hat{\Pi}} \left[\nabla_{\theta} \hat{J}_{\hat{\Pi}, \mathcal{B}}(\pi_{\theta})^\top \nabla_{\theta} \hat{J}_{\hat{\Pi}, \mathcal{B}}(\pi_{\theta}) - \nabla_{\theta} \hat{J}_{\hat{\Pi}, \mathcal{B}}(\pi_{\theta})^\top \nabla_{\theta} J(\pi_{\theta})  \right]\\
    +& O(\alpha^2).
\end{split}
\end{equation}
Since $\nabla_{\theta} \hat{J}_{\mathcal{D}}(\pi_{\theta})$ as the function of $\mathcal{D}$ is not constant and $\pi_{\theta}$ is independent of $\mathcal{B}$, we can prove
\begin{equation}
\begin{split}
    &\mathbb{E}_{\mathcal{B}\sim\hat{\Pi}} \left[ \nabla_{\theta} \hat{J}_{\hat{\Pi}, \mathcal{B}}(\pi_{\theta})^\top \nabla_{\theta} J(\pi_{\theta}) \right]\\
    =& \mathbb{E}_{\mathcal{B}\sim\hat{\Pi}} \left[ \nabla_{\theta} \hat{J}_{\hat{\Pi}, \mathcal{B}}(\pi_{\theta}) \right]^\top \nabla_{\theta} J(\pi_{\theta})\\
    =& \mathbb{E}_{\mathcal{B}\sim\hat{\Pi}} \left[ \nabla_{\theta} \hat{J}_{\hat{\Pi}, \mathcal{B}}(\pi_{\theta}) \right]^\top \mathbb{E}_{\mathcal{B}\sim\hat{\Pi}} \left[ \nabla_{\theta} \hat{J}_{\hat{\Pi}, \mathcal{B}}(\pi_{\theta}) \right]\\
    <& \mathbb{E}_{\mathcal{B}\sim\hat{\Pi}} \left[ \nabla_{\theta} \hat{J}_{\hat{\Pi}, \mathcal{B}}(\pi_{\theta})^\top \nabla_{\theta} \hat{J}_{\hat{\Pi}, \mathcal{B}}(\pi_{\theta}) \right].
\end{split}
\end{equation}
Consequently, when $\alpha>0$ is sufficient small, we have
\begin{equation}
    \mathbb{E}_{\mathcal{B}\sim\hat{\Pi}} \left[\hat{J}_{\hat{\Pi}, \mathcal{B}}(\pi_{\theta'}) - J(\pi_{\theta'})\right] > 0.
\end{equation}
\end{proof}

\subsection{Proof of Theorem~\ref{optimal_theorem}}
% \subsection{Proof of Theorem 3}
\begin{proof}
\label{proof_optimal_theorem}
For any $n, M\ge 1, \epsilon > 0$, we take \textbf{positive integers} $a<b$ satisfying that
\begin{equation}
    \frac{a}{b} \ge (1-\epsilon)^{\frac{1}{n+1}}
\end{equation}
Then we take \textbf{positive integers} $x, M_1, M_2$ satisfying that
\begin{equation}
\begin{split}
    x & \ge \frac{Mn}{b(1-\left(\frac{a}{b}\right)^n)} \\
    M_1 &= (b-a)x \\
    M_2 &= ax.
\end{split}
\end{equation}
Then we consider an environment with $1+M_1+M_2$ states $s_0, s_1, ..., s_{M_1+M_2}$ and $M_1+M_2$ actions $a_1, a_2, ..., a_{M_1+M_2}$. The initial state is $s_0$ and other $M_1+M_2$ states are all terminal states. Moreover, we define its transition probability and reward function as
\begin{equation}
\begin{split}
    \mathcal{P}(s_i|s_0, a_j) = 1, &i = 0, 1, ..., M_1+M_2,\\
    &j = 1, 2, ..., M_1+M_2, i = j\\
    \mathcal{P}(s_i|s_0, a_j) = 0, &i = 0, 1, ..., M_1+M_2,\\
    &j = 1, 2, ..., M_1+M_2, i \neq j\\
    \mathcal{R}(s_0, a_j) \triangleq r_j = 1, &j = 1, 2, ..., M_1,\\
    \mathcal{R}(s_0, a_j)  \triangleq r_j = 0, &j = M_1+1, M_1+2, ..., M_1+M_2.
\end{split}
\end{equation}
We take the original policy $\pi_0 = \hat{\pi}$ as the uniform policy satisfying that $\pi_0(a_j|s_0) = \hat{\pi}(a_j|s_0) = \frac{1}{M_1 + M_2}, j = 1, 2, ..., M_1+M_2$, and we take the size of the Replay Buffer $\mathcal{B}$ as $n$. Obviously, the return of the optimal policy in this environment is $J(\pi^*) = 1$. We set $p=\frac{M_2}{M_1+M_2}$ and take two action sets as $A=\{a_1,a_2,...,a_{M_1}\}, B=\{a_{M_1+1}, ..., a_{M_1+M_2}\}$.

Consequently, we have $\mathcal{B} = \{(s_0, a^i, r^i, s^i)\}_{i=1}^n$. In the first case, with the probability of $p^n$, we have $a^i\in B$ for all $i = 1, 2, ..., n$. In this case, since all $r^i = 0, i = 1, 2, ..., n$, we have $\mathcal{O}^*(\pi_0, \mathcal{B}) = \pi_0$, i.e.,
\begin{equation}
\begin{split}
    J(\mathcal{O}^*(\pi_0, \mathcal{B})) &= J(\pi_0) = \frac{M_1}{M_1+M_2} = 1-p\\
    \hat{J}_{\hat{\Pi}, \mathcal{B}}(\mathcal{O}^*(\pi_0, \mathcal{B})) &=  \frac{1}{n}\sum_{i=1}^n \frac{\mathcal{O}^*(\pi_0, \mathcal{B})(a^i|s_0)}{\hat{\pi}(a^i|s_0)} r^i = 0.
\end{split}
\end{equation}
In other cases, there exists $k(k>0)$ pairs $i_1, ..., i_k \in \{1,2,...,n\}$, $r^{i_1} = r^{i_2} = ... = r^{i_k} = 1$, we have $\sum_{t=1}^k \mathcal{O}^*(\pi_0, \mathcal{B})(a^{i_t}|s_0) = 1$, Thus we have,
\begin{equation}
\begin{split}
    J(\mathcal{O}^*(\pi_0, \mathcal{B})) &= 1\\
    \hat{J}_{\hat{\Pi}, \mathcal{B}}(\mathcal{O}^*(\pi_0, \mathcal{B})) &=  \frac{1}{n}\sum_{i=1}^n \frac{\mathcal{O}^*(\pi_0, \mathcal{B})(a^i|s_0)}{\hat{\pi}(a^i|s_0)} r^i = \frac{M_1+M_2}{n}.
\end{split}
\end{equation} 
Consequently, we have
\begin{equation}
\begin{split}
    &\mathbb{E}_{\mathcal{B}\sim\hat{\Pi}}\left[J(\mathcal{O}^*(\pi_0, \mathcal{B}))\right]\\
    = & p^n \times (1-p) + (1-p^n) \times 1\\
    = & p^n -p^{n+1} + 1-p^n\\
    = & 1 -p^{n+1},\\
    & \mathbb{E}_{\mathcal{B}\sim\hat{\Pi}}\left[\hat{J}_{\hat{\Pi}, \mathcal{B}}(\mathcal{O}^*(\pi_0, \mathcal{B}))\right]\\
    = & p^n \times 0 + (1-p^n) \times \frac{M_1+M_2}{n} \\
    = &  (1-p^n) \times \frac{M_1+M_2}{n}.
\end{split}
\end{equation}
And we have 
\begin{equation}
\begin{split}
    p &= \frac{M_2}{M_1+M_2} = \frac{ax}{ax + (b-a)x} = \frac{a}{b} \ge (1-\epsilon)^{\frac{1}{n+1}}.
\end{split}
\end{equation}
Thus we can prove that
\begin{equation}
\begin{split}
    &\mathbb{E}_{\mathcal{B}\sim\hat{\Pi}}\left[J(\mathcal{O}^*(\pi_0, \mathcal{B}))\right]\\
    = & 1 -p^{n+1}\\
    \leq & 1 - (1-\epsilon) = \epsilon,\\
    & \mathbb{E}_{\mathcal{B}\sim\hat{\Pi}}\left[\hat{J}_{\hat{\Pi}, \mathcal{B}}(\mathcal{O}^*(\pi_0, \mathcal{B}))\right]\\
    = &  (1-p^n) \times \frac{M_1+M_2}{n}\\
    = &  (1-\left(\frac{a}{b}\right)^n) \times \frac{(b-a)x + ax}{n}\\
    = &  (1-\left(\frac{a}{b}\right)^n) \times \frac{b}{n} x \ge M = M J(\pi^*).
\end{split}
\end{equation}
Thus we have proven this result.

% To simplify symbols, we set $\pi_{\tau} = \mathcal{O}^*(\pi_0, \tau)$. First, we set
% \begin{equation}
% \begin{split}
%      p(\tau)& = \mu(s_0)\prod_{i=0}^{\infty}\mathcal{P}(s_{i+1}|s_i,a_i),\ \
%      \hat{p}^{\pi}(\tau) = \prod_{i=0}^{\infty}\pi(a_i|s_i).
% \end{split}
% \end{equation}
% We know that $p^{\pi}(\tau) = p(\tau)\hat{p}^{\pi}(\tau)$ and $\hat{p}^{\pi}(\tau)\in[0,1]$ for any $\tau$ and $\pi$. Then for any $\tau$ that satisfies $R(\tau) > 0$, we have $\hat{p}^{\pi_{\tau}}(\tau) = 1$. Thus, we have
% \begin{equation}
% \begin{split}
%     &\mathbb{E}_{\tau\sim\hat{\pi}}\left[\hat{J}_{\hat{\pi}, \tau}(\mathcal{O}^*(\pi_0, \tau))\right]\\
%     =&  \mathbb{E}_{\tau\sim\hat{\pi}}\left[\frac{p^{\pi_{\tau}}(\tau)}{p^{\hat{\pi}}(\tau)}R(\tau)\right]
%     = \int_{\Gamma} p^{\pi_{\tau}}(\tau)R(\tau) d\tau \\
%     =& \int_{\Gamma} p(\tau)R(\tau) d\tau
%     \ge \int_{\Gamma} \hat{p}^{\pi^*}(\tau) p(\tau)R(\tau) d\tau = J(\pi^*).
% \end{split}
% \end{equation}
% If the equivalence holds, since $p(\tau) > 0$ holds for any $\tau$, we have $\hat{p}^{\pi^*}(\tau) = 1$ if $R(\tau) > 0$. Consequently, for any state-action pair $(s,a)$ satisfying $\mathcal{R}(s,a) > 0$, we have $R(\tau)>0$ if $(s,a)$ is in the trajectory $\tau$ and further we have that $\hat{p}^{\pi^*}(\tau) = 1$, which derives that $\pi^*(a|s) = 1$. Thus, we have proven this theorem. 
\end{proof}

%%%%% proof of origin theorem 3
% \begin{proof}
% \label{proof_optimal_theorem}
% To simplify symbols, we set $\pi_{\tau} = \mathcal{O}^*(\pi_0, \tau)$. First, we set
% \begin{equation}
% \begin{split}
%      p(\tau)& = \mu(s_0)\prod_{i=0}^{\infty}\mathcal{P}(s_{i+1}|s_i,a_i),\ \
%      \hat{p}^{\pi}(\tau) = \prod_{i=0}^{\infty}\pi(a_i|s_i).
% \end{split}
% \end{equation}
% We know that $p^{\pi}(\tau) = p(\tau)\hat{p}^{\pi}(\tau)$ and $\hat{p}^{\pi}(\tau)\in[0,1]$ for any $\tau$ and $\pi$. Then for any $\tau$ that satisfies $R(\tau) > 0$, we have $\hat{p}^{\pi_{\tau}}(\tau) = 1$. Thus, we have
% \begin{equation}
% \begin{split}
%     &\mathbb{E}_{\tau\sim\hat{\pi}}\left[\hat{J}_{\hat{\pi}, \tau}(\mathcal{O}^*(\pi_0, \tau))\right]\\
%     =&  \mathbb{E}_{\tau\sim\hat{\pi}}\left[\frac{p^{\pi_{\tau}}(\tau)}{p^{\hat{\pi}}(\tau)}R(\tau)\right]
%     = \int_{\Gamma} p^{\pi_{\tau}}(\tau)R(\tau) d\tau \\
%     =& \int_{\Gamma} p(\tau)R(\tau) d\tau
%     \ge \int_{\Gamma} \hat{p}^{\pi^*}(\tau) p(\tau)R(\tau) d\tau = J(\pi^*).
% \end{split}
% \end{equation}
% If the equivalence holds, since $p(\tau) > 0$ holds for any $\tau$, we have $\hat{p}^{\pi^*}(\tau) = 1$ if $R(\tau) > 0$. Consequently, for any state-action pair $(s,a)$ satisfying $\mathcal{R}(s,a) > 0$, we have $R(\tau)>0$ if $(s,a)$ is in the trajectory $\tau$ and further we have that $\hat{p}^{\pi^*}(\tau) = 1$, which derives that $\pi^*(a|s) = 1$. Thus, we have proven this theorem. 
% \end{proof}
%%%%% proof of origin theorem 3

\subsection{High Probability Upper Bound for Reuse Error for Hypothesis Class of Bounded Statistical Complexity}
\label{appendix_bounded_hypo}
\begin{theorem}
Assume that, for any trajectory $\tau$, we can bound its return as $0\leq R(\tau)\leq 1$, and the hypothesis set is finite, i.e., $|\mathcal{H}|\leq \infty$. Then, for any off-policy algorithm $\mathcal{O}$ and initialized policy $\pi_0\in\mathcal{H}$, with a probability of at least $1-\delta$ over the choice of an i.i.d. training set $\mathcal{B} = \{\tau_i\}_{i=1}^m$ sampled by the same original policy $\hat{\pi}$, the following inequality holds:
\begin{equation}
\begin{split}
    |\epsilon_{\mathrm{RE}}(\mathcal{O}, \pi_0, \mathcal{B})| \leq \sqrt{\frac{\left(\max_{\pi, \tau}\frac{p^{\pi}(\tau)}{p^{\hat{\pi}}(\tau)}\right)^2}{2m}\ln\left(\frac{2|\mathcal{H}|}{\delta}\right)}.
\end{split}
\end{equation}
\end{theorem}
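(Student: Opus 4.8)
The plan is to run the classical uniform-convergence argument: bound the Reuse Error pointwise for each fixed hypothesis via a concentration inequality, then upgrade to a bound that holds simultaneously for all hypotheses via a union bound, and finally specialize to the data-dependent policy $\mathcal{O}(\pi_0, \mathcal{B})$, which always lies in $\mathcal{H}$.

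First I would fix an arbitrary policy $\pi \in \mathcal{H}$ \emph{in advance}, i.e., independently of $\mathcal{B}$. Writing $C \triangleq \max_{\pi,\tau} \frac{p^{\pi}(\tau)}{p^{\hat\pi}(\tau)}$, I would observe that $\hat{J}_{\hat\pi, \mathcal{B}}(\pi) = \frac{1}{m}\sum_{i=1}^m X_i$ with $X_i = \frac{p^{\pi}(\tau_i)}{p^{\hat\pi}(\tau_i)} R(\tau_i)$, and that since $\mathcal{B}$ is i.i.d.\ from $\hat\pi$ the $X_i$ are i.i.d.; moreover $0 \le X_i \le C$ because $0 \le R(\tau) \le 1$, and $\mathbb{E}[X_i] = J(\pi)$ by unbiasedness of importance sampling for a fixed (sample-independent) $\pi$. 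Hoeffding's inequality then yields, for every $t>0$,
\begin{equation}
    \Pr_{\mathcal{B}\sim\hat\pi}\Big[\,|\hat{J}_{\hat\pi, \mathcal{B}}(\pi) - J(\pi)| \ge t\,\Big] \le 2\exp\!\left(-\frac{2m t^2}{C^2}\right).
\end{equation}

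Next I would take a union bound over the finitely many $\pi \in \mathcal{H}$: setting the right-hand side above equal to $\delta / |\mathcal{H}|$ and solving gives $t = \sqrt{\frac{C^2}{2m}\ln\big(\frac{2|\mathcal{H}|}{\delta}\big)}$, so with probability at least $1-\delta$ over $\mathcal{B}$ the inequality $|\hat{J}_{\hat\pi, \mathcal{B}}(\pi) - J(\pi)| \le t$ holds for \emph{all} $\pi \in \mathcal{H}$ simultaneously. On that event, since $\mathcal{O}(\pi_0, \mathcal{B}) \in \mathcal{H}$ for every $\mathcal{B}$, we may instantiate the bound at $\pi = \mathcal{O}(\pi_0, \mathcal{B})$, obtaining $|\epsilon_{\mathrm{RE}}(\mathcal{O}, \pi_0, \mathcal{B})| = |\hat{J}_{\hat\pi, \mathcal{B}}(\mathcal{O}(\pi_0, \mathcal{B})) - J(\mathcal{O}(\pi_0, \mathcal{B}))| \le \sqrt{\frac{C^2}{2m}\ln\big(\frac{2|\mathcal{H}|}{\delta}\big)}$, which is exactly the claim.

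The only real subtlety — and the step I would flag — is precisely why we cannot apply Hoeffding directly to $\mathcal{O}(\pi_0, \mathcal{B})$: that policy is a function of the very sample $\mathcal{B}$ used in the estimate, so the summands $\frac{p^{\mathcal{O}(\pi_0,\mathcal{B})}(\tau_i)}{p^{\hat\pi}(\tau_i)}R(\tau_i)$ are neither independent nor mean-$J(\mathcal{O}(\pi_0,\mathcal{B}))$ in the required conditional sense. The union bound over $\mathcal{H}$ is what resolves this, at the cost of the $\ln|\mathcal{H}|$ factor, and this is also where the finiteness hypothesis on $\mathcal{H}$ is used (in the main text, Theorem~\ref{main_theorem}, a PAC-Bayes argument replaces this step so as to drop the dependence on statistical complexity). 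Everything else is routine: just the boundedness of the importance weights and of $R$, plus Hoeffding.
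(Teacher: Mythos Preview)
Your proposal is correct and follows essentially the same route as the paper: apply Hoeffding's inequality to each fixed $\pi\in\mathcal{H}$ using the bound $0\le \tfrac{p^{\pi}(\tau_i)}{p^{\hat\pi}(\tau_i)}R(\tau_i)\le C$, take a union bound over the finite $\mathcal{H}$, and then specialize to $\pi=\mathcal{O}(\pi_0,\mathcal{B})\in\mathcal{H}$. Your added discussion of why Hoeffding cannot be applied directly to the data-dependent policy is a nice clarification that the paper leaves implicit.
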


\begin{proof}
By Hoeffding's inequality,
% ~\cite{hoeffding1994probability}, 
we have 
\begin{equation}
\begin{split}
    & P_{\mathcal{B}\sim\hat \pi}\left(\left|\hat{J}_{\hat{\Pi}, \mathcal{B}}(\pi) - J(\pi)\right|  \ge \epsilon\right)\\
    = &P_{\mathcal{B}\sim\hat \pi}\left(\left|\frac{1}{m}\sum_{i=1}^m \frac{p^{\pi}(\tau_i)}{p^{\hat{\pi}}(\tau_i)}  R(\tau_i) - J(\pi) \right| \ge \epsilon\right)\\ 
    \leq& 2\exp\left(-\frac{2m\epsilon^2}{\left(\max_{\pi, \tau}\frac{p^{\pi}(\tau)}{p^{\hat{\pi}}(\tau)}\right)^2}\right).
\end{split}
\end{equation}
Then we have
\begin{equation}
\begin{split}
    & P_{\mathcal{B}\sim\hat \pi}\left(\exists \pi\in\mathcal{H}, \left|\hat{J}_{\hat{\Pi}, \mathcal{B}}(\pi) - J(\pi) \right| \ge \epsilon\right)\\
    \leq & \sum_{\pi\in\mathcal{H}} P_{\mathcal{B}\sim\hat \pi} \left(\forall\pi\in\mathcal{H}, \left|\hat{J}_{\hat{\Pi}, \mathcal{B}}(\pi) - J(\pi) \right|  \ge \epsilon\right)\\
    \leq& 2|\mathcal{H}| \exp\left(-\frac{2m\epsilon^2}{\left(\max_{\pi, \tau}\frac{p^{\pi}(\tau)}{p^{\hat{\pi}}(\tau)}\right)^2}\right).
\end{split}
\end{equation}
And we can prove that
\begin{equation}
\begin{split}
    & P_{\mathcal{B}\sim\hat \pi}\left( \epsilon_{\mathrm{RE}}(\mathcal{O}, \pi_0, \mathcal{B}) \ge \epsilon\right)\\
    =& P_{\mathcal{B}\sim\hat \pi}\left( \left|\hat{J}_{\hat{\Pi}, \mathcal{B}}(\mathcal{O}(\pi_0, \mathcal{B})) - J(\mathcal{O}(\pi_0, \mathcal{B})) \right|\ge \epsilon \right)\\
    \leq &  P_{\mathcal{B}\sim\hat \pi}\left(\exists \pi\in\mathcal{H}, \left|\hat{J}_{\hat{\Pi}, \mathcal{B}}(\pi) - J(\pi) \right| \ge \epsilon\right)\\
    \leq & 2|\mathcal{H}| \exp\left(-\frac{2m\epsilon^2}{\left(\max_{\pi, \tau}\frac{p^{\pi}(\tau)}{p^{\hat{\pi}}(\tau)}\right)^2}\right) \triangleq \delta.
\end{split}
\end{equation}
Thus with the probability of at least $1-\delta$, we have $\epsilon_{\mathrm{RB}}(\mathcal{O}, \pi_0, \mathcal{B}) \leq \epsilon$, and we can calculate that
\begin{equation}
\begin{split}
    \epsilon = \sqrt{\frac{\left(\max_{\pi, \tau}\frac{p^{\pi}(\tau)}{p^{\hat{\pi}}(\tau)}\right)^2}{2m}\ln\left(\frac{2|\mathcal{H}|}{\delta}\right)}.
\end{split}
\end{equation}
\end{proof}

\subsection{Proof of Theorem~\ref{main_theorem}}
% \subsection{Proof of Theorem 4}
\label{proof_main_theorem}
In this part, we will provide a complete proof of Theorem~\ref{main_theorem}.
% 4
First we introduce the lemma proposed in \cite{McAllester03} and provide a proof of this lemma.
\begin{lemma}[\cite{McAllester03}]
\label{lem}
Let $X$ be a real valued random variable satisfying
\begin{equation}
\begin{split}
    &P(X\ge x)\leq e^{-mf(x)},
\end{split}
\end{equation}
here $m\ge 1$. Then we have
\begin{equation}
    \mathbb{E}\left[e^{(m-1)f(X)}\right]\leq m.
\end{equation}
Specially, if $X$ satisfies $P(X\ge x)\leq e^{-2mx^2}$, we have $E\left[e^{2(m-1)X^2}\right]\leq m$.
\end{lemma}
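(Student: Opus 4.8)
The plan is to derive the general inequality from the tail–integral (layer-cake) representation of the expectation and then read off the special case by instantiating $f(x)=2x^2$. First I would dispose of the degenerate case $m=1$, where the claim reduces to $\mathbb{E}[e^{0}]=1\le 1$; so from now on assume $m>1$. Writing $Y\triangleq e^{(m-1)f(X)}\ge 0$ and using $\mathbb{E}[Y]=\int_{0}^{\infty}P(Y>t)\,dt$ together with the strict monotonicity of $u\mapsto e^{(m-1)u}$, I would rewrite the event $\{Y>t\}$ as $\{f(X)>\frac{\ln t}{m-1}\}$ and substitute $s=\frac{\ln t}{m-1}$ (so $t=e^{(m-1)s}$, $dt=(m-1)e^{(m-1)s}\,ds$, and $s$ ranges over all of $\mathbb{R}$), obtaining
\begin{equation}
    \mathbb{E}[Y]=(m-1)\int_{-\infty}^{\infty}P\bigl(f(X)>s\bigr)\,e^{(m-1)s}\,ds .
\end{equation}

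Next I would split this integral at $s=0$. On $s<0$ I use only the trivial bound $P(f(X)>s)\le 1$, which contributes $(m-1)\int_{-\infty}^{0}e^{(m-1)s}\,ds=1$. On $s\ge 0$ the key estimate is $P(f(X)>s)\le e^{-ms}$; granting it, that part contributes $(m-1)\int_{0}^{\infty}e^{-ms}e^{(m-1)s}\,ds=(m-1)\int_{0}^{\infty}e^{-s}\,ds=m-1$. Adding the two pieces gives $\mathbb{E}[Y]\le 1+(m-1)=m$, as claimed. The estimate $P(f(X)>s)\le e^{-ms}$ itself I would get from the hypothesis $P(X\ge x)\le e^{-mf(x)}$ by noting that, on the relevant range, $f$ is monotone, so $\{f(X)>s\}$ is contained in a half-line $\{X\ge x_s\}$ with $f(x_s)\ge s$, whence $P(f(X)>s)\le P(X\ge x_s)\le e^{-mf(x_s)}\le e^{-ms}$. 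For the special case I simply set $f(x)=2x^2$: the hypothesis becomes $P(X\ge x)\le e^{-2mx^2}$ and the conclusion becomes $\mathbb{E}[e^{2(m-1)X^2}]\le m$.

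I expect the only real obstacle to be this transfer step $P(f(X)>s)\le e^{-ms}$, i.e.\ converting a tail bound phrased for the half-lines $\{X\ge x\}$ into a tail bound on $f(X)$. It is immediate when $f$ is nondecreasing, but for $f(x)=2x^2$ one must restrict attention to $x\ge 0$ (indeed $P(X\ge x)\le e^{-2mx^2}$ cannot hold for every real $x$, since $x\to-\infty$ would force $P(X<x)\to 1$), so the special case should be read for a nonnegative $X$ — which is exactly the form in which Theorem~\ref{main_theorem} will invoke it — or else one applies the one-sided argument to $X$ and to $-X$ separately. The remaining points, namely measurability of $f(X)$, finiteness of the integrals, and legitimacy of the layer-cake formula and the change of variables, are routine and I would treat them only briefly.
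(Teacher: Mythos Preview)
Your argument is correct and, like the paper's, ends with the same layer-cake computation $1+\int_1^\infty \nu^{-m/(m-1)}\,d\nu=m$; the difference lies in how you obtain the tail bound on $Y=e^{(m-1)f(X)}$. You pass through $P(f(X)>s)\le e^{-ms}$ by invoking monotonicity of $f$ to identify $\{f(X)>s\}$ with a half-line in $X$. The paper instead reads the hypothesis $P(X\ge x)\le e^{-mf(x)}$ as the pointwise bound $f(x)\le -\tfrac{1}{m}\log F(x)$ (with $F(x)=P(X\ge x)$), so that $e^{(m-1)f(X)}\le F(X)^{-(m-1)/m}$ almost surely, and then uses $F(X)\sim U(0,1)$ to obtain $P(Y\ge\nu)\le\nu^{-m/(m-1)}$ for $\nu\ge1$ directly. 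The paper's route requires no assumption on $f$ whatsoever (it never needs $f$ to be monotone), at the mild cost of assuming $X$ has a continuous distribution so that the probability integral transform applies; your route is more elementary but, as you rightly flag, leans on monotonicity of $f$, which is harmless for the intended application $f(x)=2x^2$ on $x\ge0$.
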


\begin{proof}
First, we'll prove that for $\forall m\ge 1, \nu \ge 1$, $P(e^{(m-1)f(X)}\ge \nu)\leq \nu^{-m/(m-1)}$ holds. We record $F(x) = P(X\ge x)$ and have $F(X)\sim U(0,1)$. Thus we can prove that
\begin{equation}
\begin{split}
    F(x) \leq e^{-mf(x)}
    \Rightarrow & f(x)\leq \log[F(x)^{-1/m}]\\
    \Rightarrow & e^{(m-1)f(X)}\leq F(X)^{-(m-1)/m}.
\end{split}
\end{equation}
Thus we have
\begin{equation}
\begin{split}
    P(e^{(m-1)f(X)} \ge \nu)
    \leq & P(F(X)^{-(m-1)/m} \ge \nu)\\
    =& P(F(X) \leq \nu^{-m/(m-1)})\\
    = & \nu^{-m/(m-1)}.
\end{split}
\end{equation}
Consequently, we can prove that
\begin{equation}
\begin{split}
    & \mathbb{E}[e^{(m-1)f(X)}] \\
    =& \int_{0}^{\infty} P[e^{(m-1)f(X)} \ge \nu]\\
    =& \int_{0}^{1} P[e^{(m-1)f(X)} \ge \nu] + \int_{1}^{\infty} P[e^{(m-1)f(X)} \ge \nu]\\
    \leq& 1 + \int_{1}^{\infty} \nu^{-m/(m-1)}d\nu
    =1 + (m-1) = m.
\end{split}
\end{equation}
Thus we have proven Lemma~\ref{lem}.
\end{proof}
Now, based on Lemma~\ref{lem}, we will provide the complete proof of Theorem~\ref{main_theorem}
% 4 
as below.
\begin{proof}
First, we will divide $\epsilon_{\text{RE}}(\mathcal{O}, \pi_0, \mathcal{B})$ into two parts as below
\begin{equation}
\begin{split}
    & |\epsilon_{\text{RE}}(\mathcal{O}, \pi_0, \mathcal{B})| \\
    =& \left|\frac{1}{m}\sum_{i=1}^m  \frac{p^{\pi}(\tau_i)}{p^{\hat{\pi}}(\tau_i)} R(\tau_i) - \mathbb{E}_{\tau\sim \pi} \left[ R(\tau)\right] \right|\\
    \leq & \left|\mathbb{E}_{\tau\sim \pi} \frac{1}{m}\sum_{i=1}^m  \left[R(\tau) - R(\tau_i)\right]\right|\\
    + & \left|\frac{1}{m}\sum_{i=1}^m  \left[1-\frac{p^{\pi}(\tau_i)}{p^{\hat{\pi}}(\tau_i)}\right] R(\tau_i)\right|\\
    \triangleq & A+B.
\end{split}
\end{equation}
here we set
\begin{equation}
\begin{split}
    A &= \left|\mathbb{E}_{\tau\sim \pi} \frac{1}{m}\sum_{i=1}^m  \left[R(\tau) - R(\tau_i)\right]\right|,\\
    B &= \left| \frac{1}{m}\sum_{i=1}^m  \left[1-\frac{p^{\pi}(\tau_i)}{p^{\hat{\pi}}(\tau_i)}\right] R(\tau_i)\right|.
\end{split}
\end{equation}
Then we will bound $A$ and $B$ respectively.

First, we extend results of PAC-Bayes Theorems~\cite{McAllester98,McAllester03} to provide a high-probability upper bound of $A$. The main difficulty is that $\pi$ may depend on $\mathcal{B}$ and $\mathbb{E}_{\tau\sim \pi} \left[ R(\tau)\right]$ is hard to control. For handling this difficulty, we propose re-sampling trick to introduce another dataset $\hat{\mathcal{B}}$ and first deform $A$ as
\begin{equation}
\begin{split}
    A &= \left|\mathbb{E}_{\tau\sim \pi} \frac{1}{m}\sum_{i=1}^m  \left[R(\tau) - R(\tau_i)\right]\right| \\
    & = \left|\frac{1}{m}\mathbb{E}_{\tau\sim \pi}\sum_{i=1}^m  \left[R(\tau) - R(\tau_i)\right]\right| \\
    &= \left|\frac{1}{m}\sum_{i=1}^m \mathbb{E}_{\hat{\tau}_i\sim \pi} \left[R(\hat{\tau}_i) - R(\tau_i)\right]\right|  \\
    & = \left|E_{\hat{\mathcal{B}}\sim\pi} \frac{1}{m}\sum_{i=1}^m  \left[R(\hat{\tau}_i) - R(\tau_i)\right]\right|,
\end{split}
\end{equation}
here $\hat{\mathcal{B}} = \{\hat{\tau}_i\}_{i=1}^m$is no longer dependent of $\pi$. Moreover, we set $\hat{A}$ as
\begin{equation}
\begin{split}
    \hat A &\triangleq (m-1)E_{\hat{\mathcal{B}}\sim\pi} \left(\frac{1}{m}\sum_{i=1}^m  \left[R(\hat{\tau}_i) - R(\tau_i)\right]\right)^2\\
    &\ge (m-1) \left(E_{\hat{\mathcal{B}}\sim\pi}\frac{1}{m}\sum_{i=1}^m  \left[R(\hat{\tau}_i) - R(\tau_i)\right]\right)^2\\
    &= (m-1)A^2.
\end{split}
\end{equation}
We set $\Delta R(\hat{\tau}_i, \tau_i) = R(\hat{\tau}_i) - R(\tau_i)$ and can further prove that
\begin{equation}
\begin{split}
    & \hat A - m  \mathrm{KL}[p^\pi(\cdot) || p^{\hat{\pi}}(\cdot)] \\
    =& (m-1)E_{\hat{\mathcal{B}}\sim\pi} \left(\frac{1}{m}\sum_{i=1}^m  \left[\Delta R(\hat{\tau}_i, \tau_i)\right]\right)^2 
    - \mathbb{E}_{\hat{\mathcal{B}} \sim \pi} \left[\log\prod_{i=1}^m \frac{p^{\pi}(\hat{\tau}_i)}{p^{\hat{\pi}}(\hat{\tau}_i)}\right] \\
    =& (m-1)E_{\hat{\mathcal{B}}\sim\pi} \left(\frac{1}{m}\sum_{i=1}^m  \left[\Delta R(\hat{\tau}_i, \tau_i)\right]\right)^2 
    + \mathbb{E}_{\hat{\mathcal{B}} \sim \pi} \left[\log \frac{p^{\hat{\pi}}(\hat{\mathcal{B}})}{p^{\pi}(\hat{\mathcal{B}})}\right] \\
    =&  \mathbb{E}_{\hat{\mathcal{B}}\sim\pi} \log \left[\frac{p^{\hat{\pi}}(\hat{\mathcal{B}})}{p^{\pi}(\hat{\mathcal{B}})} \exp\left(\frac{m-1}{m^2}\left( \sum_{i=1}^m [\Delta R(\hat{\tau}_i, \tau_i) \right)^2\right)\right]\\
    \leq& \log \mathbb{E}_{\hat{\mathcal{B}}\sim\pi} \left[\frac{p^{\hat{\pi}}(\hat{\mathcal{B}})}{p^{\pi}(\hat{\mathcal{B}})} \exp\left(\frac{m-1}{m^2}\left( \sum_{i=1}^m  \Delta R(\hat{\tau}_i, \tau_i) \right)^2\right)\right]\\
    = & \log E_{\hat{\mathcal{B}}\sim\hat{\pi}}  \left[\exp\left(((m-1)\left( \frac{1}{m}\sum_{i=1}^m  \Delta R(\hat{\tau}_i, \tau_i) \right)^2\right)\right],
\end{split}
\end{equation}
here $p^{\pi}(\hat{\mathcal{B}}) = \prod_{i=1}^m p^{\pi}(\hat{\tau}_i), p^{\hat{\pi}}(\hat{\mathcal{B}}) = \prod_{i=1}^m p^{\hat{\pi}}(\hat{\tau}_i)$. Thus we can further provide an uniform upper bound for any $\pi$ as 
\begin{equation}
\begin{split}
    &f(\mathcal{B}) \triangleq \sup_{\pi}\left[ \hat A - m  \mathrm{KL}[p^\pi(\cdot) || p^{\hat{\pi}}(\cdot)]\right]\\
    \leq & \log \mathbb{E}_{\hat{\mathcal{B}} \sim \hat{\pi}}  \left[\exp\left((m-1)\left( \frac{1}{m}\sum_{i=1}^m  \Delta R(\hat{\tau}_i, \tau_i)\right)^2\right)\right].
\end{split}
\end{equation}
We define $\hat E = \mathbb{E}_{\tau\sim p^{\hat\pi}}R(\tau)$. Since $0\leq R(\tau)\leq 1$, by Hoeffding's inequality,
% ~\cite{hoeffding1994probability}
we have:
\begin{equation}
\begin{split}
    &P_{\mathcal{B}\sim\hat \pi}\left(\frac{1}{m}\sum_{i=1}^m  R(\tau_i) - \hat E \ge \epsilon\right) \leq e^{-2m\epsilon^2},\\
    &P_{\mathcal{B}\sim \hat \pi}\left(\frac{1}{m}\sum_{i=1}^m  R(\tau_i) - \hat E \leq -\epsilon\right) \leq e^{-2m\epsilon^2},
\end{split}
\end{equation}
and we set
\begin{equation}
    \Delta(\mathcal{B}) = \frac{1}{m}\sum_{i=1}^m  R(\tau_i) - \hat E.
\end{equation}
Thus by Lemma~\ref{lem} we have:
\begin{equation}
    E_{\mathcal{B}\sim\hat \pi}[e^{2(m-1)\Delta(\mathcal{B})^2}] \leq m. 
\end{equation}
Similarly, we have the same result for $\hat{\mathcal{B}}$ as
\begin{equation}
    E_{\hat{\mathcal{B}}\sim\hat \pi}[e^{2(m-1)\Delta(\hat{\mathcal{B}})^2}] \leq m. 
\end{equation}
Therefore, by using Markov's inequality, we can prove that
\begin{equation}
\begin{split}
    &P_{\mathcal{B}\sim\hat \pi}[f(\mathcal{B})\ge \epsilon]
    \leq \frac{\mathbb{E}_{\mathcal{B}\sim\hat \pi} e^{f(\mathcal{B})}}{e^{\epsilon}}\\
    \leq& \frac{1}{e^{\epsilon}}\mathbb{E}_{\mathcal{B}\sim\hat \pi} \mathbb{E}_{\hat{\mathcal{B}}\sim\hat \pi} \left[\exp\left((m-1)\left( \frac{1}{m}\sum_{i=1}^m \Delta R(\hat{\tau}_i, \tau_i) \right)^2\right)\right]\\
    \leq& \frac{1}{e^{\epsilon}}\mathbb{E}_{\mathcal{B}\sim\hat \pi} \mathbb{E}_{\hat{\mathcal{B}}\sim\hat \pi} \left[\exp\left(2(m-1)\left(\left( \frac{1}{m}\sum_{i=1}^m  R(s_i, a_i) - \hat E\right)^2\right.\right.\right.\\
    &+ \left.\left.\left.\left( \hat E - \frac{1}{m}\sum_{i=1}^m R(\hat s_i, \hat a_i) \right)^2 \right)\right)\right]\\
    =& \frac{1}{e^{\epsilon}} \mathbb{E}_{\mathcal{B}\sim\hat \pi}  \left[\exp\left(2(m-1)\left(\frac{1}{m}\sum_{i=1}^m  R(\tau_i) - \hat E\right)^2\right)\right] \\ &\mathbb{E}_{\hat{\mathcal{B}}\sim\hat \pi} \left[ \exp\left( 2(m-1)\left( \hat{E} - \frac{1}{m}\sum_{i=1}^m R(\hat{\tau}_i) \right)^2\right)\right]\\
    =& \frac{1}{e^{\epsilon}} \mathbb{E}_{\mathcal{B}\sim\hat \pi}  \left[e^{2(m-1)\Delta(\mathcal{B}})^2\right]\mathbb{E}_{\hat{\mathcal{B}}\sim\hat \pi} \left[e^{2(m-1)\Delta(\hat{\mathcal{B}})^2}\right]\\
    \leq& \frac{m^2}{e^\epsilon} \triangleq \delta. 
\end{split}
\end{equation}
Thus with the probability of at least $1-\delta$, we have $f(\mathcal{B}) < \epsilon$, i.e.
\begin{equation}
\begin{split}
    A &\leq \sqrt{\frac{\hat A}{m-1}} \leq \sqrt{\frac{m  \mathrm{KL}[p^\pi(\cdot) || p^{\hat{\pi}}(\cdot)] + \epsilon}{m-1}}\\
    = &\sqrt{ \frac{m \epsilon_1 +\log\left(\frac{m^2}{\delta}\right)}{m-1}},
\end{split}
\end{equation}
which is a high probability upper bound of $A$.

For $B$, we provide a uniform upper bound as
\begin{equation}
\begin{split}
    B &= \left|\frac{1}{m}\sum_{i=1}^m  \left[1 - \frac{p^{\pi}(\tau_i)}{p^{\hat\pi}(\tau_i)} \right]R(s_i, a_i)\right|\\
    &\leq \frac{1}{m}\sum_{i=1}^m   \left|1 - \frac{p^{\pi}(\tau_i)}{p^{\hat\pi}(\tau_i)} \right|
    = \epsilon_2.
\end{split}
\end{equation}
% Thus we have proven Theorem 4.
Thus we have proven Theorem~\ref{main_theorem}.
\end{proof}

\subsection{Proof of Theorem~\ref{thm-stability}}
% \subsection{Proof of Theorem 5}
\begin{proof}
\label{proof-stability}
We denote $\mathcal{B}^{\setminus i}\cup \{\tau\} = \mathcal{B}\cup \{\tau\} \setminus \{\tau_i\}$ and have
\begin{equation}
\begin{split}
    &\left|\mathbb{E}_{\mathcal{B}\sim\hat{\pi}}\mathbb{E}_{\mathcal{O}}\left[\epsilon_{\text{RE}}(\mathcal{O}, \pi_0, \mathcal{B})\right]\right|\\
    = &  \left|\mathbb{E}_{\mathcal{B}\sim\hat{\pi}}\mathbb{E}_{\mathcal{O}}\left[\hat{J}_{\hat{\Pi}, \mathcal{B}}(\mathcal{O}(\pi_0, \mathcal{B})) - J(\mathcal{O}(\pi_0, \mathcal{B}))\right]\right| \\
    = &  \left|\mathbb{E}_{\mathcal{B}\sim\hat{\pi}}\mathbb{E}_{\mathcal{O}}\left[\frac{1}{n} \sum_{i=1}^n \frac{p^{\mathcal{O}(\pi_0, \mathcal{B})}(\tau_i)}{p^{\hat{\pi}}(\tau_i)}R(\tau_i)\right.\right.\\
    -&\left.\left. \mathbb{E}_{\tau\sim\hat{\pi}}\left[\frac{p^{\mathcal{O}(\pi_0, \mathcal{B})}(\tau)}{p^{\hat{\pi}}(\tau)}R(\tau)\right]\right]\right| \\
    \leq &  \left|\mathbb{E}_{\mathcal{B}\sim\hat{\pi}}\mathbb{E}_{\mathcal{O}}\left[\frac{1}{n} \sum_{i=1}^n \frac{p^{\mathcal{O}(\pi_0, \mathcal{B})}(\tau_i)}{p^{\hat{\pi}}(\tau_i)}R(\tau_i)\right.\right.\\
    -&\left.\left. \frac{1}{n} \sum_{i=1}^n \mathbb{E}_{\tau\sim\hat{\pi}}\frac{p^{\mathcal{O}(\pi_0, \mathcal{B}^{\setminus i}\cup\{\tau\})}(\tau_i)}{p^{\hat{\pi}}(\tau_i)}R(\tau_i)\right]\right| \\
    + &  \left|\mathbb{E}_{\mathcal{B}\sim\hat{\pi}}\mathbb{E}_{\mathcal{O}}\left[\frac{1}{n} \sum_{i=1}^n \mathbb{E}_{\tau\sim\hat{\pi}}\frac{p^{\mathcal{O}(\pi_0, \mathcal{B}^{\setminus i}\cup\{\tau\})}(\tau_i)}{p^{\hat{\pi}}(\tau_i)}R(\tau_i)\right.\right.\\
    -&\left.\left. \mathbb{E}_{\tau\sim\hat{\pi}}\left[\frac{p^{\mathcal{O}(\pi_0, \mathcal{B})}(\tau)}{p^{\hat{\pi}}(\tau)}R(\tau)\right]\right]\right| \\
    \leq &  \beta +\left| \mathbb{E}_{\mathcal{B}\sim\hat{\pi}}\mathbb{E}_{\mathcal{O}}\left[\frac{1}{n} \sum_{i=1}^n \mathbb{E}_{\tau\sim\hat{\pi}}\frac{p^{\mathcal{O}(\pi_0, \mathcal{B}^{\setminus i}\cup\{\tau\})}(\tau_i)}{p^{\hat{\pi}}(\tau_i)}R(\tau_i)\right.\right.\\
    -&\left.\left. \mathbb{E}_{\tau\sim\hat{\pi}}\left[\frac{p^{\mathcal{O}(\pi_0, \mathcal{B})}(\tau)}{p^{\hat{\pi}}(\tau)}R(\tau)\right]\right]\right| \\
    = &  \beta +\left|\frac{1}{n} \sum_{i=1}^n \mathbb{E}_{\mathcal{O}}\mathbb{E}_{\mathcal{B}, \tau\sim\hat{\pi}}\left[\frac{p^{\mathcal{O}(\pi_0, \mathcal{B}^{\setminus i}\cup\{\tau\})}(\tau_i)}{p^{\hat{\pi}}(\tau_i)}R(\tau_i)\right]\right.\\
    -&\left. \mathbb{E}_{\mathcal{O}}\mathbb{E}_{\mathcal{B}, \tau\sim\hat{\pi}}\left[\frac{p^{\mathcal{O}(\pi_0, \mathcal{B})}(\tau)}{p^{\hat{\pi}}(\tau)}R(\tau)\right]\right| \\
    = & \beta.
\end{split}
\end{equation}
\end{proof}

\subsection{Details and Proof of Theorem~\ref{thm_calculate_beta}}
% \subsection{Details and Proof of Theorem 6}
\label{proof_calculate_beta}

In this part, we first narrate Theorem~\ref{thm_calculate_beta} in detail and then provide its proof.
% In this part, we first narrate Theorem 6 in detail and then provide its proof.

In Theorem~\ref{thm_calculate_beta},
% 6
we assume that our policy $\pi_{\theta}$ is parameterized with the parameter $\theta$ and our initial parameter is $\theta_0$. We here narrate the off-policy stochastic policy gradient algorithm for any Replay Buffer $\mathcal{B}$ sampled from the origin policy $\hat{\pi}$. At each epoch $k$ ($k=0, 1, ..., T-1$), we uniformly sample a trajectory $\tau$ from $\mathcal{B}$ and update the parameter as
\begin{equation}
    \theta_{k+1} = \theta_{k} + \alpha_k \frac{\nabla_{\theta}p^{\pi_{\theta_k}}(\tau)}{p^{\hat{\pi}}(\tau)} R(\tau),
\end{equation}
and we denote $\theta_T$ as $\theta_{T,\mathcal{B}}$, which is a random variable related to $\mathcal{B}$. For proving that the off-policy stochastic policy gradient algorithm is $\beta$-uniformly stable, we need to find a constant $\beta$ and prove that for all Replay Buffer $\mathcal{B}, \mathcal{B}'$, such that $\mathcal{B}, \mathcal{B}'$ differ in at most one trajectory, we have
\begin{equation}
    \forall \tau, \theta_0,\quad \mathbb{E}_{\mathcal{O}}\left[p^{\pi_{\theta_{T, \mathcal{B}}}}(\tau) - p^{\pi_{\theta_{T, \mathcal{B}'}}}(\tau)\right]\leq\beta.
\end{equation}

\begin{proof}
We consider a more general situation that $\mathcal{B}, \mathcal{B}'$ can be totally different. To simplify the notation, we denote the parameter in epoch $k$ trained with $\mathcal{B},\mathcal{B}'$ is $\theta_k, \theta_k'$ respectively, i.e., we have
\begin{equation}
\begin{split}
    \theta_{k+1} &= \theta_k + \alpha_k \frac{\nabla_{\theta}p^{\pi_{\theta_k}}(\tau)}{p^{\hat{\pi}}(\tau)} R(\tau), \quad \tau\sim\mathcal{B}\\
    &= \theta_k + \alpha_k \frac{p^{\pi_{\theta_k}}(\tau)}{p^{\hat{\pi}}(\tau)} (\nabla_{\theta}\log p^{\pi_{\theta_k}}(\tau)) R(\tau).
\end{split}
\end{equation}
\begin{equation}
\begin{split}
    \theta_{k+1}' &= \theta_k' +  \alpha_k \frac{\nabla_{\theta}p^{\pi_{\theta_k'}}(\tau')}{p^{\hat{\pi}}(\tau')} R(\tau'),\quad \tau'\sim\mathcal{B}\\
    &= \theta_k' + \alpha_k \frac{p^{\pi_{\theta_k'}}(\tau')}{p^{\hat{\pi}}(\tau')} (\nabla_{\theta}\log p^{\pi_{\theta_k'}}(\tau')) R(\tau').
\end{split}
\end{equation}
We first bound
\begin{equation}
\begin{split}
    & \left\|\frac{p^{\pi_{\theta_k}}(\tau)}{p^{\hat{\pi}}(\tau)} (\nabla_{\theta}\log p^{\pi_{\theta_k}}(\tau)) R(\tau)\right.\\
    &- \left. \frac{p^{\pi_{\theta_k'}}(\tau')}{p^{\hat{\pi}}(\tau')} (\nabla_{\theta}\log p^{\pi_{\theta_k'}}(\tau')) R(\tau')\right\|\\
    \leq& \left\|\left(\frac{p^{\pi_{\theta_k}}(\tau)}{p^{\hat{\pi}}(\tau)}-1\right) (\nabla_{\theta}\log p^{\pi_{\theta_k}}(\tau)) R(\tau)\right\|\\
    +& \left\|(\nabla_{\theta}\log p^{\pi_{\theta_k}}(\tau)) R(\tau)\right\|
    + \left\|(\nabla_{\theta} \log p^{\pi_{\theta_k'}}(\tau')) R(\tau')\right\|\\
    +& \left\| \left(\frac{p^{\pi_{\theta_k'}}(\tau')}{p^{\hat{\pi}}(\tau')}-1\right) (\nabla_{\theta} \log p^{\pi_{\theta_k'}}(\tau')) R(\tau')\right\|\\
    \leq & \left(\left|\left(\frac{p^{\pi_{\theta_k}}(\tau)}{p^{\hat{\pi}}(\tau)}-1\right)\right| + \left| \left(\frac{p^{\pi_{\theta_k'}}(\tau')}{p^{\hat{\pi}}(\tau')}-1\right)\right| + 2\right) L.
\end{split}
\end{equation}
Set $\delta_k = \|\theta_{k} - \theta_{k}'\|$, we further have
\begin{equation}
\begin{split}
    \delta_{k+1} &= \|\theta_{k+1} - \theta_{k+1}'\|\\
    &= \left\|\theta_k - \theta_k' + \alpha_k \frac{p^{\pi_{\theta_k}}(\tau)}{p^{\hat{\pi}}(\tau)} (\nabla_{\theta}\log p^{\pi_{\theta_k}}(\tau)) R(\tau) \right.\\
    -&\left. \alpha_k \frac{p^{\pi_{\theta_k'}}(\tau')}{p^{\hat{\pi}}(\tau')} (\nabla_{\theta}\log p^{\pi_{\theta_k'}}(\tau')) R(\tau') \right\|\\
    &\leq \delta_k + \left\| \alpha_k \frac{p^{\pi_{\theta_k}}(\tau)}{p^{\hat{\pi}}(\tau)} (\nabla_{\theta}\log p^{\pi_{\theta_k}}(\tau)) R(\tau)\right. \\
    -& \left. \alpha_k \frac{p^{\pi_{\theta_k'}}(\tau')}{p^{\hat{\pi}}(\tau')} (\nabla_{\theta}\log p^{\pi_{\theta_k'}}(\tau')) R(\tau') \right\|.
\end{split}
\end{equation}
Consequently, we can prove that
\begin{equation}
\begin{split}
    & \left|\mathbb{E}[\delta_{k+1}] - \mathbb{E}[\delta_k]\right|\\
    \leq & \left\|\frac{1}{n} \sum_{i=1}^n \alpha_k \frac{p^{\pi_{\theta_k}}(\tau_i)}{p^{\hat{\pi}}(\tau_i)} (\nabla_{\theta}\log p^{\pi_{\theta_k}}(\tau_i)) R(\tau_i) \right.\\
    &-\left.  \frac{1}{n} \sum_{i=1}^n \alpha_k \frac{p^{\pi_{\theta_k'}}(\tau_i')}{p^{\hat{\pi}}(\tau_i')} (\nabla_{\theta}\log p^{\pi_{\theta_k'}}(\tau_i')) R(\tau_i')\right\|\\
    \leq & \alpha_k \frac{1}{n} \sum_{i=1}^n  \left\|  \frac{p^{\pi_{\theta_k}}(\tau_i)}{p^{\hat{\pi}}(\tau_i)} (\nabla_{\theta}\log p^{\pi_{\theta_k}}(\tau_i)) R(\tau_i) \right.\\
    & -\left. \frac{p^{\pi_{\theta_k'}}(\tau_i')}{p^{\hat{\pi}}(\tau_i')} (\nabla_{\theta}\log p^{\pi_{\theta_k'}}(\tau_i')) R(\tau_i')\right\|\\
    \leq & \alpha_k \frac{1}{n} \sum_{i=1}^n \left(\left|\left(\frac{p^{\pi_{\theta_k}}(\tau_i)}{p^{\hat{\pi}}(\tau_i)}-1\right)\right|\right.\\
    +&\left. \left| \left(\frac{p^{\pi_{\theta_k'}}(\tau_i')}{p^{\hat{\pi}}(\tau_i')}-1\right)\right| + 2\right) L\\
    = &  \left(2 + \mathbb{E}_{\tau\sim\mathcal{B}}\left|\left(\frac{p^{\pi_{\theta_k}}(\tau)}{p^{\hat{\pi}}(\tau)}-1\right)\right|\right.\\
    +& \left.\mathbb{E}_{\tau'\sim\mathcal{B}'} \left| \left(\frac{p^{\pi_{\theta_k'}}(\tau')}{p^{\hat{\pi}}(\tau')}-1\right)\right|\right) \alpha_k L_1\\
    = &  \left(2 + \mathcal{L}(\pi_{\theta_k}, \mathcal{B})  + \mathcal{L}(\pi_{\theta_k'}, \mathcal{B}) \right) \alpha_k L_1\\
    \leq & \left(2 + 2M \right) \alpha_k L_1.
\end{split}
\end{equation}
Since $\theta_0 = \theta_0'$, we have $\mathbb{E}[\delta_0] = 0$ and
\begin{equation}
    \mathbb{E}[\delta_T] \leq \left(\sum_{k=0}^{T-1} \alpha_k\right) \left(2 + 2M \right) L_1.
\end{equation}
Since $p^{\pi_{\theta}}(\tau)$ is $L_2$-Lipsticz to $\theta$, we have
\begin{equation}
\begin{split}
    \mathbb{E}_{\mathcal{O}}\left[p^{\pi_{\theta_T}}(\tau) - p^{\pi_{\theta_T'}}(\tau)\right] &\leq L_2\mathbb{E}_{\mathcal{O}}\left[\delta_T\right]\\
    \leq &\left(\sum_{k=0}^{T-1} \alpha_k\right) \left(2 + 2M \right) L_1L_2
\end{split}
\end{equation}
holds for $\forall \tau, \pi_0$.
\end{proof}

% \newpage
\section{Supplementary Experiment}

\subsection{Full Results on MiniGrid}
\label{appendix-minigrid}

\begin{figure*}[htbp]
\subfigure[5$\times$5, 30]{
\begin{minipage}[t]{0.33\linewidth}
\centering
\includegraphics[height=3.5cm,width=6.0cm]{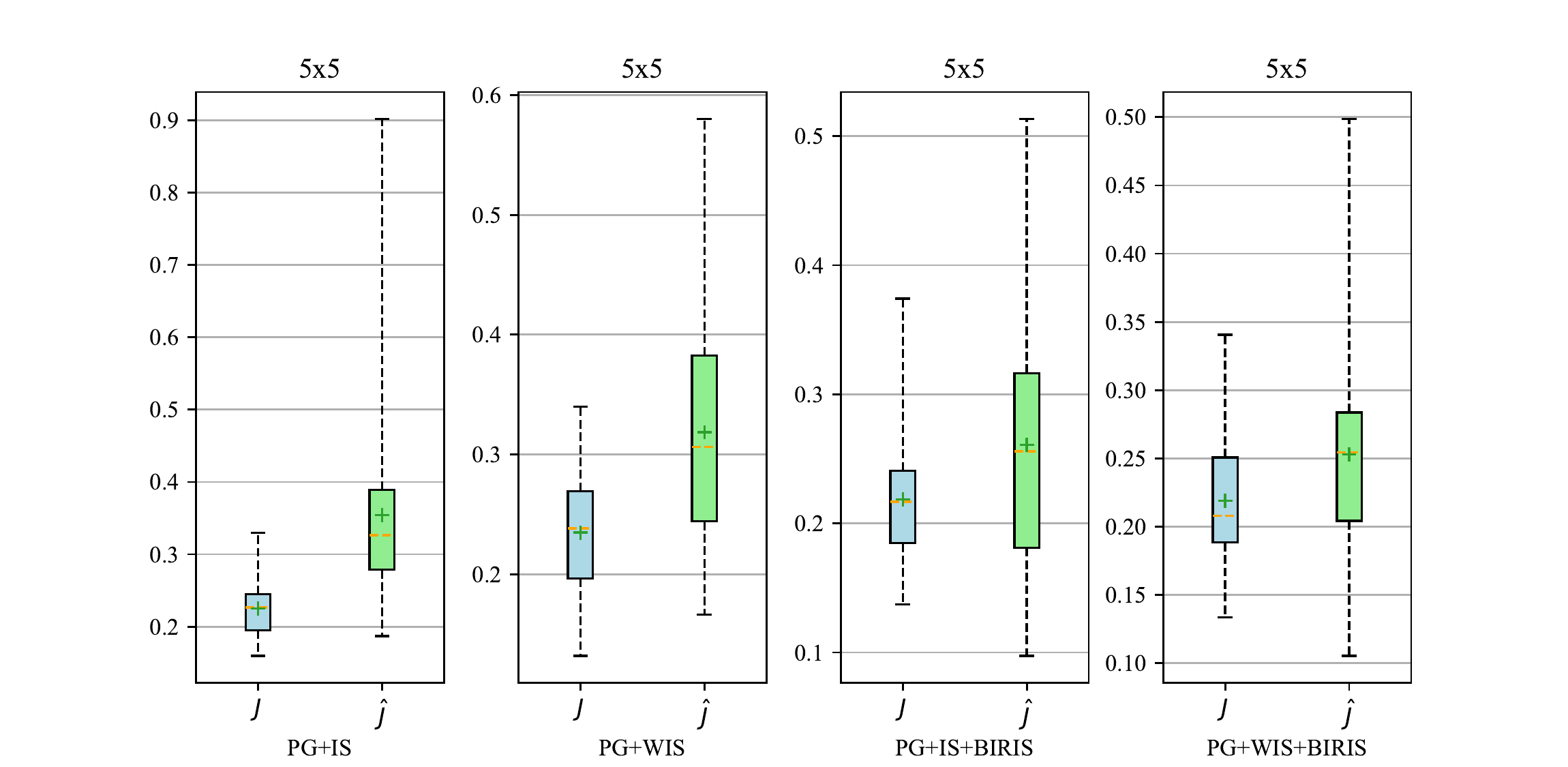}
\end{minipage}}
\subfigure[5$\times$5, 40]{
\begin{minipage}[t]{0.33\linewidth}
\centering
\includegraphics[height=3.5cm,width=6.0cm]{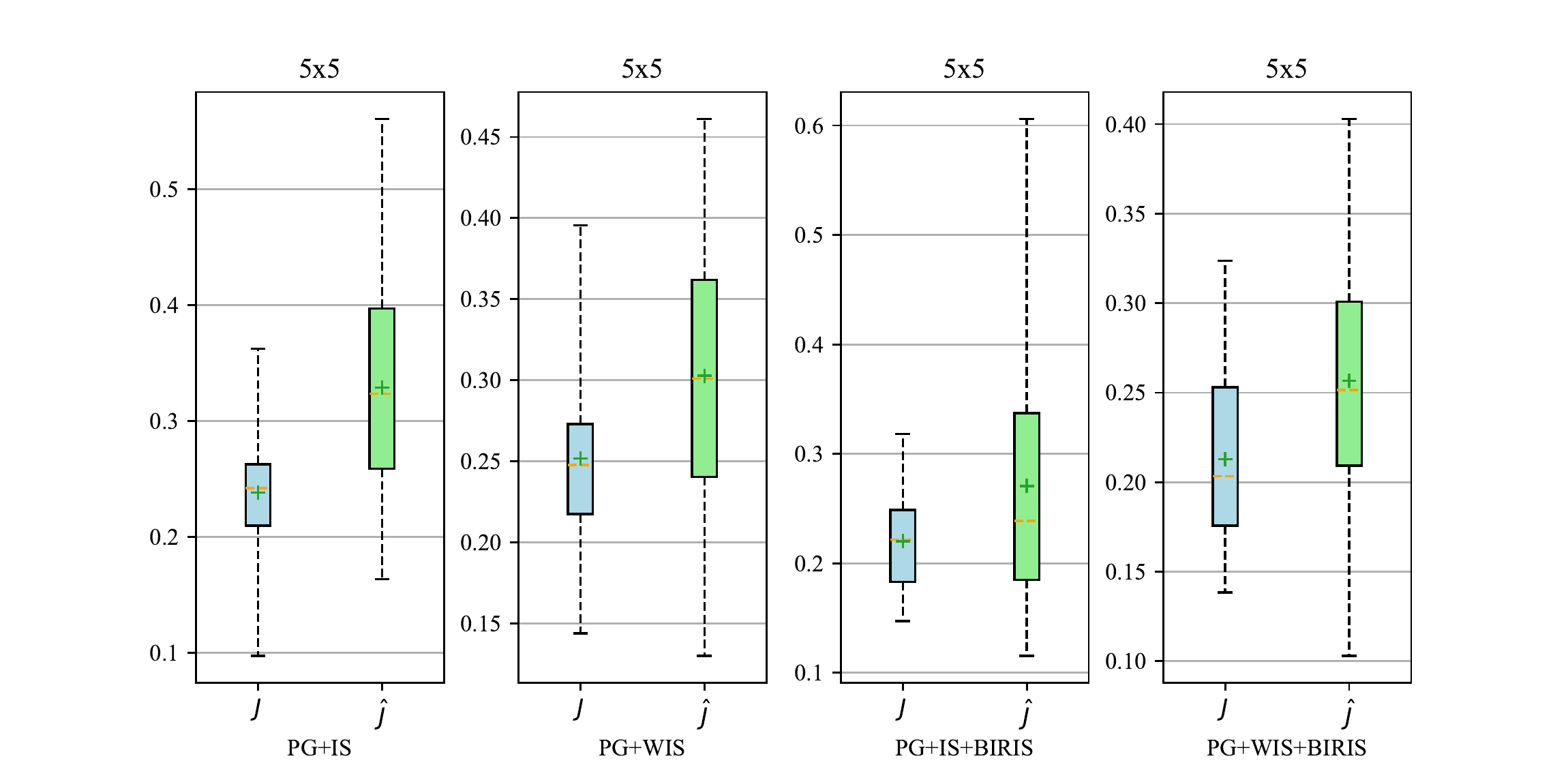}
\end{minipage}}
\subfigure[5$\times$5, 50]{
\begin{minipage}[t]{0.33\linewidth}
\centering
\includegraphics[height=3.5cm,width=6.0cm]{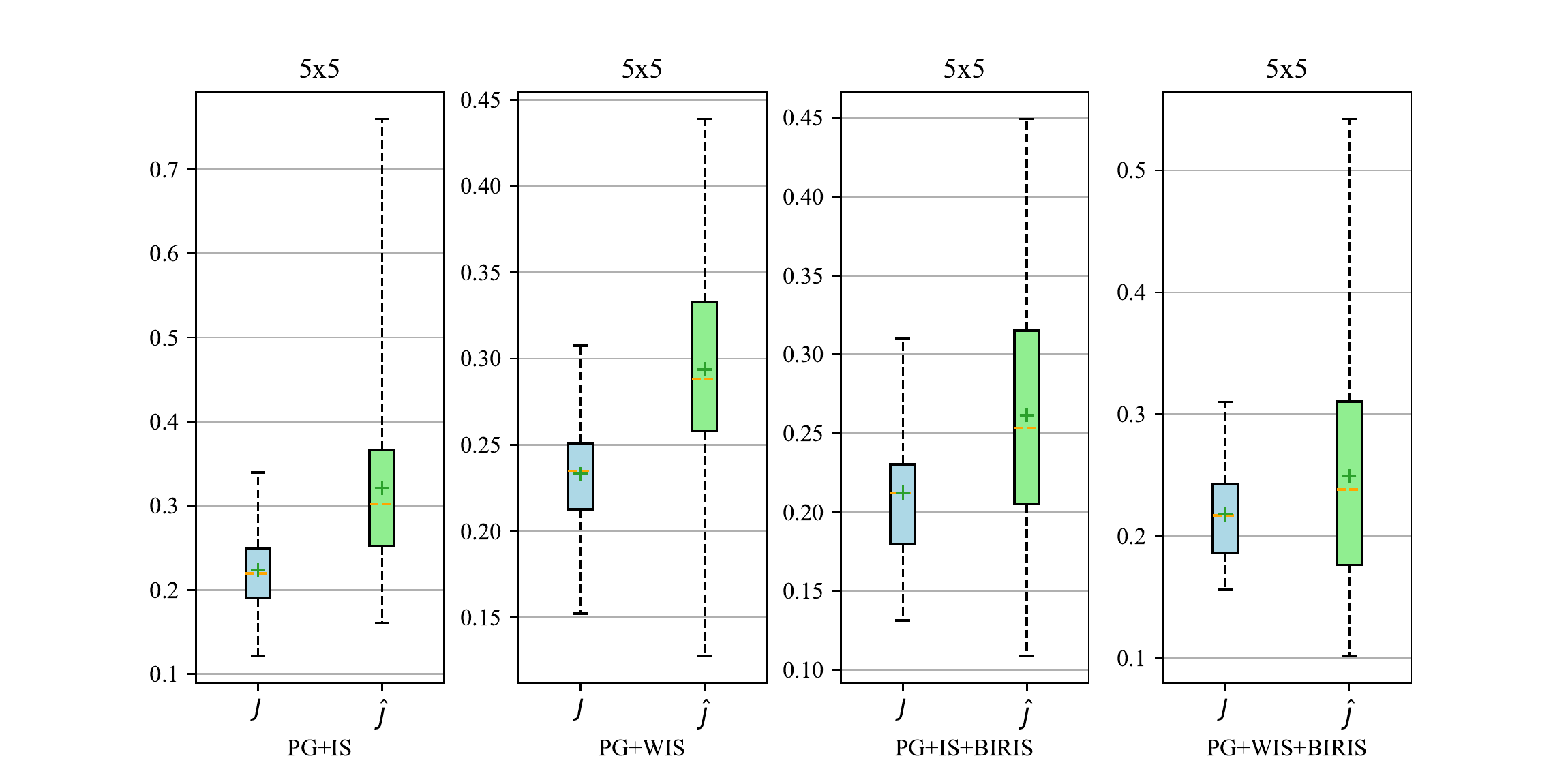}
\end{minipage}}
\end{figure*}

\begin{figure*}[htbp]
\subfigure[5$\times$5-Random, 30]{
\begin{minipage}[t]{0.33\linewidth}
\centering
\includegraphics[height=3.5cm,width=6.0cm]{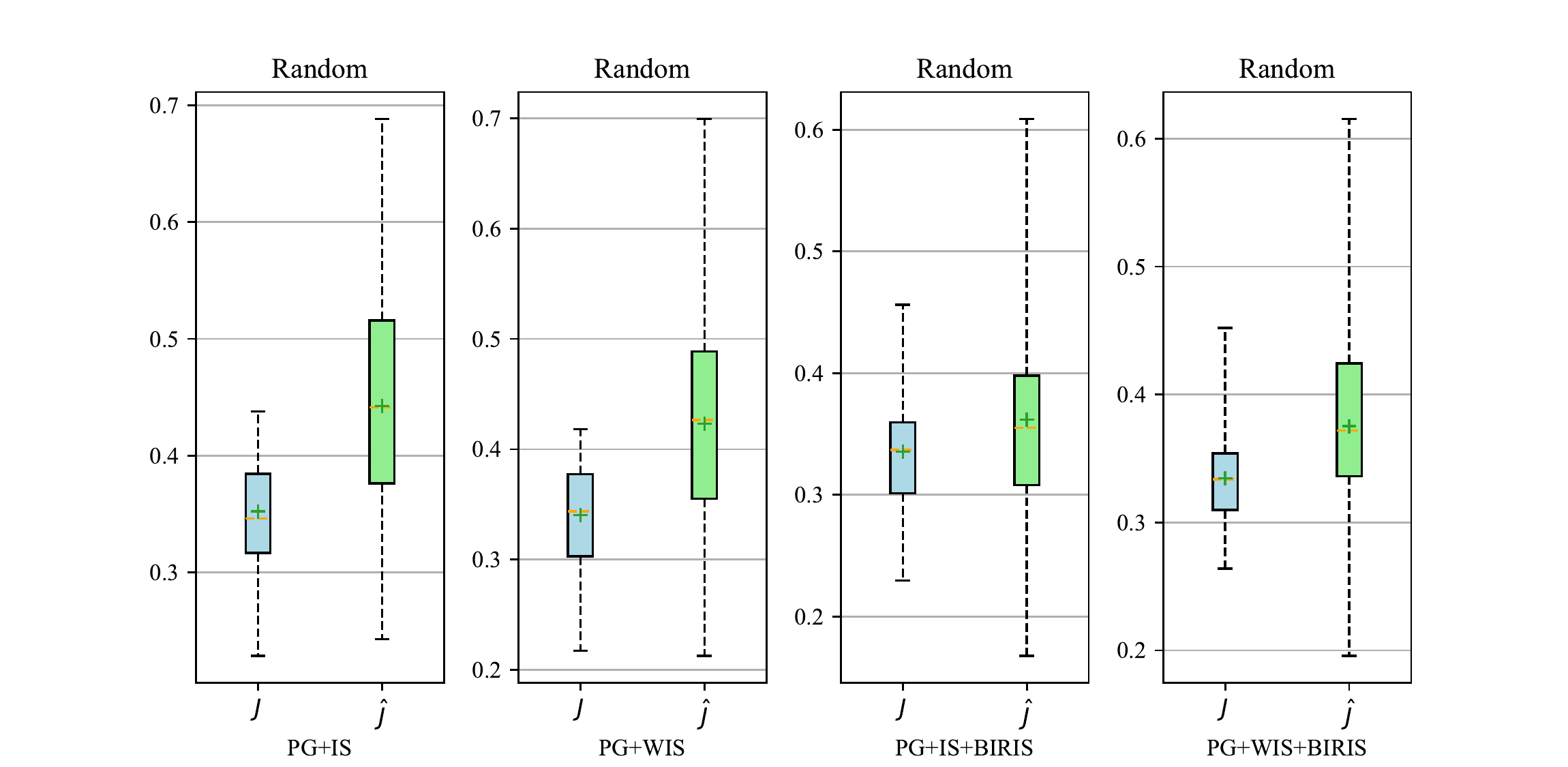}
\end{minipage}}
\subfigure[5$\times$5-Random, 40]{
\begin{minipage}[t]{0.33\linewidth}
\centering
\includegraphics[height=3.5cm,width=6.0cm]{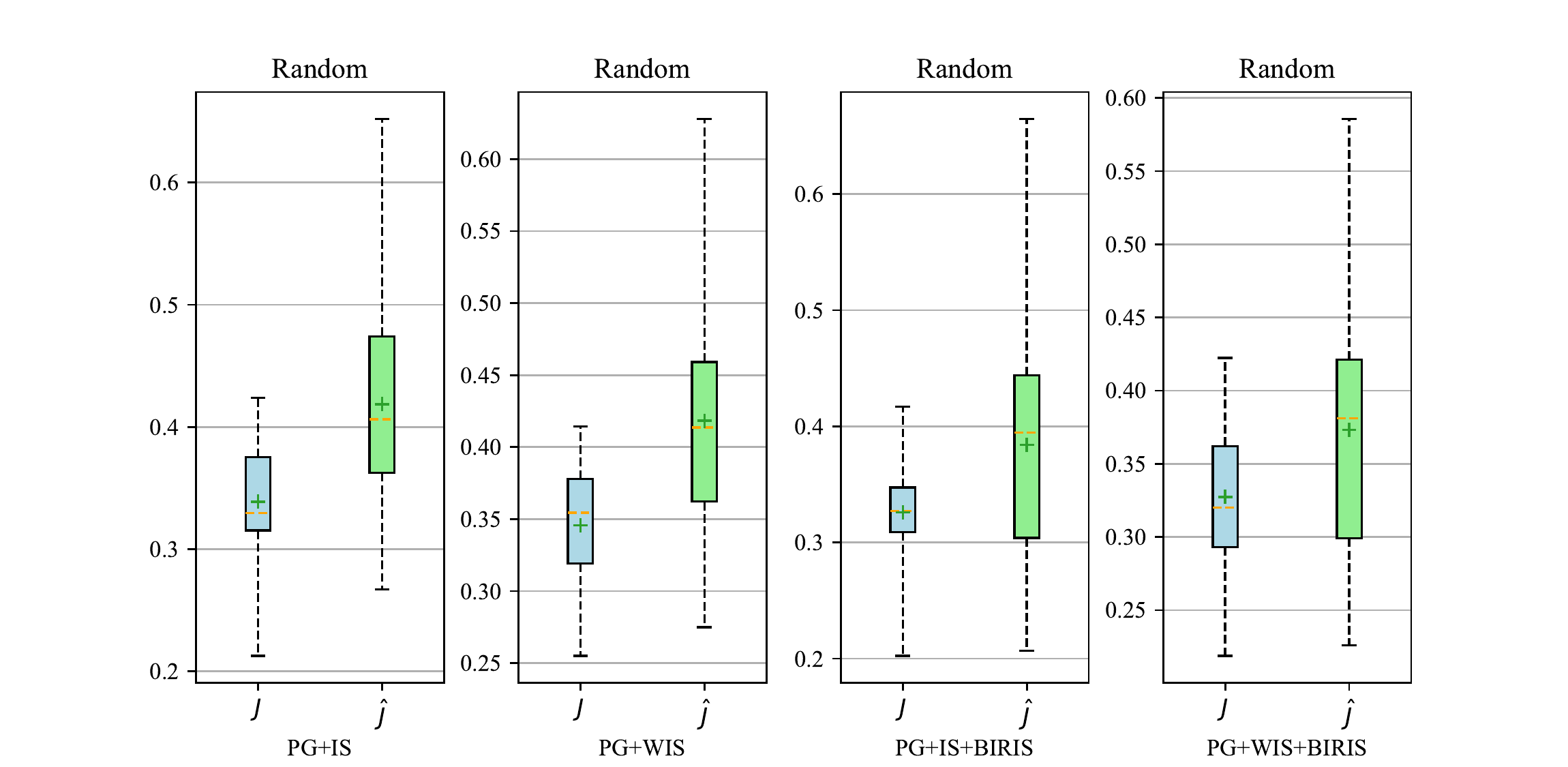}
\end{minipage}}
\subfigure[5$\times$5-random, 50]{
\begin{minipage}[t]{0.33\linewidth}
\centering
\includegraphics[height=3.5cm,width=6.0cm]{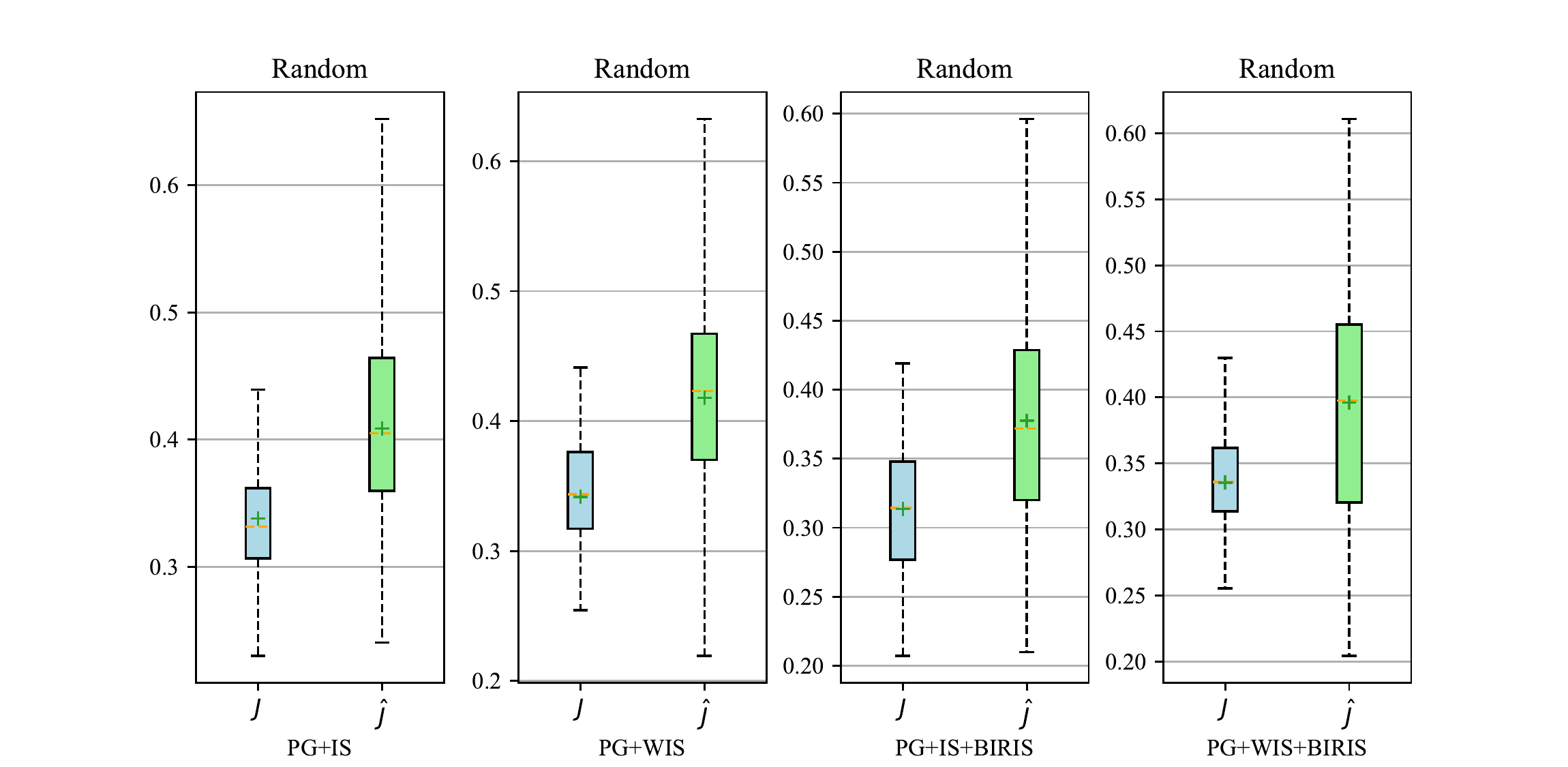}
\end{minipage}}
\end{figure*}

\begin{figure*}[htbp]
\subfigure[6$\times$6, 30]{
\begin{minipage}[t]{0.33\linewidth}
\centering
\includegraphics[height=3.5cm,width=6.0cm]{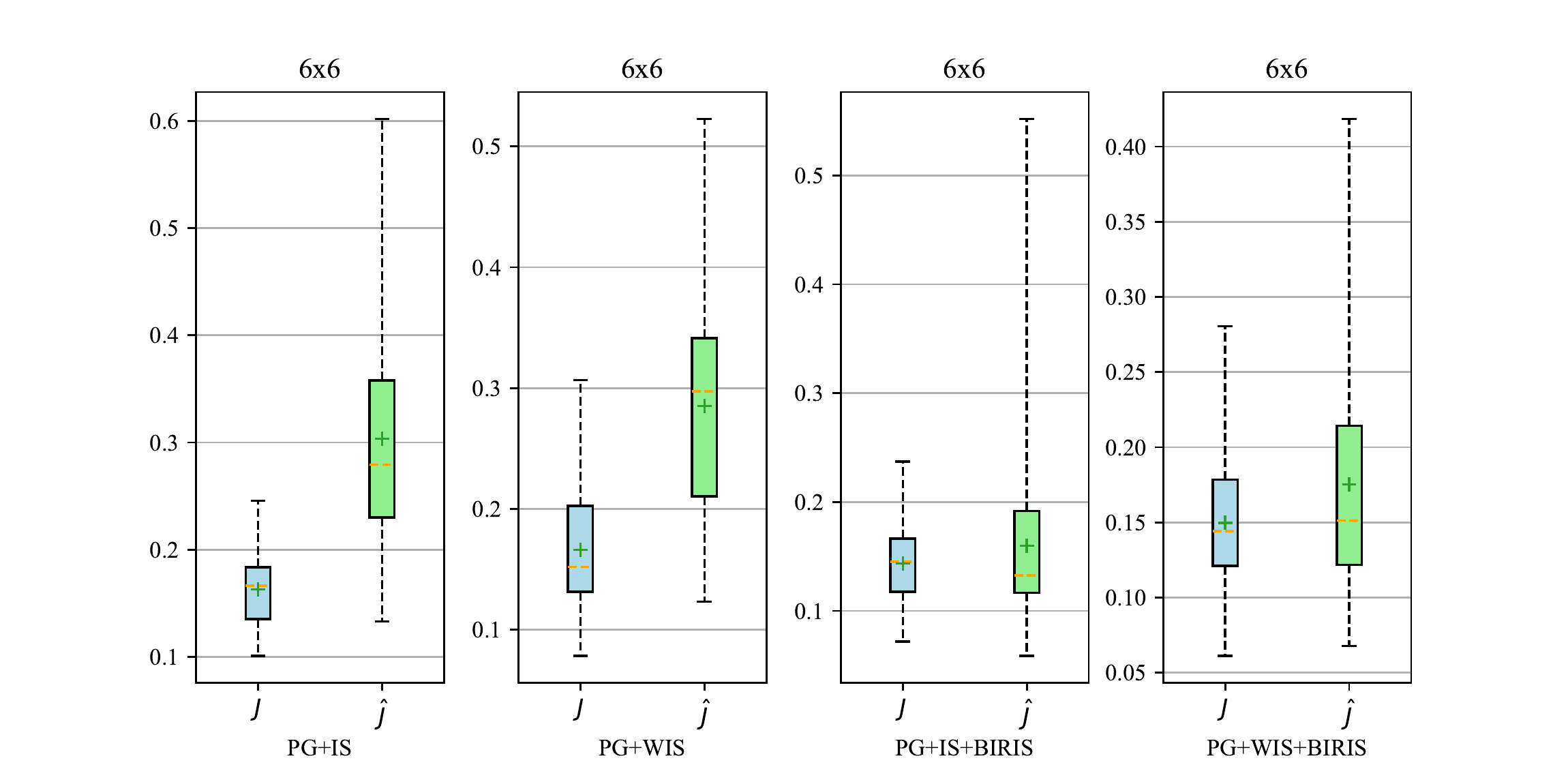}
\end{minipage}}
\subfigure[6$\times$6, 40]{
\begin{minipage}[t]{0.33\linewidth}
\centering
\includegraphics[height=3.5cm,width=6.0cm]{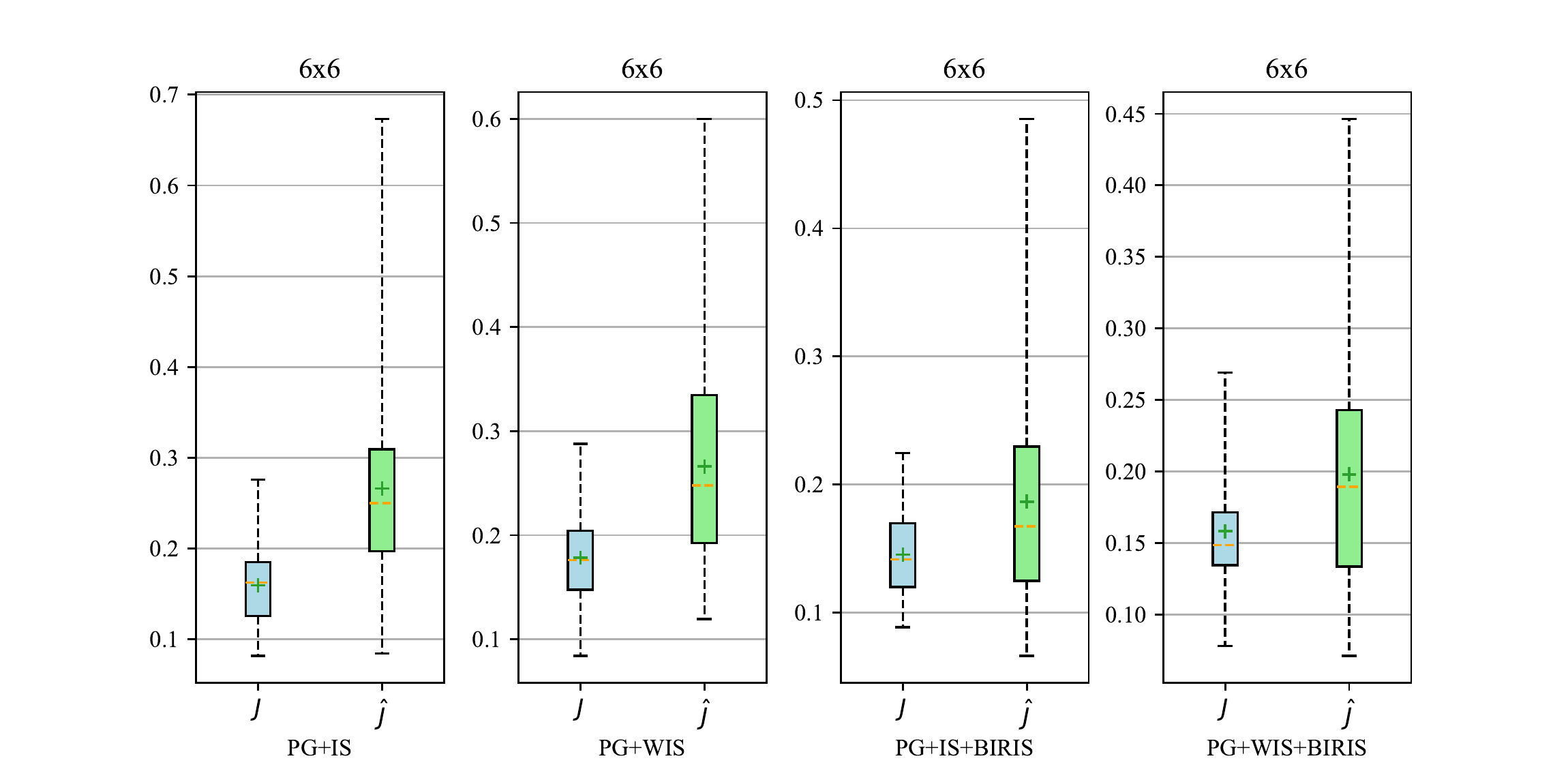}
\end{minipage}}
\subfigure[6$\times$6, 50]{
\begin{minipage}[t]{0.33\linewidth}
\centering
\includegraphics[height=3.5cm,width=6.0cm]{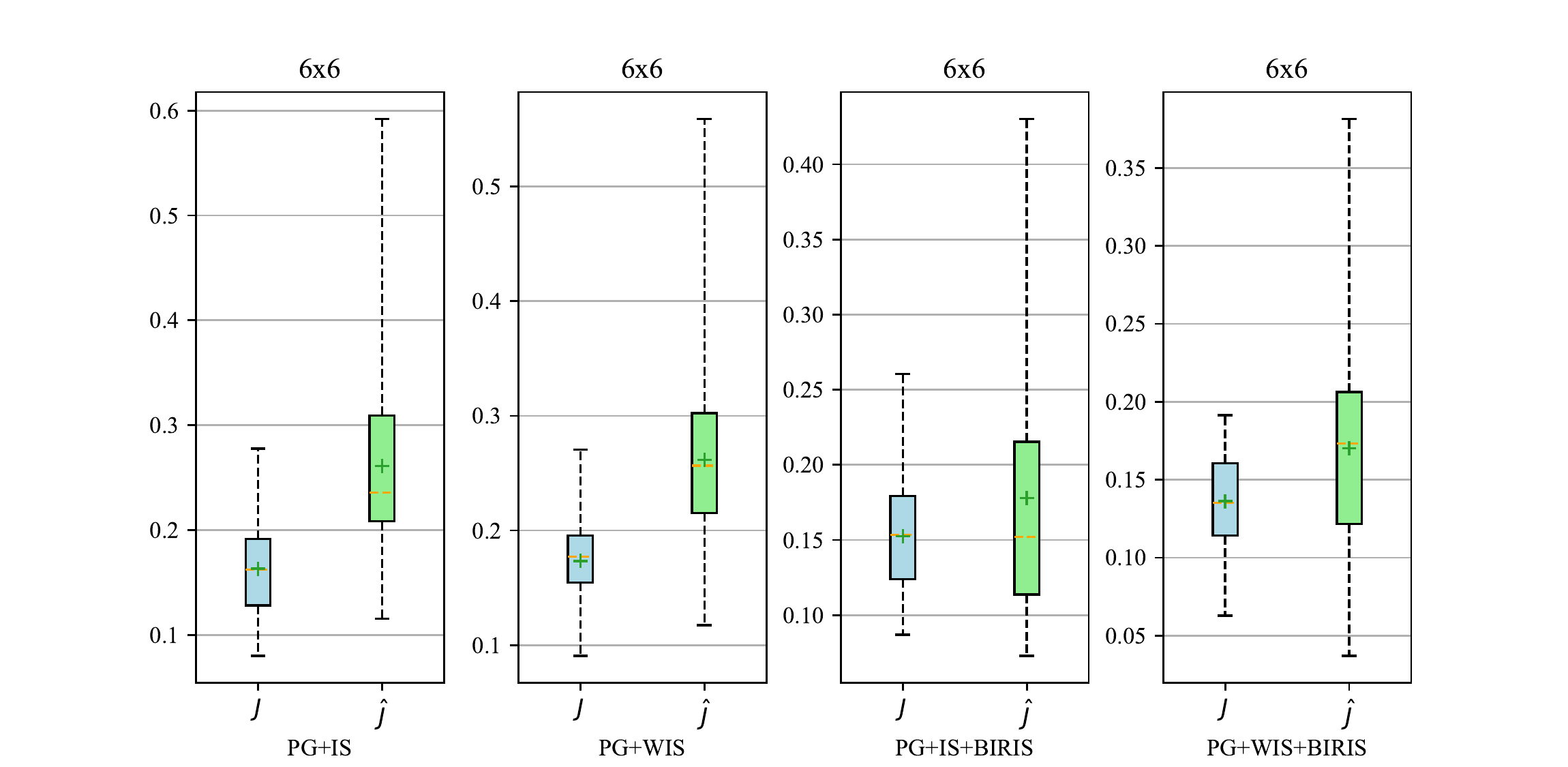}
\end{minipage}}
\end{figure*}

\begin{figure*}[htbp]
\subfigure[6$\times$6-random, 30]{
\begin{minipage}[t]{0.33\linewidth}
\centering
\includegraphics[height=3.5cm,width=6.0cm]{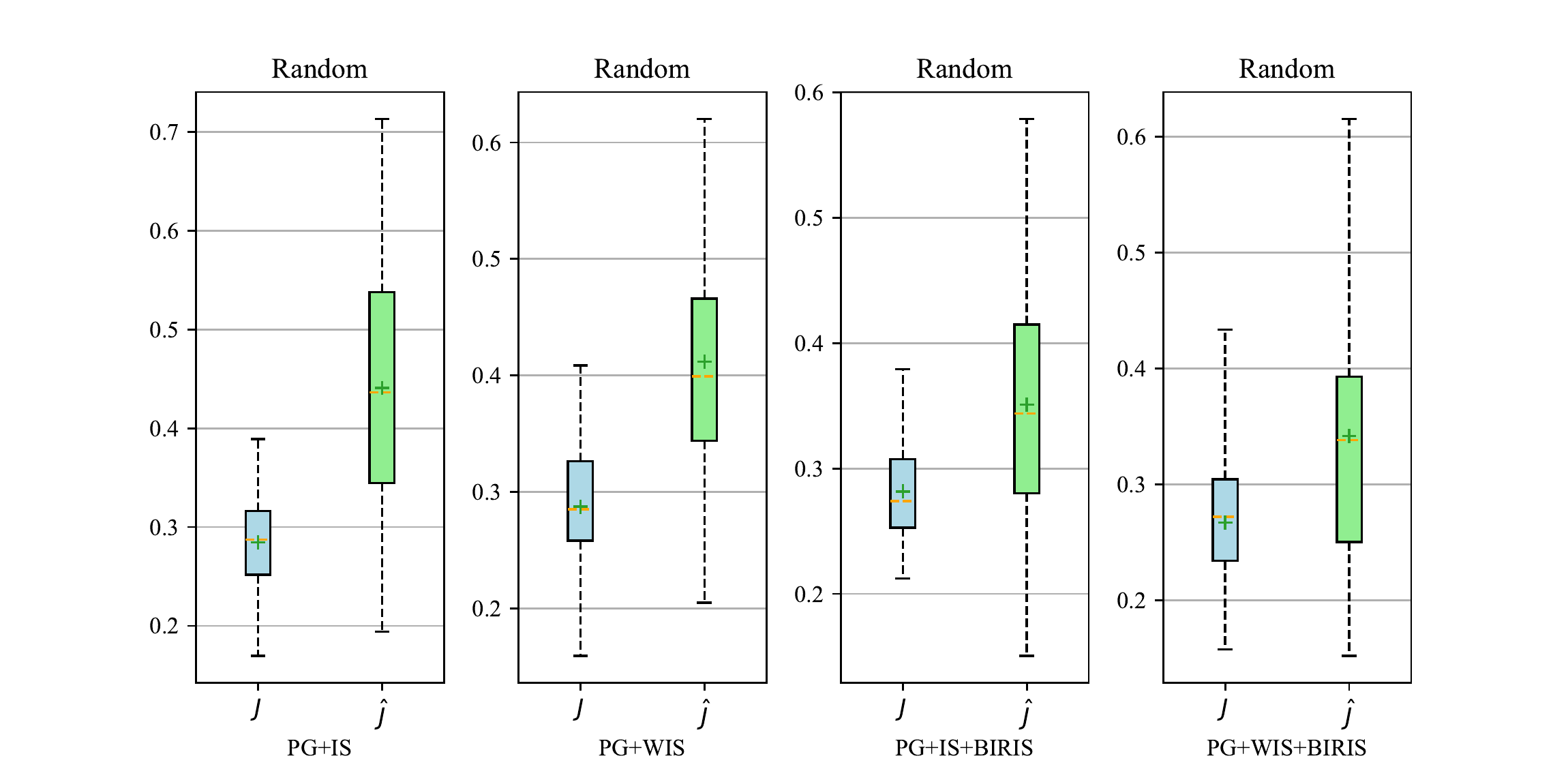}
\end{minipage}}
\subfigure[6$\times$6-random, 40]{
\begin{minipage}[t]{0.33\linewidth}
\centering
\includegraphics[height=3.5cm,width=6.0cm]{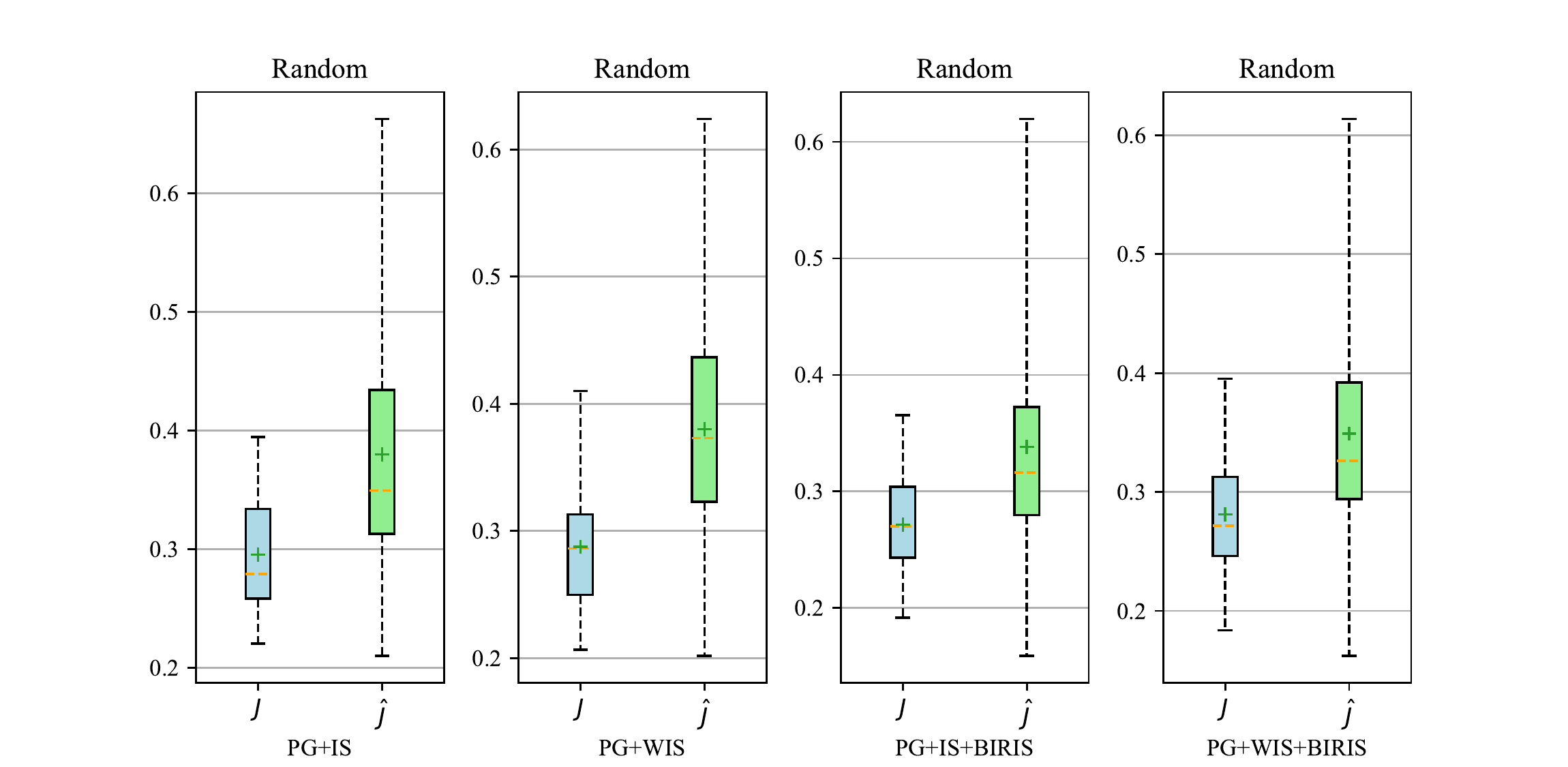}
\end{minipage}}
\subfigure[6$\times$6-random, 50]{
\begin{minipage}[t]{0.33\linewidth}
\centering
\includegraphics[height=3.5cm,width=6.0cm]{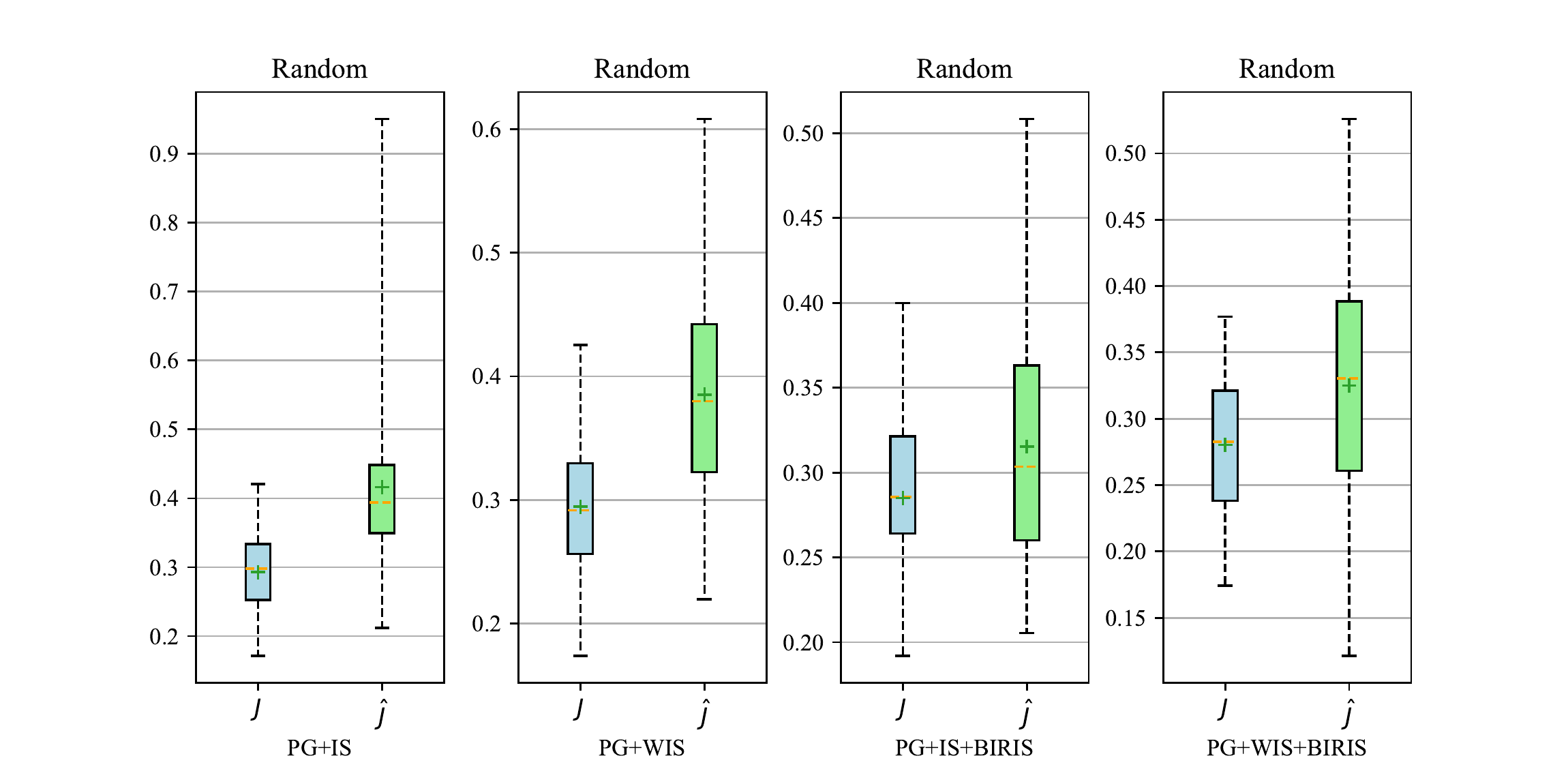}
\end{minipage}}
\end{figure*}

% \newpage

\begin{figure*}[htbp]
\subfigure[8$\times$8, 30]{
\begin{minipage}[t]{0.33\linewidth}
\centering
\includegraphics[height=3.5cm,width=6.0cm]{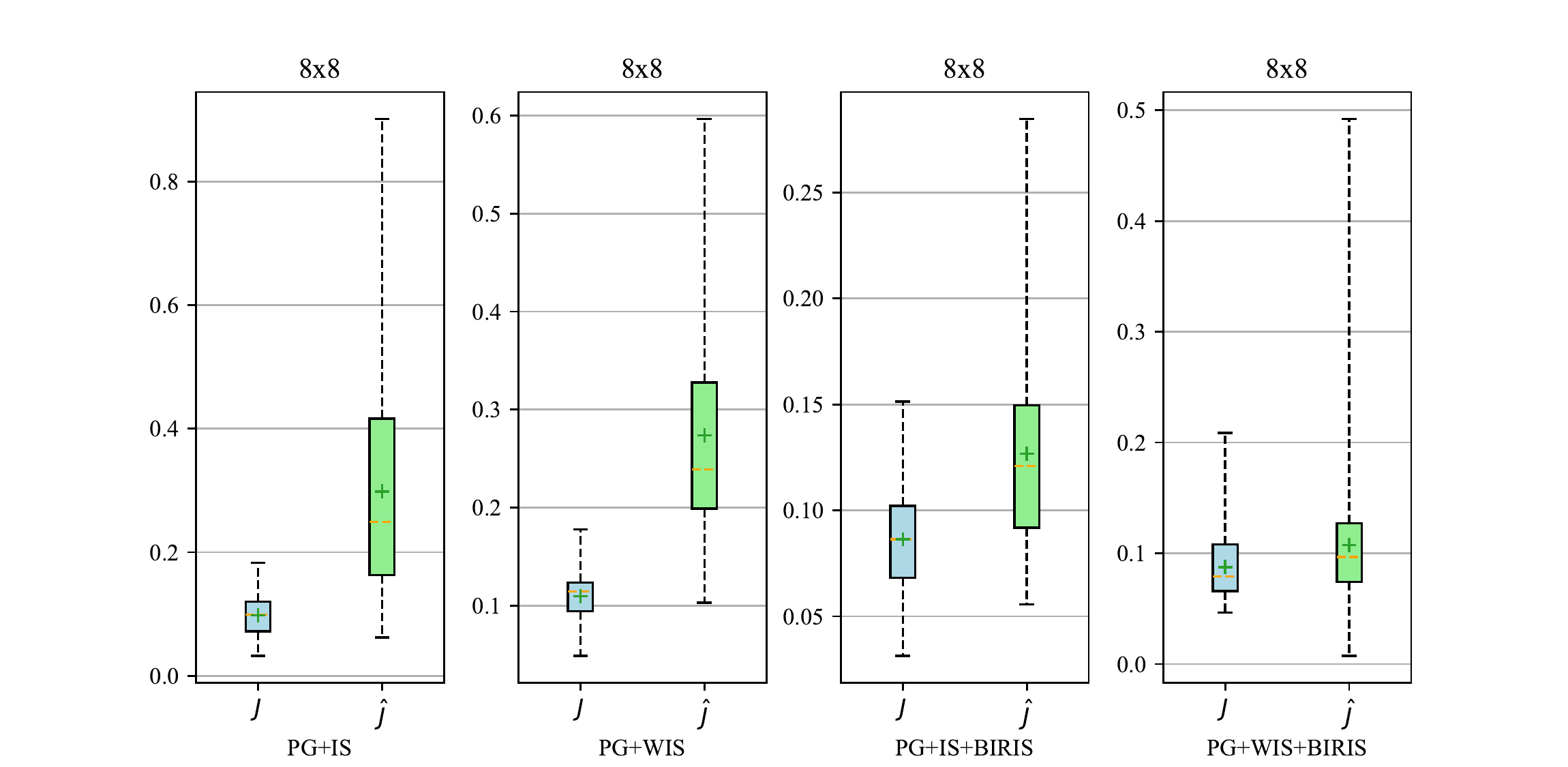}
\end{minipage}}
\subfigure[8$\times$8, 40]{
\begin{minipage}[t]{0.33\linewidth}
\centering
\includegraphics[height=3.5cm,width=6.0cm]{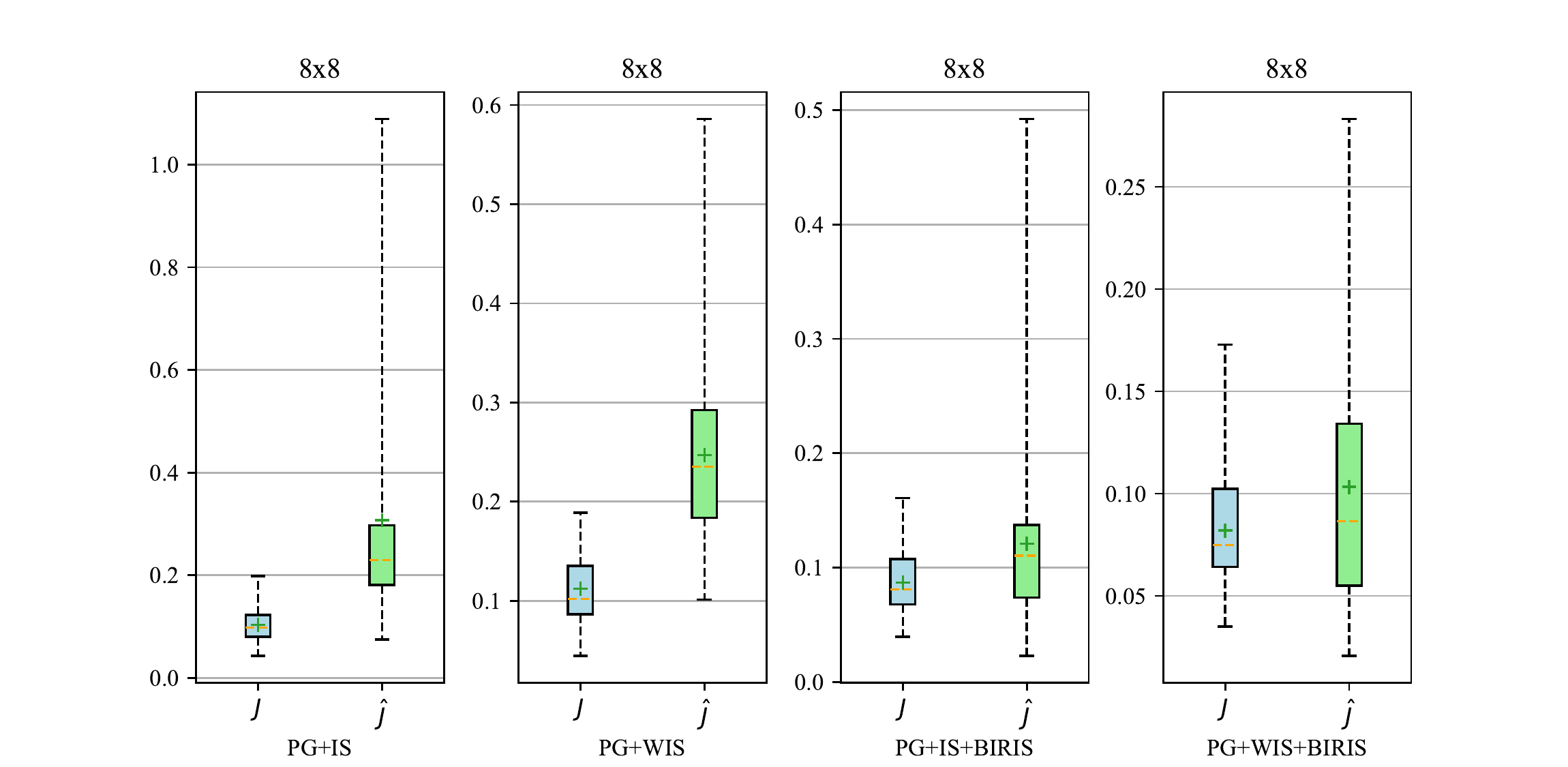}
\end{minipage}}
\subfigure[8$\times$8, 50]{
\begin{minipage}[t]{0.33\linewidth}
\centering
\includegraphics[height=3.5cm,width=6.0cm]{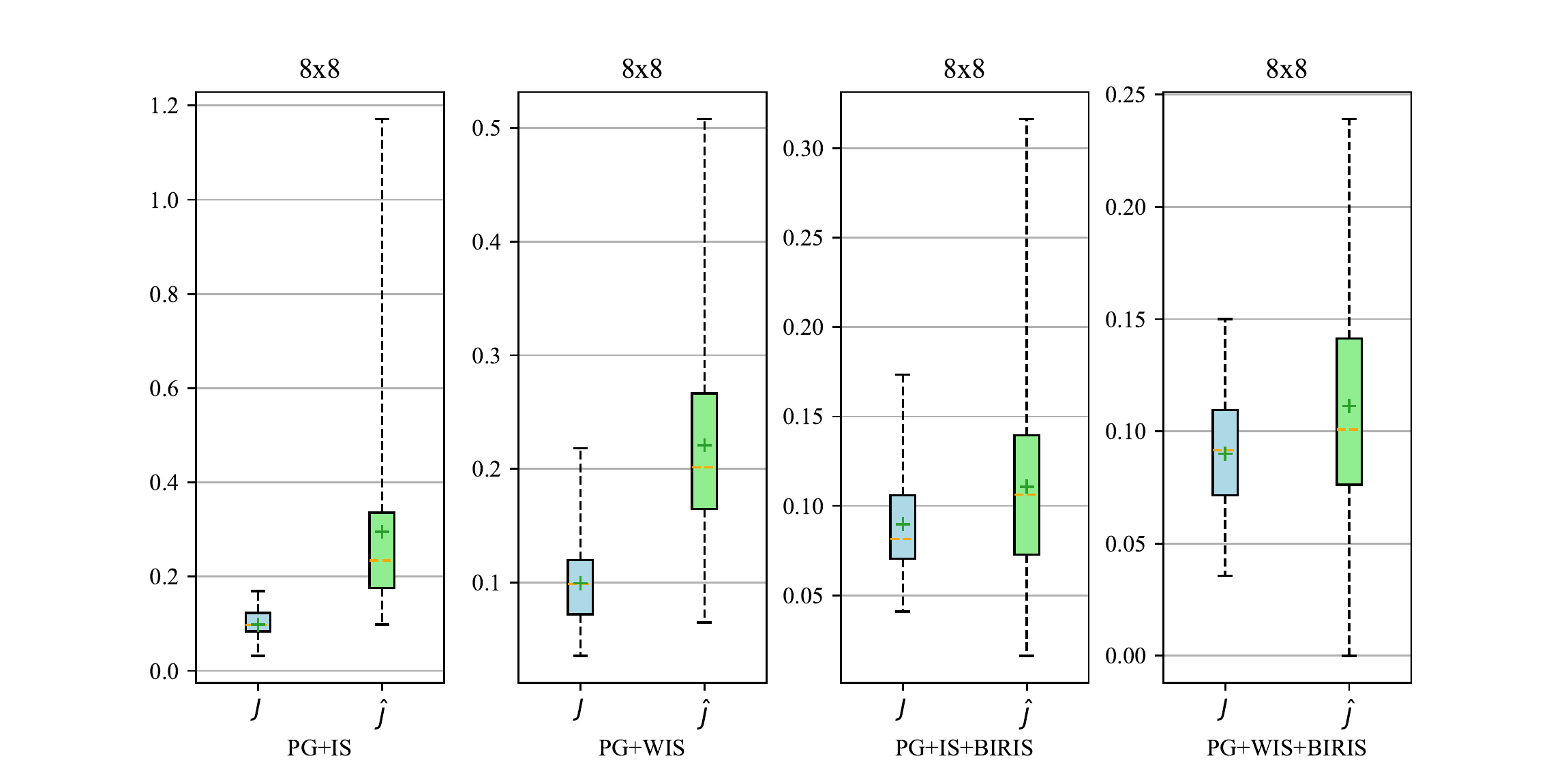}
\end{minipage}}
\end{figure*}

\begin{figure*}[htbp]
\subfigure[16$\times$16, 30]{
\begin{minipage}[t]{0.33\linewidth}
\centering
\includegraphics[height=3.5cm,width=6.0cm]{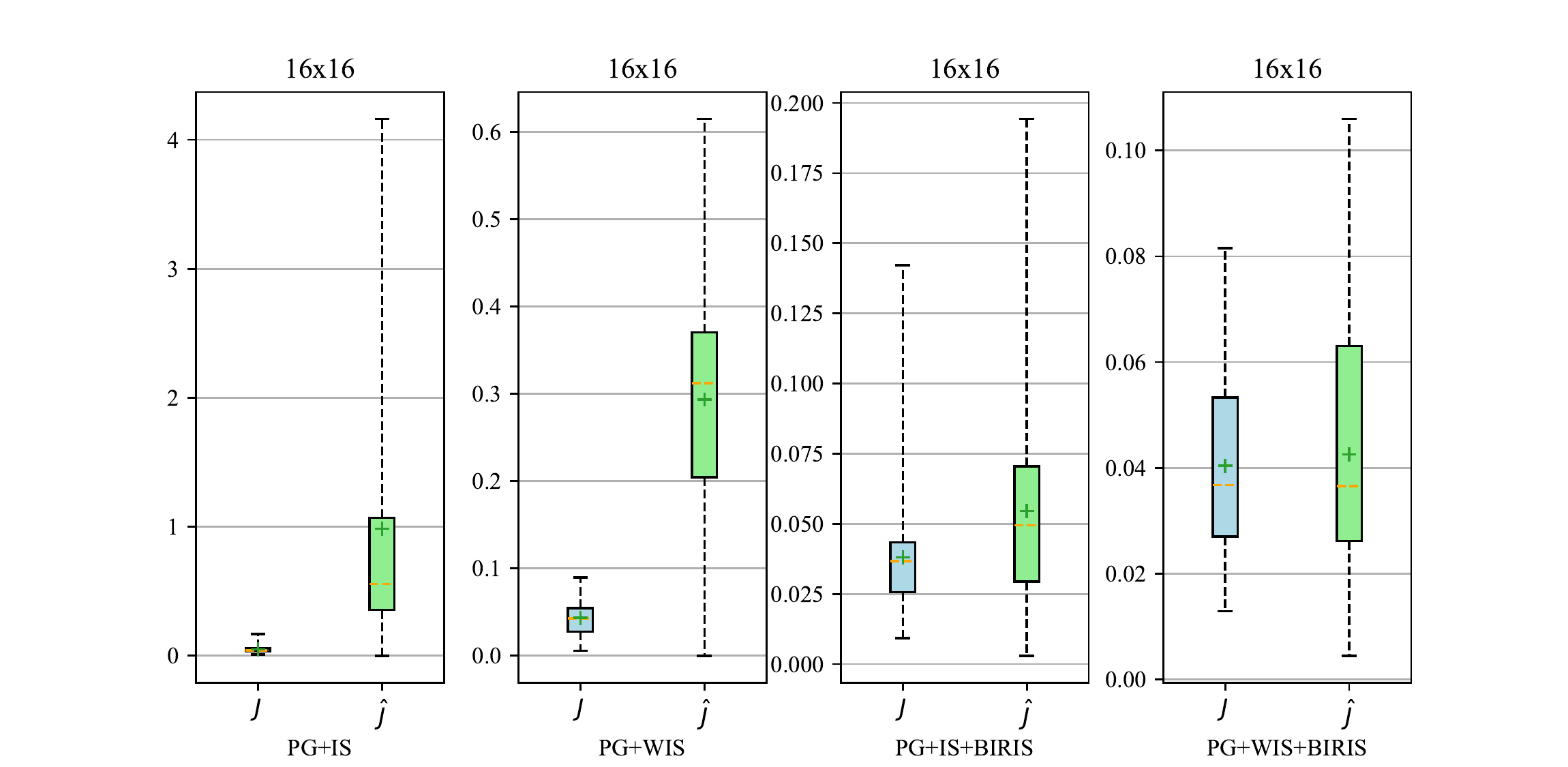}
\end{minipage}}
\subfigure[16$\times$16, 40]{
\begin{minipage}[t]{0.33\linewidth}
\centering
\includegraphics[height=3.5cm,width=6.0cm]{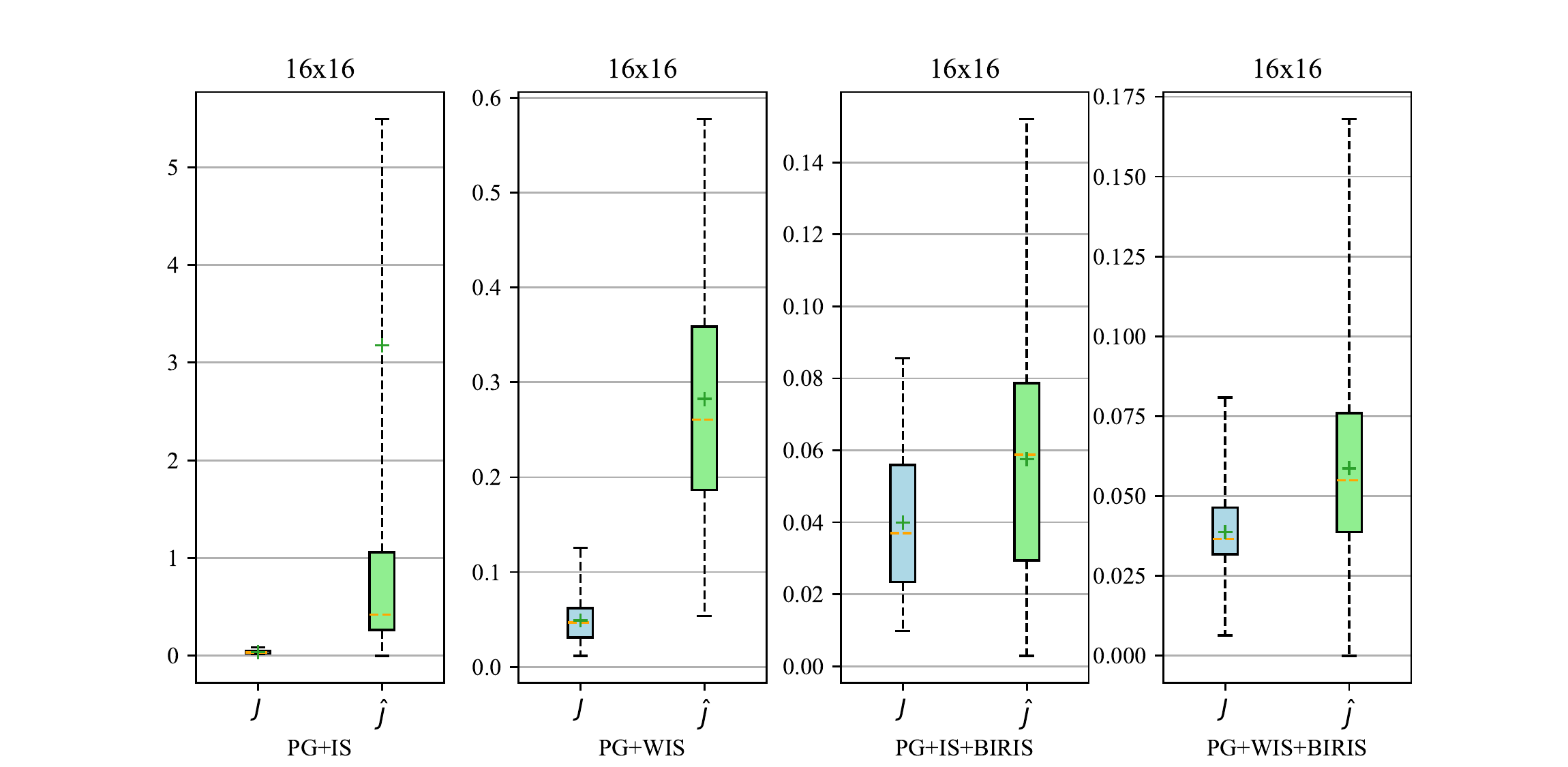}
\end{minipage}}
\subfigure[16$\times$16, 50]{
\begin{minipage}[t]{0.33\linewidth}
\centering
\includegraphics[height=3.5cm,width=6.0cm]{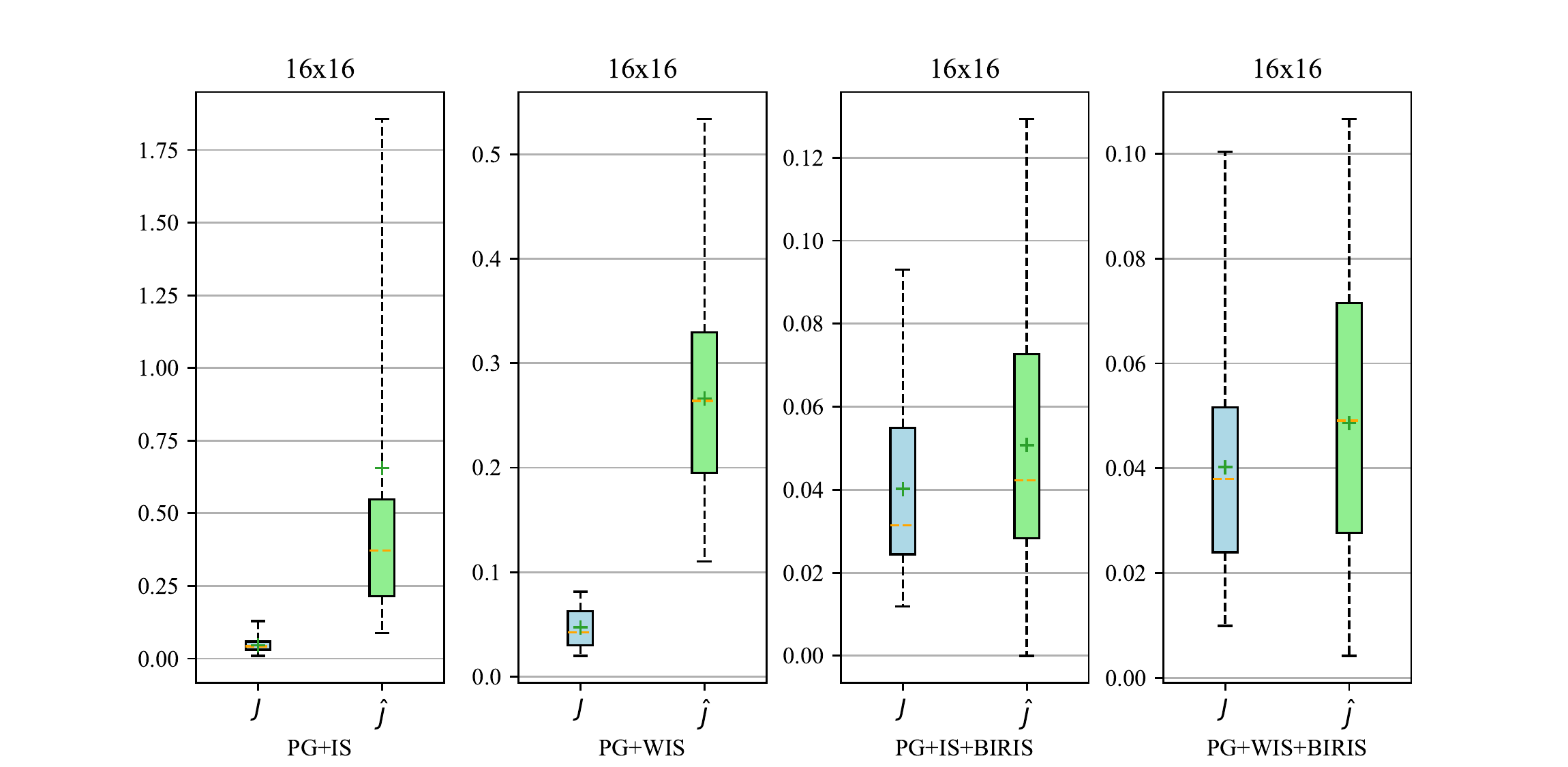}
\end{minipage}}
\end{figure*}

\newpage

\subsection{Ablation Study on MuJoCo}
\label{appendix_supp_expe}
In this part, we report supplementary experimental results about the influence of clipping and the influence of the hyperparameter $\alpha$.
\begin{figure}[h]
  \centering
  \subfigure{\includegraphics[width=0.35\columnwidth]{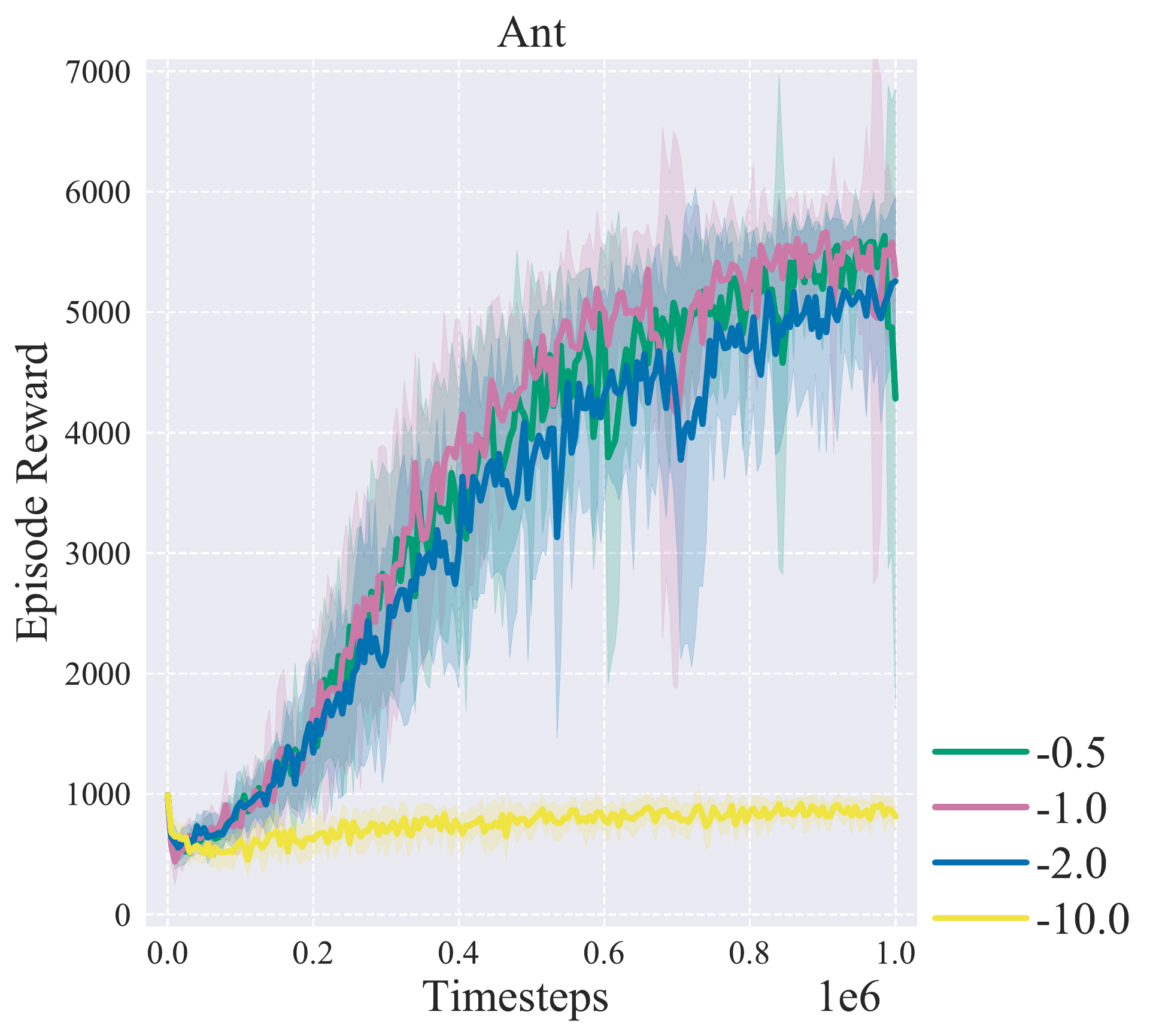}}
  \subfigure{\includegraphics[width=0.35\columnwidth]{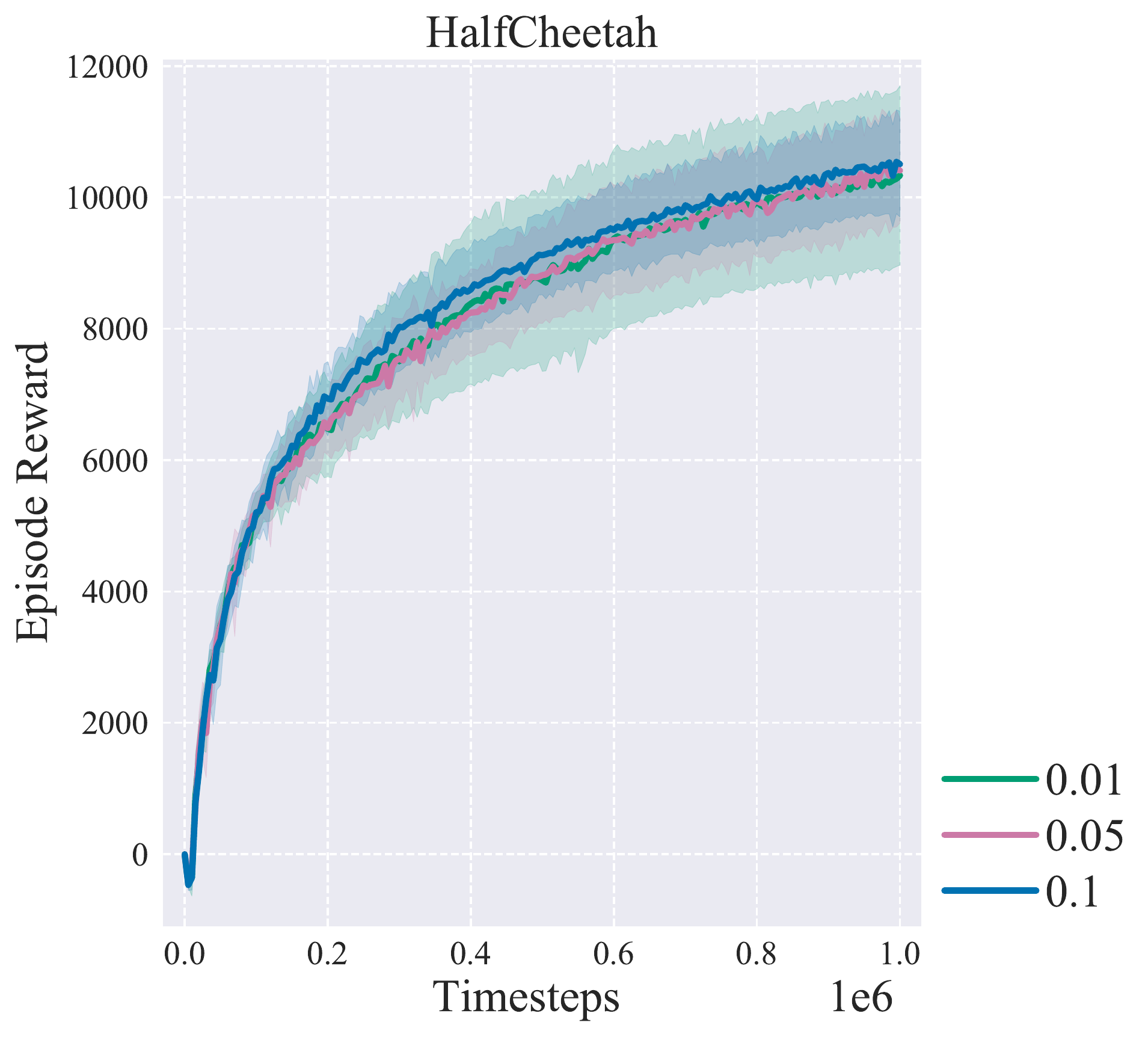}}
  \caption{Results of ablation study. The left one reports the influence of clipping. The right one reports the influence of the hyperparameter $\alpha$}
  \label{abla_fig}
\end{figure}

\textbf{Ablation study on the influence of clipping.} We clip the value of $\log(\hat{\pi}(a|s))$ in SAC+BIRIS because too low a value may cause numerical instability. To evaluate the effect of clipping, we consider SAC+BIRIS in the Ant task and set different values of $\beta$ ($-0.5, -1.0, -2.0, -10.0$). As shown in the left part of Fig.~\ref{abla_fig}, the performance of SAC+BIRIS is similar when $\beta$ is within a certain range (e.g. $\beta=-0.5, -1.0, -2.0$). However, if the value of $\beta$ is too small (e.g. $\beta=-10.0$), the performance of SAC+BIRIS is extremely poor because the weights of the state-action pair $(s,a)$ with small $\hat{\pi}(a|s)$ will be extremely large. Consequently, the choice of clipping helps our SAC+BIRIS become more stable by avoiding numerical instability, which is significant in policy training. %\hangx{more analysis about the connection with overestimation bias}

\textbf{Ablation study on the influence of the hyperparameter $\alpha$.} The choice of the hyperparameter $\alpha$ is important for BIRIS because it is a trade-off between standard RL loss and the BIRIS regular term. Since $\alpha$ has similar effects in SAC+BIRIS and TD3+BIRIS, we consider TD3+BIRIS in the HalfCheetah task to evaluate the influence of different values of $\alpha$ ($0.01, 0.05, 0.1$). As shown in the right part of Fig.~\ref{abla_fig}, the performance of TD3+BIRIS is almost similar with different values of $\alpha$, which means that BIRIS is stable when the value of $\alpha$ is within the appropriate range. %\hangx{be careful about this claim. As an extreme case, it means that the regularization does not work. Moreover, why only on one experiments, and what about SAC?}

\section{A Concrete Example for Reuse Bias}
\label{appendix_example}
We consider an environment with the fixed initial state $s_0$ and we can take an action $a\in\mathbb{R}$. After taking the action $a$, we arrive the state $s_1$, get the return $\mathcal{R}(s_0,a)=1$, and end our trajectory. Consequently, in this environment, every policy owns the same cumulative return as 1. Now we assume $\hat{\pi}$ is the behavior policy and take $m$ trajectories sampled from $\hat{\pi}$ to form $\mathcal{B} = \{(s_0,a_i,s_1,r_i=1)\}_{i=1}^m$. We denote the trained policy $\mathcal{O}(\pi_0, \mathcal{B})$ as $\pi_{\mathcal{B}}$ for simplicity and assume that our training algorithm $\mathcal{O}$ obtains the following policy
\begin{equation}
\begin{split}
    \pi_{\mathcal{B}}(a|s_0) = \left\{
    \begin{aligned}
         & \hat{\pi}(a|s_0),  &\forall a\neq a_1,a_2,...,a_m\\
         &  0, & a = a_1,a_2,...,a_m
    \end{aligned}
    \right.
\end{split}
\end{equation}
Since $m$ is finite, we would like to emphasize $\pi_{\mathcal{B}}$ is still a possible policy since $\int_{\mathbb{R}}\pi_{\mathcal{B}}(a|s_0) da = 1$ still holds. Moreover, we can naturally calculate that $J(\pi_{\mathcal{B}}) = 1$ and $\hat{J}_{\hat{\pi},\mathcal{B}}(\pi_{\mathcal{B}}) =\frac{1}{m}\sum_{i=1}^m \frac{\pi_{\mathcal{B}}(a_i|s_0)}{\hat{\pi}(a_i|s_0)}\mathcal{R}(s_0,a_i)= 0$. Thus we have $|\epsilon_{\mathrm{RE}}(\mathcal{O},\hat{\pi},\mathcal{B})| = 1$ holds for any $\mathcal{B}$. This example shows that directly increasing the size of the replay buffer cannot eliminate the Reuse Bias if our hypothesis set includes all policies.

\section{BIRIS-based Algorithms}
\label{appendix_biris}
The core of BIRIS-based methods is to calculate $\frac{\pi(a|s)}{\hat{\pi}(a|s)}$ for any state-action pair $(s,a)\in\mathcal{B}$ and further calculate $\mathcal{L}_{\text{BR}}(\pi, \mathcal{B})$

\textbf{SAC+BIRIS.} For handling continuous action space, SAC~\cite{sac} chooses stochastic policies with reparameterization. Thus we can directly calculate that
\begin{equation}
\begin{split}
    \frac{\pi(a|s)}{\hat{\pi}(a|s)} %&= \exp\left(\log\left(\frac{\pi(a|s)}{\hat{\pi}(a|s)}\right)\right)\\
    &= \exp\left(\log\left(\pi(a|s)\right) - \log\left(\hat{\pi}(a|s)\right)\right).
\end{split}
\end{equation}
However, this ratio may be unstable when $\hat{\pi}(a|s)$ is too small. Therefore, to avoid the potential numerical instability,
%caused by too low a value of $\log(\hat{\pi}(a|s))$
we clip its value with a hyperparameter $\beta$ in SAC+BIRIS as $
    \exp\left(\log\left(\pi(a|s)\right) - \mathrm{clip}\left( \log\left(\hat{\pi}(a|s), \beta, 0\right)\right)\right)$.
    %\hangx{more analysis about why clip operation can reduce bias}
% \begin{equation*}
% \begin{split}
%     \exp\left(\log\left(\pi(a|s)\right) - \mathrm{clip}\left( \log\left(\hat{\pi}(a|s), \beta, 0\right)\right)\right).
% \end{split}
% \end{equation*}

\textbf{TD3+BIRIS.} 
% There is also an array of off-policy RL algorithms that use deterministic policies, one of the most famous of which is TD3~\cite{td3}. 
For algorithms with deterministic policies like TD3~\cite{td3}, our policy is a fixed mapping from $\mathcal{S}$ to $\mathcal{A}$, i.e., $\pi:\mathcal{S}\rightarrow\mathcal{A}$. To improve the diversity of actions, we always sample action from a Gaussian distribution $\mathcal{N}(\pi(s), \Sigma)$. 
Here, $\Sigma$ is fixed and we can calculate that
% Here, $\Sigma$ is a fixed covariance matrix and we can calculate that
% Here, $\Sigma$ is a fixed covariance matrix and we always use the $D$-dimension identity matrix. Thus, in TD3+BIRIS, to control $\epsilon_2$, we can calculate that %\hangx{similarly, why it can control the bias}
\begin{equation}
\begin{split}
    \frac{\pi(a|s)}{\hat{\pi}(a|s)} 
    %=& \frac{\frac{1}{(2\pi)^{D/2}} \exp\left(-\frac{1}{2}(a - \pi(s))^\top(a - \pi(s))\right)}{\frac{1}{(2\pi)^{D/2}} \exp\left(-\frac{1}{2}(a - \hat{\pi}(s))^\top(a - \hat{\pi}(s))\right)}\\
    % = & \exp\left(-\frac{1}{2}\|a - \pi(s))\|^2+\frac{1}{2}\|a - \hat{\pi}(s)\|^2\right).
    = & \exp\left(\frac{1}{2}\|a - \hat{\pi}(s)\|^2-\frac{1}{2}\|a - \pi(s))\|^2\right).
\end{split}
\end{equation}
% Since $\|a - \hat{\pi}(s)\|^2$ is constant for $\pi$, we choose to constrain $\|a - \pi(s))\|^2$ in TD3+BIRIS to control the Reuse Bias.

%% The file named.bst is a bibliography style file for BibTeX 0.99c
% \bibliographystyle{named}
% \bibliography{ref}

\end{document}